\documentclass[11pt]{article}

\usepackage[T1]{fontenc}
\ifdefined\NEXAISSubmissionMode
  \usepackage[submission]{nexais}
\else
  \usepackage[preprint]{nexais}
\fi

\usepackage{newtxtext,newtxmath}
\usepackage[a4paper,margin=26mm]{geometry}
\usepackage[protrusion=true,expansion=false]{microtype}

\usepackage[round,authoryear]{natbib}
\setcitestyle{authoryear,round,citesep={;},aysep={,},yysep={;}}
\usepackage{booktabs}
\usepackage[flushleft]{threeparttable}
\usepackage[font=small,labelfont=bf]{caption}
\usepackage[shortlabels]{enumitem}
\usepackage{flafter}
\usepackage{float}
\usepackage{mathtools}
\usepackage{graphicx}
\usepackage[breaklinks=true]{hyperref}
\nexaishypersetup
\usepackage[capitalize,nameinlink,noabbrev]{cleveref}

\setlist{nosep,leftmargin=1.6em}
\providecommand{\off}{\operatorname{off}}
\providecommand{\Cov}{\operatorname{Cov}}
\providecommand{\var}{\operatorname{var}}
\providecommand{\rank}{\operatorname{rank}}
\providecommand{\inner}[2]{\left\langle#1,#2\right\rangle}

\providecommand{\cP}{\mathcal P}
\providecommand{\cR}{\mathcal R}

\providecommand{\bS}{\mathbf S}
\providecommand{\bT}{\mathbf T}

\providecommand{\bSigma}{\mathbf\Sigma}

\providecommand{\bx}{\mathbf x}

\providecommand{\ones}{\mathbf 1}
\providecommand{\Rex}{\mathcal E}
\providecommand{\op}{\mathrm{op}}
\providecommand{\F}{\mathrm F}
\providecommand{\Had}{\circ}
\newcommand{\Asig}{\mathsf{A}_{\bSigma}}
\newcommand{\Bsig}{\mathsf{B}_{\bSigma}}
\newcommand{\Cnu}{\mathsf{C}_{\nu}}
\newcommand{\Vnu}{\mathsf{V}_{\nu}}
\newcommand{\Gnu}{\mathsf{G}_{\nu}}

\newcommand{\paperabstracttext}{%

{Principal component analysis (PCA) can rotate away from its population
target when a covariance matrix is estimated from limited data.  We introduce
\emph{diagonal attenuation}, which preserves sample cross-covariances while
reducing coordinatewise sample variances.  The method is revealed exactly by
averaging a linear full-output reconstruction loss over random input masks;
studying the correction directly extends it beyond the range attainable by
masking.  We isolate the part of the random coupling between retained and
omitted population directions that is contributed by sample-variance errors,
and show how attenuation can reduce the resulting rotation.  Under balanced
marginal variances, we derive an explicit expected-risk theorem, uniform over
the attenuation path for all sufficiently large finite samples, and obtain the
asymptotically risk-minimizing strength.  For general covariances, we
characterize when attenuation leaves the population PCA subspace unchanged
and give a risk theorem that also accounts for changing eigengaps and the
population cost when the target moves.  Simulations track this tradeoff from
exact preservation back to PCA. {Across local image patches, speech
spectra, and smartphone acceleration, both mask-derived and direct attenuation improve PCA under two fitting-sample budgets, and one of them has the largest mean gain among seven methods in every data--budget cell.  The full path selects strengths beyond the mask-derived boundary on {$63\%$--$95\%$} of the
subsamples.}}}

\begin{document}

\title{Diagonal Attenuation: A Finite-Sample Correction for PCA}
\author{Qiang Sun\thanks{E-mail: qsunstats@gmail.com.}}
\affiliation{University of Toronto and MBZUAI}

\paperabstract{\paperabstracttext}

\keywords{principal component analysis, diagonal attenuation, covariance regularization, eigenspace estimation, finite-sample risk}
\date{September 4, 2026}
\pdfsubject{Diagonal attenuation for correcting finite-sample rotation in principal component analysis}
\pdfkeywords{diagonal attenuation, PCA, input masking, finite-sample rotation, covariance shrinkage}

\maketitle

\tableofcontents

\section{Introduction}\label{sec:introduction}

{
  Principal component analysis (PCA) estimates a low-dimensional
subspace by diagonalizing a sample covariance matrix.  With limited data,
sampling fluctuations couple population directions on opposite sides of the
rank cutoff and rotate the fitted subspace away from its population target;
the resulting error is controlled by both these random couplings and the
spectral gaps that resist them
\citep{anderson1963asymptotic,koltchinskii2017concentration,
reiss2020nonasymptotic,elhanchi2025geometric}. {A less visible source of
rotation is error in the coordinatewise sample variances.  These errors form a
diagonal matrix in the observed coordinates, but the same matrix generally has
off-diagonal entries in the population principal-component basis.  Those
entries couple a direction retained by PCA with one it omits and can therefore
rotate the fitted subspace.  This identifies a specific correction: can
weakening the sample diagonal reduce PCA's finite-sample reconstruction risk
while retaining its informative sample cross-covariances?}
}

{Input masking provides an unexpected route to precisely this
correction.  Let $\bx_1,\ldots,\bx_n\in\RR^p$ be centered observations,
$\cP_d$ the set of rank-$d$ orthogonal projectors, and $W$ a random diagonal
mask.  Consider reconstructing each complete observation from its masked
version:}
\begin{equation}\label{eq:mae-objective}
  \widehat P^{\rm mask}
  \in\argmin_{P\in\cP_d}
  \frac1n\sum_{i=1}^n
  \E_W\norm{\bx_i-PW\bx_i}_2^2.
\end{equation}
{Write $\bS_n=n^{-1}\sum_i\bx_i\bx_i^\trans$ for the sample
covariance and let $\diag(\bS_n)$ retain its diagonal.  Averaging exactly over
independent Bernoulli masks, as in marginalized denoising
\citep{chen2012marginalized}, turns \eqref{eq:mae-objective} into PCA applied
to}
\begin{equation}\label{eq:intro-attenuation}
  \bT_{n,\alpha}=\bS_n-\alpha\diag(\bS_n),
  \qquad \alpha=\frac{m}{1+m},
\end{equation}
{where $m$ is the masking rate.  Thus masking reveals a covariance
correction that retains every sample cross-covariance and multiplies every
sample variance by $1-\alpha$.  We call the resulting operation
\emph{diagonal attenuation}.  \Cref{fig:mechanism} summarizes its derivation
and the estimator family studied here.}

\begin{figure}[t!]
  \centering
  \includegraphics[width=0.98\linewidth]{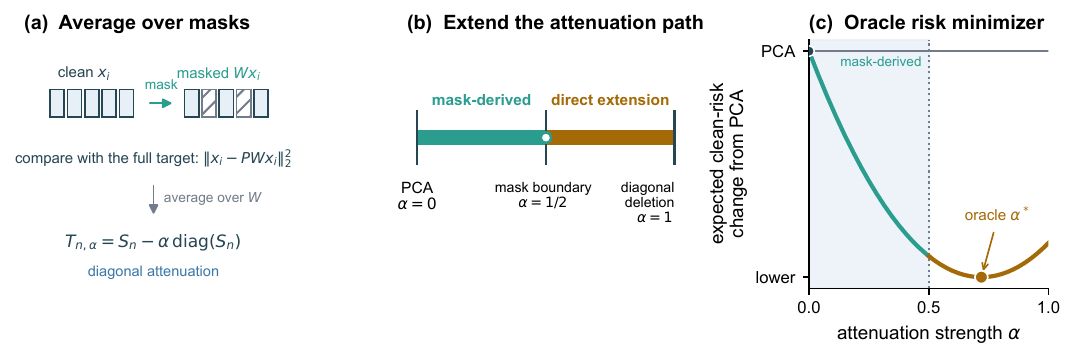}
  \caption{{Method overview.  \textbf{Left:} averaging the
  full-output reconstruction loss {in \eqref{eq:mae-objective}} over random input masks yields
  \eqref{eq:intro-attenuation}.  \textbf{Center:} input masking produces the
  first part of the attenuation path, while direct attenuation extends the
same correction from PCA to diagonal deletion.  \textbf{Right:} the oracle
strength balances the reduction in sampling-induced rotation against any
population cost caused by moving the target, and may lie beyond the
mask-derived range.}}
  \label{fig:mechanism}
\end{figure}

{Input masking reaches only $0\leq\alpha<1/2$.  We therefore study the
full path $0\leq\alpha\leq1$ directly: ordinary PCA is the endpoint
$\alpha=0$, while $\alpha=1$ gives diagonal-deletion, or hollowed, PCA
\citep{cai2021subspace,abbe2022lp}.  The strength can be selected by
reconstruction validation with PCA included as a candidate.}  {The statistical tradeoff now becomes explicit.  Attenuation reduces the part of the random retained--omitted coupling carried by coordinatewise variance fluctuations, but it can also alter the population eigenspace or its spectral separation.  We call the random cross-subspace component \emph{sampling leakage}.  Balanced marginal variances provide a transparent case: attenuation leaves the population target and all eigengaps unchanged, so its entire effect comes from changing this sampling leakage.  General covariances require two additional pieces---the attenuated eigengaps and the population cost if the target moves.  Our theory combines these quantities in a single comparison of clean reconstruction risk.}

{The paper makes three contributions.  First, we derive diagonal
attenuation exactly from masked-input reconstruction, extend the mask-derived range to a directly tunable PCA estimator, and characterize when the population PCA subspace is preserved or moved.  Second, we isolate the sample-diagonal contribution to the standard cross-subspace sampling
fluctuation and show how attenuation changes expected reconstruction risk.
Under balanced marginal variances, this yields an explicit theorem and the
asymptotically risk-minimizing strength; for general covariances, our risk
theorems account for changing eigengaps and population bias.  Third,
simulations verify the predicted risk tradeoff across preserved and moving
targets, while fitting-sample-budget studies on local image patches, speech
spectra, and smartphone acceleration compare seven methods under common
budgets.  Both attenuation paths improve PCA in every cell, and one has the
largest mean gain in all six cells.}

\subsection{Related work}\label{sec:related}

\paragraph{PCA and covariance regularization.}
{Classical and modern PCA theory describes eigenspace fluctuations,
perturbation bounds, and reconstruction risk
\citep{anderson1963asymptotic,koltchinskii2017concentration,
reiss2020nonasymptotic,elhanchi2025geometric}.  Linear shrinkage toward a
scalar identity improves covariance conditioning but preserves the sample
eigenvectors \citep{ledoit2004wellconditioned}; shrinkage toward a diagonal
target also modifies sample cross-covariances
\citep{schaefer2005shrinkage}.  Diagonal attenuation takes a different
direction: it retains the observed cross-covariances and weakens only the
sample diagonal.  Diagonal deletion is established for noisy or incomplete
low-rank matrices, and hollowed PCA can improve spectral recovery under
heteroskedastic noise \citep{cai2021subspace,abbe2022lp}.  HeteroPCA further
alternates diagonal deletion and imputation in a latent low-rank measurement
model \citep{zhang2022heteroskedastic}.  Our observations are instead complete
draws from the covariance whose PCA subspace is the target.  We study the
continuous path from no deletion to complete deletion and ask when a partial
correction lowers the clean reconstruction risk of ordinary PCA.}

\paragraph{Masking and linear reconstruction objectives.}
{Denoising autoencoders learn from corrupted inputs
\citep{vincent2008extracting,vincent2010stacked}, and marginalized denoising
integrates over the corruption to obtain a deterministic moment objective
\citep{chen2012marginalized,chen2014marginalized}.  An unmasked
undercomplete linear autoencoder recovers the PCA subspace under squared loss
\citep{baldi1989neural}.  Our orthogonal-projector formulation keeps both the
target and evaluation loss equal to those of PCA, allowing a signed clean-risk
comparison.  Canonical masked autoencoders instead score only masked targets
\citep{he2022masked}; \Cref{rem:masked-target} states this objective boundary,
and the supplement gives the corresponding linear characterization.}

\paragraph{Closest spectral analyses of masking.}
{\citet{ji2023power} connect complementary random masks in a contrastive
objective to diagonal deletion and feature recovery under a spiked additive
noise model.  Our full-output projector objective instead produces a
continuous partial-attenuation path and is evaluated against the ordinary PCA
subspace of the clean covariance.  \citet{bisulco2025linearity} study
full-output block masking with an
untied linear encoder and decoder.  For a fixed data matrix, they derive the
global minimizers through a generalized eigenvalue problem and characterize
all critical points.  We
instead constrain the reconstruction map to be an orthogonal projector and
compare its expected clean PCA risk under random sampling.
\citet{zurich2026masked} analyze coordinatewise masked self-supervised ridge
regression in proportional asymptotics; their estimator is an asymmetric
full-rank predictor trained to infer hidden coordinates.  Our estimator is a
fixed-rank projector, and our risk theory is fixed dimensional and centered on
the finite-sample PCA target.  More broadly, \citet{lin2024augmentation} show
that linear data augmentation induces data-dependent spectral
regularization.  Here we identify the particular cross-subspace fluctuations,
attenuated gaps, and population movement that determine the clean-risk
comparison with PCA.  Further objective-level distinctions are collected in
\cref{sec:supp-related}.}

\subsection{Notation}\label{sec:notation}

{For a symmetric matrix $A$, $\lambda_j(A)$ is its $j$th largest
eigenvalue, $\diag(A)$ retains its diagonal, and
$\off(A)=A-\diag(A)$.  The operator and Frobenius norms are
$\norm{A}_{\op}$ and $\norm{A}_{\F}$, and $\Had$ denotes the elementwise
product.  The set of rank-$d$ orthogonal projectors is $\cP_d$;
$P_A$ denotes a leading rank-$d$ projector of $A$, with a fixed rule for ties.
The mask rate is $m$, and $\alpha$ is the diagonal-attenuation strength.
Additional quantities are defined where they first appear.}

\section{Diagonal attenuation for PCA}\label{sec:model}

{This section defines the clean PCA target, derives diagonal attenuation from input masking, and extends it into a directly tunable estimator family. We then describe the population geometry needed to compare this family with PCA.}

\subsection{PCA target and clean risk}\label{sec:pca-target}

{Let $\bx_1,\ldots,\bx_n\in\RR^p$ be centered observations,
and define the sample covariance}
\begin{equation}\label{eq:sample-covariance}
  \bS_n=\frac1n\sum_{i=1}^n\bx_i\bx_i^\trans.
\end{equation}
Fix $1\leq d<p$.  A rank-$d$ PCA eigenspace is represented by an orthogonal projector
\begin{equation}\label{eq:symmetric-map}
  P=UU^\trans,
  \qquad U\in\RR^{p\times d},
  \qquad U^\trans U=I_d,
  \qquad P\in\cP_d,
\end{equation}
{where $\cP_d$ is the set of rank-$d$ orthogonal projectors.}
{For a fresh, unmasked observation $\bx$ with population covariance
$\bSigma=\E(\bx\bx^\trans)$, the reconstruction risk is}
\begin{equation}\label{eq:clean-risk}
  \cR(P)=\E\norm{\bx-P\bx}_2^2
  =\tr(\bSigma)-\tr(P\bSigma).
\end{equation}
Let $P_*$ be a leading rank-$d$ projector of $\bSigma$.  Its excess
reconstruction risk is
\begin{equation}\label{eq:excess-risk}
  \Rex(P)=\cR(P)-\cR(P_*)
  =\tr\{\bSigma(P_*-P)\}.
\end{equation}

\subsection{From input masking to diagonal attenuation}\label{sec:mask-identity}

{In \eqref{eq:mae-objective}, let
$W=\diag(w_1,\ldots,w_p)$, where the $w_j$ are independent
$\operatorname{Bernoulli}(q)$ variables.  Each coordinate is retained with
probability $q$, and $m=1-q$ is the mask rate.  Averaging over this mask
distribution gives the attenuation parameter and matrix}
\begin{equation}\label{eq:Talpha}
  \alpha=\frac{1-q}{2-q}=\frac{m}{1+m},
  \qquad
  \bT_{n,\alpha}=\bS_n-\alpha\cdot\diag(\bS_n).
\end{equation}
{Mask rates $0\leq m<1$ correspond exactly to
$0\leq\alpha<1/2$.}

\begin{proposition}[Exact mask marginalization]\label{prop:reduction}
    {Every optimizer of \eqref{eq:mae-objective} is a leading rank-$d$ spectral
    projector of $\bT_{n,\alpha}$.  If
    $\lambda_d(\bT_{n,\alpha})>\lambda_{d+1}(\bT_{n,\alpha})$, then}
\begin{equation}\label{eq:masked-spectral-estimator}
  \widehat P^{\rm mask}=P_{\bT_{n,\alpha}}.
\end{equation}
\end{proposition}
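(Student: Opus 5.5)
The plan is to expand the averaged objective in \eqref{eq:mae-objective} exactly and show that, up to an additive constant and a positive factor, it equals $-\tr(P\bT_{n,\alpha})$. Then Ky Fan's maximum principle finishes the argument. Fix $P\in\cP_d$ and one observation $\bx$. Because $P$ is an orthogonal projector, $P^\trans P=P$, so
\[
  \norm{\bx-PW\bx}_2^2=\norm{\bx}_2^2-2\,\bx^\trans PW\bx+\bx^\trans WPW\bx .
\]
I will average each term over $W$ using $\E W=qI$ and, by independence, $\E(w_jw_k)=q^2$ for $j\neq k$ and $\E(w_j^2)=q$. Together these give $\E(WAW)=q^2A+(q-q^2)\diag(A)$ for any fixed $A$.

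With $A=\bx\bx^\trans$, the cross term becomes $-2q\,\tr(P\bx\bx^\trans)$. The quadratic term becomes
\[
  \tr\bigl(P\,\E[W\bx\bx^\trans W]\bigr)=q^2\tr(P\bx\bx^\trans)+q(1-q)\tr\bigl(P\diag(\bx\bx^\trans)\bigr).
\]
Averaging over $i$ and using linearity of $\diag$, the objective equals
\[
  \tr(\bS_n)-(2q-q^2)\tr(P\bS_n)+q(1-q)\tr\bigl(P\diag(\bS_n)\bigr).
\]
Factor out $q(2-q)>0$, which is valid since $q>0$ because $m<1$. The objective then reads
\[
  \tr(\bS_n)-q(2-q)\,\tr\bigl(P[\bS_n-\alpha\diag(\bS_n)]\bigr),
  \qquad
  \alpha=\frac{1-q}{2-q}=\frac{m}{1+m},
\]
which matches \eqref{eq:Talpha}.

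Minimizing over $\cP_d$ is therefore equivalent to maximizing $\tr(P\bT_{n,\alpha})$ over $\cP_d$, where $\bT_{n,\alpha}$ is symmetric. By Ky Fan's principle the maximum equals $\sum_{j\le d}\lambda_j(\bT_{n,\alpha})$. I will show that the maximizers are exactly the projectors onto $d$-dimensional subspaces spanned by eigenvectors for the top $d$ eigenvalues, which are the leading rank-$d$ spectral projectors. To do this, write $\tr(P\bT)=\sum_k\lambda_k\,u_k^\trans Pu_k$ with weights $c_k=u_k^\trans Pu_k\in[0,1]$ and $\sum_k c_k=d$. Equality with the top-$d$ sum forces $c_k=1$ whenever $\lambda_k>\lambda_d$ and $c_k=0$ whenever $\lambda_k<\lambda_d$. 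Since $P$ is a projector, $c_k=1$ means $u_k\in\operatorname{range}(P)$ and $c_k=0$ means $u_k\perp\operatorname{range}(P)$. Hence $\operatorname{range}(P)$ contains every eigenvector with eigenvalue above $\lambda_d$ and lies within the span of the eigenspaces with eigenvalue at least $\lambda_d$, so $P$ is a leading spectral projector.

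If $\lambda_d(\bT_{n,\alpha})>\lambda_{d+1}(\bT_{n,\alpha})$, this leading projector is unique, which gives \eqref{eq:masked-spectral-estimator}.

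I expect the main care point to be the mask second-moment bookkeeping, namely the diagonal correction $q(1-q)$, and confirming that the rescaling constant $q(2-q)$ is positive. The characterization of all optimizers under ties in the Ky Fan step is the only other non-routine point.
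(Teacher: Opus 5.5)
Your argument is correct and follows the paper's proof. Both expand the squared loss, use the Bernoulli second moments to produce the $q(1-q)$ diagonal correction, remove the positive factor $q(2-q)$, and finish with Ky Fan's principle. Your equality-case argument with the weights $c_k=u_k^\top P u_k$ is a worthwhile addition: it actually justifies that \emph{every} optimizer is a leading spectral projector when eigenvalues tie at the cutoff, a point the paper leaves to the citation.
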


{\Cref{prop:reduction} shows that masking reveals a
deterministic covariance correction: it retains sample cross-covariances and
multiplies sample variances by $1-\alpha$.  Because masking reaches only
$0\leq\alpha<1/2$, we extend the same path directly and define}
\begin{equation}\label{eq:direct-family}
  \widehat P_\alpha=P_{\bT_{n,\alpha}},
  \qquad 0\leq\alpha\leq1,
\end{equation}
{where $P_A$ denotes a leading rank-$d$ projector of a
symmetric matrix $A$.  Ordinary PCA corresponds to $\alpha=0$, while
$\alpha=1$ is diagonal-deletion PCA based on $\off(\bS_n)$.  {Equivalently,
if $\bar s_n=\tr(\bS_n)/p$, then
$P_{\off(\bS_n)}=P_{\off(\bS_n)+\bar s_n I_p}$: for PCA, complete attenuation is equivalent to replacing all coordinate variances by
their common average while retaining all sample cross-covariances.}

The full interval
can be searched by held-out reconstruction with PCA included at the endpoint $\alpha=0$.}
{For a finite candidate grid, an independent-validation
implementation gives a formal fallback guarantee: without post-selection
refitting, the selected projector has clean risk no greater than the PCA
candidate plus an error that vanishes with the validation sample size
(\cref{prop:validation-fallback}).}

\begin{remark}[Relation to masked and linear autoencoders]
\label{rem:masked-target}
{Our full-output orthogonal-projector objective has the PCA subspace as
its unmasked target.  Canonical masked autoencoders score only masked targets,
and unconstrained linear autoencoders optimize a broader class of maps.  These
linear objectives are compared briefly in the supplement
\citep{baldi1989neural,he2022masked}.}
\end{remark}

\subsection{Population target under attenuation}\label{sec:geometry}

Write
\begin{equation}\label{eq:Tpopulation}
  D=\diag(\bSigma),
  \qquad
  \bT_\alpha=\bSigma-\alpha D,
  \qquad
  P_*^\perp=I_p-P_*.
\end{equation}
{Because $D$ may contain directional information, attenuation can move
the population PCA target.  We measure the resulting population reconstruction cost by}
\begin{equation}\label{eq:population-attenuated-projector}
  \mathsf{B}_{\rm pop}(\alpha)
  =\Rex(P_{\bT_\alpha})
  =\tr\{\bSigma(P_*-P_{\bT_\alpha})\}.
\end{equation}
{The nonnegative quantity $\mathsf{B}_{\rm pop}(\alpha)$ is the
reconstruction cost that remains even if $\bT_\alpha$ is known without sampling
error.  It is zero exactly when $P_{\bT_\alpha}$ is also optimal for the
population PCA risk.  Under the positive cutoff eigengap in
\cref{ass:target-gap}, this is equivalent to $P_{\bT_\alpha}=P_*$.  Because
$P_*^\perp\bSigma P_*=0$, the population cross-block is}
\begin{equation}\label{eq:population-cross-block}
 P_*^\perp\bT_\alpha P_*
 =-\alpha P_*^\perp D P_*.
\end{equation}
{This cross-block records whether attenuation mixes the target subspace
with its orthogonal complement.  It vanishes exactly when the range of $P_*$ is
invariant under $\bT_\alpha$.}

{For $\alpha\in[0,1]$, define the block gap as the difference
between the smallest population Rayleigh quotient retained by $P_*$ and the
largest one discarded by $P_*$:}
\begin{equation}\label{eq:preserved-gap}
  \delta_\alpha=
  \min_{{\norm{v}_2=1,\, P_*v=v}}
  v^\trans\bT_\alpha v
  -
  \max_{{\norm{v}_2=1,\, P_*v=0}}
  v^\trans\bT_\alpha v.
\end{equation}
{A positive $\delta_\alpha$ means that the attenuated population variance
retained by $P_*$ remains strictly above that discarded by $P_*$.}

\begin{proposition}[Zero target-shift characterization]
\label{prop:target-preservation}
Fix $\alpha>0$.  The projector $P_*$ is the unique leading rank-$d$
projector of $\bT_\alpha$ if and only if
\begin{equation}\label{eq:general-target-preservation}
  P_*^\perp D P_*=0,
  \qquad
  \delta_\alpha>0.
\end{equation}
Consequently, $\mathsf{B}_{\rm pop}(\alpha)=0$ under these conditions.
\end{proposition}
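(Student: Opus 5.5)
The plan is to prove both directions using the block decomposition of $\bT_\alpha$ relative to the splitting $\RR^p=\operatorname{range}(P_*)\oplus\operatorname{range}(P_*^\perp)$. By \eqref{eq:population-cross-block}, the off-diagonal block of $\bT_\alpha$ is $-\alpha P_*^\perp D P_*$. Since $\alpha>0$, this block vanishes if and only if $P_*^\perp D P_*=0$. Equivalently, $\operatorname{range}(P_*)$ is invariant under the symmetric matrix $\bT_\alpha$, i.e.\ $P_*$ commutes with $\bT_\alpha$.

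\emph{Sufficiency.} Assume $P_*^\perp D P_*=0$ and $\delta_\alpha>0$. Then $\bT_\alpha=P_*\bT_\alpha P_*+P_*^\perp\bT_\alpha P_*^\perp$ is block diagonal. Its spectrum is the union of the spectra of the two compressed blocks. The $d$ eigenvalues on $\operatorname{range}(P_*)$ are all at least the minimum Rayleigh quotient there, and the $p-d$ eigenvalues on the complement are all at most the maximum there. Hence $\delta_\alpha>0$ gives $\lambda_d(\bT_\alpha)\ge\lambda_d(\bT_\alpha)-\lambda_{d+1}(\bT_\alpha)\ge\delta_\alpha>0$ separation: the top $d$ eigenvalues are exactly those of the $P_*$ block, with a strict gap $\lambda_d>\lambda_{d+1}$. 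A strict gap makes the leading rank-$d$ spectral projector unique, and it equals the spectral projector onto the eigenvectors in $\operatorname{range}(P_*)$, namely $P_*$. Uniqueness can also be checked through Ky Fan: $\tr(P\bT_\alpha)\le\sum_{j\le d}\lambda_j$, with equality only for the projector onto the top eigenspace when the gap is strict.

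\emph{Necessity.} Assume $P_*$ is the unique leading rank-$d$ projector of $\bT_\alpha$. Any leading spectral projector is a spectral projector of $\bT_\alpha$, so it commutes with $\bT_\alpha$. Hence $P_*^\perp\bT_\alpha P_*=0$, and by \eqref{eq:population-cross-block} with $\alpha>0$ this gives $P_*^\perp DP_*=0$. Now $\bT_\alpha$ is block diagonal with respect to $P_*$, and $P_*$ is spanned by eigenvectors for $\lambda_1,\dots,\lambda_d$. Then $\delta_\alpha=\lambda_d(\bT_\alpha)-\lambda_{d+1}(\bT_\alpha)$, the smallest eigenvalue of the top block minus the largest of the bottom block.

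The only delicate point is excluding a tie $\lambda_d=\lambda_{d+1}$. If the tie holds, take unit eigenvectors $u\in\operatorname{range}(P_*)$ with eigenvalue $\lambda_d$ and $w\perp\operatorname{range}(P_*)$ with eigenvalue $\lambda_{d+1}=\lambda_d$. Replacing $u$ by any rotation in $\operatorname{span}\{u,w\}$ gives a different rank-$d$ projector attaining the same Ky Fan maximum $\sum_{j\le d}\lambda_j$. This contradicts uniqueness, so $\delta_\alpha>0$. The key step here is reading ``unique leading projector'' as uniqueness among Ky Fan maximizers, which is exactly what a tie violates; the paper's fixed tie-breaking rule must not be used to dodge this.

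Finally, if $P_{\bT_\alpha}=P_*$ then $\mathsf{B}_{\rm pop}(\alpha)=\tr\{\bSigma(P_*-P_*)\}=0$ by \eqref{eq:population-attenuated-projector}.
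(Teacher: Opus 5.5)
Your proof is correct and follows the paper's own argument. Both use the block decomposition relative to $P_*\oplus P_*^\perp$ and the cross-block identity $P_*^\perp\bT_\alpha P_*=-\alpha P_*^\perp DP_*$ with $\alpha>0$, and your eigenvector-rotation argument in the tie case simply spells out the step the paper compresses into ``uniqueness gives $\delta_\alpha>0$''. One cosmetic slip: in the sufficiency chain, the first link $\lambda_d(\bT_\alpha)\ge\lambda_d(\bT_\alpha)-\lambda_{d+1}(\bT_\alpha)$ is false in general, since $\bT_1=\off(\bSigma)$ has trace zero and so $\lambda_{d+1}(\bT_1)$ can be negative; you only need $\lambda_d(\bT_\alpha)-\lambda_{d+1}(\bT_\alpha)=\delta_\alpha>0$, so drop that link.
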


{The first condition prevents attenuation from mixing the target and
discarded subspaces.  The second prevents their spectral ordering from
reversing.  Together they are necessary and sufficient for $P_*$ to remain the
unique leading rank-$d$ population projector.  {In \cref{sec:theory}, we
begin with balanced marginals to make the mechanism explicit and then allow
general covariances with changing eigengaps and population target movement.
We next state the assumptions shared by these analyses.}}

\begin{assumption}[Gaussian sampling]\label{ass:gaussian-sampling}
{The observations satisfy
$\bx_1,\ldots,\bx_n\stackrel{\mathrm{iid}}{\sim}
\mathcal N(0,\bSigma)$ with $\bSigma\succ0$.}
\end{assumption}

\begin{assumption}[Fixed-dimensional identifiable target]\label{ass:target-gap}
{The dimension $p$ and target rank $d$ are fixed as $n$ grows,
and $\delta:=\lambda_d(\bSigma)-\lambda_{d+1}(\bSigma)>0$.}
\end{assumption}

\Cref{ass:gaussian-sampling,ass:target-gap} are standard in
fixed-dimensional PCA analyses
\citep{anderson1963asymptotic,koltchinskii2017concentration,
reiss2020nonasymptotic,yu2015useful}.
{Gaussianity gives transparent risk formulas; the supplement extends them
to distributions with finite moments.  The analysis treats the population mean
as known and zero, while in practice the observations are typically centered
empirically.  {The centered sample covariance has the same eigenspace
distribution as a zero-mean sample covariance based on $n-1$ observations, so
the same expected-risk results in \cref{sec:theory} apply with sample size
$n-1$.}}  Additional structure is introduced where the risk analysis requires.

\section{Why attenuation can improve finite-sample PCA}\label{sec:theory}

{For any symmetric matrix $M$, the cross-block $P_*^\perp MP_*$ records
how $M$ connects the population PCA subspace to its orthogonal complement.  We
call this block the \emph{cross-subspace leakage}; its individual entries are
couplings between retained and omitted directions.  For the attenuated sample
matrix $\bT_{n,\alpha}$, leakage separates into a deterministic population
component and a random sampling component.  Write
$E_n=\bS_n-\bSigma$.  Since $P_*^\perp\bSigma P_*=0$, diagonal attenuation gives
the exact decomposition}
\begin{equation}\label{eq:leakage-decomposition}
 \begin{aligned}
 P_*^\perp\bT_{n,\alpha}P_*
 & = \underbrace{-\alpha P_*^\perp DP_*}_{\text{population leakage}}
   +\underbrace{P_*^\perp\{E_n-\alpha\diag(E_n)\}P_*}
    _{\text{sampling leakage}}.
 \end{aligned}
\end{equation}
{The first term determines whether attenuation moves the population
target.  The second is mean-zero \emph{sampling leakage}; its
retained--omitted couplings drive sampling-induced rotation of the fitted
subspace.  We first
isolate this sampling effect under balanced marginal variances, where the
population term vanishes and all population eigengaps remain fixed.  {We then
allow unequal marginals, which can change the eigengaps and, when the target
moves, introduce population leakage.}}

\subsection{Balanced marginals: reducing sampling-induced rotation}
\label{sec:balanced-risk}

{Balanced marginal variances isolate the finite-sample effect of
attenuation.}
\begin{assumption}[Balanced marginal variances]\label{ass:balanced-marginals}
{For some $\tau>0$, $D=\tau I_p$.}
\end{assumption}

{The assumption equalizes the variances of the observed coordinates but
does not make the covariance isotropic: correlations, nontrivial principal
directions, and unequal eigenvalues remain allowed.  At the population level,}
\begin{equation}\label{eq:balanced-population-shift}
 \bT_\alpha=\bSigma-\alpha\tau I_p.
\end{equation}
{Therefore attenuation leaves the population principal subspace and every
eigengap unchanged.  {Any change in the estimated subspace must therefore come
from the sample fluctuation matrix $\bT_{n,\alpha}-\bT_\alpha$, specifically from
its couplings across the rank-$d$ cutoff.}  To see this, choose an orthonormal eigenbasis
$u_1,\ldots,u_p$ of $\bSigma$, ordered so that
$\bSigma u_r=\lambda_r u_r$, and fix $j\leq d<k$.  For one observation define}
\begin{align}\label{eq:single-observation-scores-main}
 a_{jk}(\bx)
 &=u_k^\trans(\bx\bx^\trans-\bSigma)u_j, \qquad
 b_{jk}(\bx)
  =u_k^\trans\diag(\bx\bx^\trans-\bSigma)u_j
   =\sum_{\ell=1}^p u_{k\ell}u_{j\ell}
    (x_\ell^2-\Sigma_{\ell\ell}),
\end{align}
{where $u_{r\ell}$ is the $\ell$-th coordinate of $u_r$.  The variable
$a_{jk}$ is one observation's contribution to the unattenuated coupling between
the retained direction $u_j$ and the omitted direction $u_k$.  The variable
$b_{jk}$ is the part of that contribution arising from coordinatewise variance
fluctuations.  Both are mean zero.  If $\overline a_{jk}$ and
$\overline b_{jk}$ denote their sample averages, then {the sampling-leakage
matrix collects all residual couplings across the rank cutoff:}}
\begin{equation}\label{eq:attenuated-cross-entry}
 \overline Z_{n,\alpha}
 :=P_*^\perp(\bT_{n,\alpha}-\bT_\alpha)P_*
 =\sum_{j\leq d<k}
  (\overline a_{jk}-\alpha\overline b_{jk})u_ku_j^\trans.
\end{equation}
{Its $(u_k,u_j)$ entry is
$\overline a_{jk}-\alpha\overline b_{jk}$.}  {At $\alpha=0$, PCA uses the {unattenuated coupling} $\overline a_{jk}$.
Attenuation subtracts its sample-diagonal component
$\alpha\overline b_{jk}$, leaving the residual coupling
$\overline a_{jk}-\alpha\overline b_{jk}$.  The next lemma shows how this
residual coupling produces subspace rotation and clean reconstruction risk.
{To express the resulting rotation, divide each residual coupling by its
population eigengap and define}}
\begin{equation}\label{eq:balanced-leading-rotation}
\begin{aligned}
 X_{n,\alpha}
 &:=\sum_{j\leq d<k}
   \frac{\overline a_{jk}-\alpha\overline b_{jk}}
        {\lambda_j-\lambda_k}u_ku_j^\trans,\qquad
 x_{n,kj,\alpha}:=u_k^\trans X_{n,\alpha}u_j
 =\frac{\overline a_{jk}-\alpha\overline b_{jk}}
        {\lambda_j-\lambda_k}. 
\end{aligned}
\end{equation}
\begin{lemma}[Residual coupling, rotation, and clean risk]
\label{lem:balanced-coupling-risk}
{{Under
\cref{ass:gaussian-sampling,ass:target-gap,ass:balanced-marginals}, for each
fixed $\alpha\in[0,1]$ there is a remainder $R_{n,\alpha}$ such that, for every
$j\leq d<k$,}}
\begin{equation}\label{eq:balanced-rotation-relation}
 \begin{aligned}
 P_*^\perp(\widehat P_\alpha-P_*)P_*
 &=X_{n,\alpha}+R_{n,\alpha}, ~~~~~~ \norm{R_{n,\alpha}}_{\F}=O_{\mathbb P}(n^{-1}),\\
 u_k^\trans(\widehat P_\alpha-P_*)u_j
 &=x_{n,kj,\alpha}+u_k^\trans R_{n,\alpha}u_j
 =x_{n,kj,\alpha}+O_{\mathbb P}(n^{-1}).
 \end{aligned}
\end{equation}
{Moreover,}
\begin{equation}\label{eq:balanced-local-risk}
 \Rex(\widehat P_\alpha)
 =\sum_{j\leq d<k}(\lambda_j-\lambda_k)x_{n,kj,\alpha}^2
  +O_{\mathbb P}(n^{-3/2})
 =\sum_{j\leq d<k}
   \frac{(\overline a_{jk}-\alpha\overline b_{jk})^2}
        {\lambda_j-\lambda_k}
   +O_{\mathbb P}(n^{-3/2}).
\end{equation}
\end{lemma}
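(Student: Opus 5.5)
We will treat $\widehat P_\alpha$ as a spectral projector of a perturbation of $\bT_\alpha$ and apply a second-order perturbation expansion. Under \cref{ass:balanced-marginals}, $\bT_\alpha=\bSigma-\alpha\tau I_p$ has the same eigenvectors $u_r$ as $\bSigma$, eigenvalues $\lambda_r-\alpha\tau$, and the same gaps $\lambda_j-\lambda_k$. Hence $P_{\bT_\alpha}=P_*$ with cutoff gap $\delta>0$. Write
\[
\Delta_{n,\alpha}:=\bT_{n,\alpha}-\bT_\alpha=E_n-\alpha\diag(E_n).
\]
Under \cref{ass:gaussian-sampling,ass:target-gap}, $p$ is fixed and $E_n$ has finite fourth moments, so the central limit theorem gives $\norm{E_n}_{\op}=O_{\mathbb P}(n^{-1/2})$. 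Since $\norm{\diag(A)}_{\op}\le\norm{A}_{\op}$, we also get $\norm{\Delta_{n,\alpha}}_{\op}\le 2\norm{E_n}_{\op}=O_{\mathbb P}(n^{-1/2})$. On the event $\norm{\Delta_{n,\alpha}}_{\op}<\delta/4$, whose probability tends to one, Weyl's inequality gives $\lambda_d(\bT_{n,\alpha})>\lambda_{d+1}(\bT_{n,\alpha})$. So $\widehat P_\alpha$ is uniquely defined there, and we may write it through the Riesz integral
\[
\widehat P_\alpha=-\frac1{2\pi i}\oint_\Gamma (\bT_{n,\alpha}-zI)^{-1}\,dz,
\]
with $\Gamma$ a fixed contour enclosing $\lambda_1-\alpha\tau,\ldots,\lambda_d-\alpha\tau$ at distance at least $\delta/2$ from the spectrum.

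\textbf{First-order term.} Expand the resolvent in a Neumann series in $\Delta_{n,\alpha}$. This gives
\[
\widehat P_\alpha-P_*=L(\Delta_{n,\alpha})+Q_n,
\qquad
\norm{Q_n}_{\F}\le C\norm{\Delta_{n,\alpha}}_{\op}^2/\delta^2=O_{\mathbb P}(n^{-1}),
\]
where $C$ depends only on $p$ and the contour. The linear term is the classical one,
\[
L(\Delta)=\sum_{j\le d<k}\frac{u_k^\trans\Delta u_j}{\lambda_j-\lambda_k}\,(u_ku_j^\trans+u_ju_k^\trans).
\]
Compress by $P_*^\perp(\cdot)P_*$ and use $u_k^\trans\Delta_{n,\alpha}u_j=\overline a_{jk}-\alpha\overline b_{jk}$, which is the identity behind \eqref{eq:attenuated-cross-entry}. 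This yields exactly $X_{n,\alpha}$, with $R_{n,\alpha}:=P_*^\perp Q_nP_*$ satisfying $\norm{R_{n,\alpha}}_{\F}=O_{\mathbb P}(n^{-1})$. Reading off the $(u_k,u_j)$ entries gives the entrywise statement in \eqref{eq:balanced-rotation-relation}.

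\textbf{Risk expansion.} Use the exact identity
\[
\Rex(P)=\tr\{\bSigma(P_*-P)\}=\sum_{j\le d}\lambda_j\,u_j^\trans(P_*-P)u_j-\sum_{k>d}\lambda_k\,u_k^\trans P u_k .
\]
Because $P$ is a projector, $\norm{P u_j}^2=u_j^\trans Pu_j$, and so
\[
\sum_{k}(u_k^\trans P u_j)^2=u_j^\trans P u_j .
\]
Set $c_{kj}:=u_k^\trans P u_j$ for $j\le d<k$. Write $G=P_*PP_*$ for the $d\times d$ block, so $u_j^\trans(P_*-P)u_j=1-G_{jj}$, and let $H=P_*^\perp PP_*^\perp$ be the complementary block. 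The projector relations give $(I-G)G=C^\trans C$ and $H-H^2=CC^\trans$, where $C=(c_{kj})$. Hence
\[
1-G_{jj}=\sum_{k>d}c_{kj}^2+O(\norm{C}^4),
\qquad
u_k^\trans Pu_k=\sum_{j\le d}c_{kj}^2+O(\norm{C}^4).
\]
The second relation holds because $H=O(\norm{C}^2)$, so $H^2=O(\norm C^4)$. Substituting into the identity gives
\[
\Rex(P)=\sum_{j\le d<k}(\lambda_j-\lambda_k)c_{kj}^2+O(\norm{C}_{\F}^4).
\]
Now insert $c_{kj}=x_{n,kj,\alpha}+O_{\mathbb P}(n^{-1})$ with $x_{n,kj,\alpha}=O_{\mathbb P}(n^{-1/2})$. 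Squaring produces a cross term of order $n^{-1/2}\cdot n^{-1}=n^{-3/2}$ and a quartic term of order $n^{-2}$, which gives the first equality in \eqref{eq:balanced-local-risk}. The second equality is direct substitution of the definition \eqref{eq:balanced-leading-rotation}.

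\textbf{Main obstacle.} The delicate step is the risk identity. The diagonal terms $u_j^\trans(P_*-\widehat P_\alpha)u_j$ are only $O_{\mathbb P}(n^{-1})$, so a naive first-order expansion gives nothing. They must be tied exactly to the off-diagonal block through the projector identities rather than estimated by the resolvent remainder. Otherwise one loses the $n^{-3/2}$ precision and the signed, nonnegative gap weighting. Everything else (the CLT rate for Gaussian $E_n$, boundedness of $\diag$, Weyl's inequality) is routine. The constants are uniform in $\alpha\in[0,1]$, though the lemma needs only fixed $\alpha$.
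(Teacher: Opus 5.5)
Your proof is correct, but it takes a different route from the paper on both halves.

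\textbf{Rotation.} The paper does not expand the resolvent. It starts from the exact commutation $\bT_{n,\alpha}\widehat P_\alpha=\widehat P_\alpha\bT_{n,\alpha}$ and substitutes $\widehat P_\alpha=P_*+\Delta P_\alpha$. It then uses an a priori Davis--Kahan bound $\norm{\Delta P_\alpha}_{\op}=O_{\mathbb P}(n^{-1/2})$ to push $F_{n,\alpha}\Delta P_\alpha-\Delta P_\alpha F_{n,\alpha}$ into the $O_{\mathbb P}(n^{-1})$ error, where $F_{n,\alpha}$ is your $\Delta_{n,\alpha}$. Reading off the $(u_k,u_j)$ entry finishes the step. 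Your Neumann-series expansion reaches the same $X_{n,\alpha}$, and it comes with an explicit deterministic remainder bounded by a constant times $\norm{\Delta_{n,\alpha}}_{\op}^2$.

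\textbf{Risk.} The paper removes the shift $\alpha\tau I_p$ and applies its graph/Riccati expansion, \cref{lem:two-matrix-risk-expansion}, with $A=B=\bSigma$. Since $P_*^\perp\bSigma P_*=0$, this leaves $\langle X_F,\mathcal H_{\bSigma}(X_F)\rangle_{\F}$ plus a $C\norm{F}_{\op}^3$ remainder, and the quadratic form is evaluated in the eigenbasis. You instead use $G-G^2=C^\trans C$ and $H-H^2=CC^\trans$ to show that every rank-$d$ projector near $P_*$ satisfies $\Rex(P)=\sum_{j\le d<k}(\lambda_j-\lambda_k)c_{kj}^2+O(\norm{C}^4)$. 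Only after that do you insert the first-order rotation.

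\textbf{What each route buys.} Your route is more elementary and purely geometric. It shows directly why the diagonal defects are second order and where the gap weights come from. Both of your remainders are deterministic on $\{\norm{\Delta_{n,\alpha}}_{\op}<\delta/4\}$, so your argument would also support the expectation bound in \cref{thm:balanced-risk}.

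The paper's route pays for the Riccati machinery once and reuses it in \cref{thm:fixed-p} and, more importantly, in \cref{thm:target-moving-risk}. There the expansion is around $P_{\bT_\alpha}\neq P_*$, the risk acquires a term linear in the rotation, and the second-order graph term $Y_{2,F}$ matters. Your identity relies on the reference projector reducing $\bSigma$, so it would need that extension.

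\textbf{One detail to make explicit.} The claims $H=O(\norm{C}^2)$ and $I-G=C^\trans C+O(\norm{C}^4)$ choose the branch of $\mu-\mu^2=\nu$ near $0$ (respectively near $1$). That choice requires $\norm{\widehat P_\alpha-P_*}_{\op}<1/2$. This holds on your good event by Davis--Kahan, so it is not a gap.
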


{{The scalar $x_{n,kj,\alpha}$ is the first-order rotation from $u_j$
toward $u_k$, so a larger eigengap weakens the effect of the same residual
coupling.  The risk formula weights its square by that eigengap, leaving one
eigengap in the denominator.}  This is a full rank-$d$ statement and does not
require the pairwise couplings to be independent.  The proof is given in
\cref{sec:supp-balanced-pairs}.}
Although the individual {coordinates $x_{n,kj,\alpha}$} in
\eqref{eq:balanced-leading-rotation} depend on the chosen eigenbasis when
eigenvalues repeat within either block, their aggregate contribution in
\eqref{eq:balanced-local-risk} does not. {This basis invariance suggests a
coordinate-free risk analysis for general covariance, which we develop in
\cref{sec:general-fixed}.}

{It remains to determine whether attenuation reduces the variance of the
residual coupling.  Under Gaussian sampling, Isserlis' identity
\citep{isserlis1918formula} gives}
\begin{gather}
 \Cov\{a_{jk}(\bx),b_{jk}(\bx)\}
 =2\lambda_j\lambda_k\sum_{\ell=1}^p u_{j\ell}^2u_{k\ell}^2\geq0,~~~\text{and hence} \label{eq:coupling-diagonal-covariance}\\
 \var\{a_{jk}(\bx)-\alpha b_{jk}(\bx)\}-\var\{a_{jk}(\bx)\}
 =-2\alpha\Cov\{a_{jk}(\bx),b_{jk}(\bx)\}
  +\alpha^2\var\{b_{jk}(\bx)\}. \label{eq:balanced-pairwise-variance}
\end{gather}
{The nonnegative overlap term in
\eqref{eq:coupling-diagonal-covariance} quantifies how strongly the
sample-diagonal component aligns with the unattenuated coupling and thus why a
small positive attenuation reduces its variance; the quadratic term in
\eqref{eq:balanced-pairwise-variance} explains why excessive attenuation can
eventually increase it.}
Aggregating these contributions across the
rank cutoff, define
\begin{equation}\label{eq:balanced-risk-constant}
\begin{aligned}
 \mathcal K(\alpha)
 &=\sum_{j\leq d<k}
 \frac{\var\{a_{jk}(\bx)-\alpha b_{jk}(\bx)\}}
      {\lambda_j-\lambda_k},\\
 \mathsf C_{\bSigma}
 &=\sum_{j\leq d<k}
 \frac{\Cov\{a_{jk}(\bx),b_{jk}(\bx)\}}
      {\lambda_j-\lambda_k},
 \qquad
 \mathsf V_{\bSigma}
 =\sum_{j\leq d<k}
 \frac{\var\{b_{jk}(\bx)\}}
      {\lambda_j-\lambda_k}.
\end{aligned}
\end{equation}
{\Cref{thm:balanced-risk} aggregates these pairwise contributions and
controls the expected risk uniformly over the full attenuation path.}
\begin{theorem}[Expected risk under balanced marginals]
\label{thm:balanced-risk}
{Under \cref{ass:gaussian-sampling,ass:target-gap,ass:balanced-marginals},
there are finite constants $C_0$ and $N$, independent of $\alpha$, such that}
\begin{equation}\label{eq:balanced-uniform-remainder}
 \sup_{0\leq\alpha\leq1}
 \left|n\cdot\E\Rex(\widehat P_\alpha)-\mathcal K(\alpha)\right|
 \leq\frac{C_0}{\sqrt n},
 \qquad n\geq N.
\end{equation}
{Moreover, $\mathsf C_{\bSigma},\mathsf V_{\bSigma}>0$ and, for every
$0\leq\alpha\leq1$,}
\begin{equation}\label{eq:risk-difference}
 \mathcal K(\alpha)-\mathcal K(0)
 =-2\alpha\mathsf C_{\bSigma}+\alpha^2\mathsf V_{\bSigma}.
\end{equation}
\end{theorem}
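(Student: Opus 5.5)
The plan splits the statement into two parts: the uniform expansion \eqref{eq:balanced-uniform-remainder}, and the algebraic identity \eqref{eq:risk-difference} together with the positivity claims.

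\textbf{The algebraic part.} Expand the variance in $\mathcal K(\alpha)$ pair by pair:
\[
\var(a-\alpha b)=\var(a)-2\alpha\Cov(a,b)+\alpha^2\var(b).
\]
Summing with weights $(\lambda_j-\lambda_k)^{-1}$ gives \eqref{eq:risk-difference} directly.

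For positivity, start from \eqref{eq:coupling-diagonal-covariance}, which shows every summand of $\mathsf C_{\bSigma}$ is nonnegative. So it suffices to find one pair $j\le d<k$ with $\sum_\ell u_{j\ell}^2u_{k\ell}^2>0$. Suppose not. Then every retained eigenvector and every omitted eigenvector have disjoint coordinate supports. Hence $\{1,\dots,p\}$ splits into a set $S$ carrying all of $u_1,\dots,u_d$ and its complement carrying all of $u_{d+1},\dots,u_p$. Since $\{u_1,\dots,u_d\}$ spans only vectors supported on $S$, and the omitted block spans only vectors supported on $S^c$, the count gives $|S|=d$. Consequently $P_*$ equals the coordinate projector onto $S$. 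Its diagonal entries are then $1$ on $S$ and $0$ off $S$. Using $D=\tau I_p$, we have $\tau d=\tr(P_*\bSigma)$, while $\tr(P_*\bSigma)=\sum_{j\le d}\lambda_j$ and $\tr(P_*^\perp\bSigma)=\tau(p-d)$. This forces the average of the top $d$ eigenvalues to equal the average of the bottom $p-d$. That contradicts $\delta>0$, so $\mathsf C_{\bSigma}>0$.

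For $\mathsf V_{\bSigma}$, Isserlis gives $\var(b_{jk})=2\sum_{\ell,m}u_{k\ell}u_{j\ell}u_{km}u_{jm}\Sigma_{\ell m}^2$. This is the quadratic form of the positive semidefinite Hadamard square $\bSigma\Had\bSigma$ evaluated at $u_j\Had u_k$. By the Schur product theorem, $\bSigma\Had\bSigma\succ0$. The vector $u_j\Had u_k$ is nonzero for the pair found above, so that term is positive and $\mathsf V_{\bSigma}>0$.

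\textbf{The uniform expansion.} Lemma~\ref{lem:balanced-coupling-risk} holds only in probability and pointwise in $\alpha$, so it must be upgraded to expectations with bounds uniform in $\alpha$. The approach is to work on a good event $\mathcal G_n=\{\norm{E_n}_{\op}\le\delta/8\}$. Since $\norm{\diag(E_n)}_{\op}\le\norm{E_n}_{\op}$ and $\bT_\alpha$ has gap $\delta$ for every $\alpha$, the perturbation $\bT_{n,\alpha}-\bT_\alpha$ has operator norm at most $2\norm{E_n}_{\op}\le\delta/4$ uniformly in $\alpha$ on $\mathcal G_n$.

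On $\mathcal G_n$, I would use the holomorphic functional calculus representation of $\widehat P_\alpha$, as a resolvent integral over a fixed contour around the top $d$ eigenvalues. Expanding it to second order gives
\[
\widehat P_\alpha-P_*=L(H_\alpha)+Q(H_\alpha)+\rho_\alpha,
\qquad \norm{\rho_\alpha}\le C\norm{H_\alpha}_{\op}^3\le C'\norm{E_n}_{\op}^3,
\]
where $H_\alpha=\bT_{n,\alpha}-\bT_\alpha$ is linear in $\alpha$. The constants are uniform because the contour and gap do not depend on $\alpha$.

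Next, use $\Rex(P)=\tr\{\bSigma(P_*-P)\}$. The first-order term contributes zero, because $\tr(\bSigma L)=0$ for the off-diagonal-block operator $L$. The second-order term gives exactly $\sum_{j\le d<k}(\overline a_{jk}-\alpha\overline b_{jk})^2/(\lambda_j-\lambda_k)$, which is the quadratic form in \eqref{eq:balanced-local-risk}. The remainder is bounded by $C\norm{E_n}_{\op}^3$.

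Taking expectations, the main term has expectation exactly $\mathcal K(\alpha)/n$, since the summands are iid means of mean-zero variables. The Gaussian moment bound $\E\norm{E_n}_{\op}^3=O(n^{-3/2})$ gives the $C_0/\sqrt n$ rate after multiplying by $n$.

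\textbf{Main obstacle: the complement of $\mathcal G_n$.} On $\mathcal G_n^c$ there are two contributions to control. The first is $\Rex$, which is at most $\tr\bSigma$. The second is the quadratic main term, whose expectation restricted to $\mathcal G_n^c$ is bounded by Cauchy--Schwarz using its finite second moment. Both are multiplied by $\mathbb P(\mathcal G_n^c)$, which is exponentially small by Gaussian covariance concentration in fixed $p$. This is $o(n^{-3/2})$ uniformly in $\alpha\in[0,1]$, because all bounds are polynomial in $\alpha\le1$.

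The delicate point, which I expect to be the hardest step, is verifying that the second-order term's trace reproduces exactly the stated weighted sum, including within-block pieces. The within-block pieces $P_*Q P_*$ and $P_*^\perp QP_*^\perp$ must combine, via $\tr\{\bSigma(P_*-\widehat P)\}$ and the identity $(\widehat P-P_*)^2$-structure of projector differences, into $(\lambda_j-\lambda_k)x_{kj}^2$. I would do this with the standard identity
\[
\tr\{\bSigma(P_*-\widehat P)\}=\tr\{(P_*\bSigma P_* - P_*^\perp\bSigma P_*^\perp)(\widehat P-P_*)^2\}\cdot(\text{sign conventions}),
\]
which is exact and avoids computing $Q$ explicitly. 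The error is then controlled by the $O(\norm{E}^3)$ discrepancy between $P_*^\perp\widehat PP_*$ and $X_{n,\alpha}$.
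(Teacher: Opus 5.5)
Your proposal is correct and follows the paper's proof in all essentials. The shared steps are:
\begin{itemize}
\item the same good event, using $\sup_{\alpha}\norm{E_n-\alpha\diag(E_n)}_{\op}\le2\norm{E_n}_{\op}$;
\item a cubic remainder that is uniform in $\alpha$;
\item the exact expectation $\mathcal K(\alpha)/n$ of the quadratic term;
\item Cauchy--Schwarz on the exponentially rare complement;
\item the support-splitting contradiction for $\mathsf C_{\bSigma}>0$, where you compare block traces and the paper compares individual coordinate variances;
\item the Schur product theorem for $\mathsf V_{\bSigma}>0$.
\end{itemize}

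The only difference is the local perturbation device. The paper invokes its graph/Riccati two-matrix expansion with $A=B=\bSigma$. You use a resolvent expansion or the identity $\tr\{\bSigma(P_*-\widehat P)\}=\tr\{(P_*\bSigma P_*-P_*^\perp\bSigma P_*^\perp)(\widehat P-P_*)^2\}$, which holds exactly with no sign adjustment. Combined with $P_*^\perp\widehat P_\alpha P_*=X_{n,\alpha}+O(\norm{E_n}_{\op}^2)$, it yields the same cubic-remainder expansion. One small correction to your wording: that discrepancy is itself quadratic in $\norm{E_n}_{\op}$; only its effect on the quadratic form is cubic.
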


Because \cref{thm:balanced-risk} shows that
$\mathcal K(\alpha)/n$ approximates the expected clean excess risk, we call
$\mathcal K(\alpha)$ the $n$-scaled theoretical risk curve.  Its minimizers
over the closures of the mask-derived and direct ranges are
\begin{equation}\label{eq:two-oracles}
 \alpha_{\rm mask,cl}^*
 =\min\!\left\{\frac{\mathsf C_{\bSigma}}{\mathsf V_{\bSigma}},\frac12\right\},
 \qquad
 \alpha_{\rm direct}^*
 =\min\!\left\{\frac{\mathsf C_{\bSigma}}{\mathsf V_{\bSigma}},1\right\}.
\end{equation}
{For any $n\geq N$, the theorem guarantees lower expected risk than PCA
whenever
$2\alpha\mathsf C_{\bSigma}-\alpha^2\mathsf V_{\bSigma}>2C_0/\sqrt n$.
Consequently, every fixed
$0<\alpha<\min\{2\mathsf C_{\bSigma}/\mathsf V_{\bSigma},1\}$ improves on PCA
for all sufficiently large finite samples.}  A larger mask rate need not be better,
and the optimum can lie beyond the mask-derived range.  The notation ``cl''
denotes closure: $\alpha=1/2$ is approached as the mask rate tends to one,
whereas direct attenuation includes the entire interval $[0,1]$; see
\cref{fig:mechanism}.

{As a concrete example, the rank-one flat-spike model
$\bSigma=\sigma^2I_p+\theta uu^\trans$ with $u=p^{-1/2}\ones$, {where $\ones$
is the all-ones vector,} has balanced
marginals.  Moreover, $\mathsf C_{\bSigma}/\mathsf V_{\bSigma}\geq1$, so
complete attenuation is optimal on the direct path.
\Cref{fig:target-shift-evidence}(a) verifies the theoretical risk curve in a
balanced multispike design.}

\subsection{General covariance: changing eigengaps and population leakage}\label{sec:general-fixed}

{When marginal variances are unequal, attenuation has two distinct
population-level effects.  It can change the eigengaps that resist
sampling-induced rotation even when the population PCA subspace is unchanged,
and it can create population leakage that moves this subspace.  We begin
with the target-preserving case, allowing the eigengaps to vary with
$\alpha$.  Recall the block gap $\delta_\alpha$ from
\eqref{eq:preserved-gap}.}

\begin{assumption}[Target preservation]\label{ass:uniform-target}
{For a compact attenuation interval $\mathcal I\subset[0,1]$ containing
zero,}
\begin{equation}\label{eq:uniform-target-preservation}
 P_*^\perp DP_*=0,
 \qquad
 \underline\delta_{\mathcal I}
 :=\inf_{\alpha\in\mathcal I}\delta_\alpha>0.
\end{equation}
\end{assumption}

{The first condition eliminates population leakage, so $\bT_\alpha$
leaves the range of $P_*$ invariant.  The positive block gap keeps this
subspace above its orthogonal complement in the spectral ordering.  Together,
the two conditions preserve the leading rank-$d$ population subspace over the
attenuation interval.  Neither requires equal coordinate variances:
\cref{fig:target-shift-evidence}(b) gives an unequal-marginal example satisfying
both.}

{{The general argument writes the three balanced-case steps in matrix
form: collect the sampling leakage, convert it into subspace rotation using the
attenuated eigengaps, and then convert that rotation into clean reconstruction
risk using the original PCA eigengaps.  For the first step, define the
one-observation leakage matrix $Z_\alpha(\bx)$; its sample average is the matrix
$\overline Z_{n,\alpha}$ introduced in \eqref{eq:attenuated-cross-entry}:}}
\begin{equation}\label{eq:sample-coupling}
 \begin{aligned}
 Z_\alpha(\bx)
 &=P_*^\perp\left[\bx\bx^\trans-\bSigma
 -\alpha\diag(\bx\bx^\trans-\bSigma)\right]P_*,\\
 \overline Z_{n,\alpha}
 &=\frac1n\sum_{i=1}^n Z_\alpha(\bx_i)
 =P_*^\perp(\bT_{n,\alpha}-\bT_\alpha)P_*.
 \end{aligned}
\end{equation}
{{The $(u_k,u_j)$ entry of $Z_\alpha(\bx)$ is
$a_{jk}(\bx)-\alpha b_{jk}(\bx)$, exactly the one-observation version of the
balanced residual coupling.  Under \cref{ass:uniform-target}, population
leakage vanishes, so $\overline Z_{n,\alpha}$ is the entire leakage across the rank cutoff.  The remaining difference is that attenuation may now
change the eigengaps.  To handle all rotation directions at once, let
$\mathbb X$ contain the matrices that map the target subspace into its
orthogonal complement:}}
\begin{equation}\label{eq:tangent-space}
 \mathbb X=\{X\in\RR^{p\times p}:X=P_*^\perp XP_*\}.
\end{equation}
{On this space, define the attenuated gap operator}
\begin{equation}\label{eq:rotation-operator}
 \mathcal L_\alpha(X)
 =X(P_*\bT_\alpha P_*)
 -(P_*^\perp\bT_\alpha P_*^\perp)X.
\end{equation}
{{The positive block gap makes $\mathcal L_\alpha$ invertible.  In
eigenbases of the two $\bT_\alpha$ blocks, this operator multiplies each
retained--omitted entry by its attenuated eigengap.  Its inverse therefore
converts the leakage matrix into the same leading rotation matrix used in the
balanced case:}}
\begin{equation}\label{eq:first-order-subspace-movement}
 X_{n,\alpha}=\mathcal L_\alpha^{-1}\overline Z_{n,\alpha}.
\end{equation}
{{Under balanced marginals, the attenuated eigengap is
$\lambda_j-\lambda_k$, so \eqref{eq:first-order-subspace-movement} reduces
exactly to \eqref{eq:balanced-leading-rotation}.  Its general
derivation is given in \cref{sec:supp-fixed-p}.}}

{{The final balanced-case step weights the squared rotation in direction
$(j,k)$ by the original eigengap $\lambda_j-\lambda_k$.  The following
operator applies the same clean-risk weighting to all directions at once:}}
\begin{equation}\label{eq:clean-risk-operator}
 \mathcal H_{\bSigma}(X)
 =X(P_*\bSigma P_*)
 -(P_*^\perp\bSigma P_*^\perp)X.
\end{equation}
{Indeed,
$u_k^\trans\mathcal H_{\bSigma}(X)u_j=(\lambda_j-\lambda_k)
u_k^\trans Xu_j$.  Thus, writing
$\langle A,B\rangle_{\F}=\tr(A^\trans B)$ for the Frobenius inner product,
the general theoretical risk curve is}
\begin{equation}\label{eq:general-risk-constant}
 \mathcal K(\alpha)
 =\E\left\langle
 \mathcal L_\alpha^{-1}Z_\alpha(\bx),
 \mathcal H_{\bSigma}
 \mathcal L_\alpha^{-1}Z_\alpha(\bx)
 \right\rangle_{\F}.
\end{equation}
{Equation \eqref{eq:general-risk-constant} is therefore the
coordinate-free counterpart of the pairwise risk in
\eqref{eq:balanced-local-risk}.}

\begin{theorem}[Expected risk under target preservation]
\label{thm:fixed-p}
{Under
\cref{ass:gaussian-sampling,ass:target-gap,ass:uniform-target}, there are
finite constants $C$ and $N$, independent of $\alpha$, such that}
\begin{equation}\label{eq:uniform-remainder}
 \sup_{\alpha\in\mathcal I}
 \left|n\E\Rex(\widehat P_\alpha)-\mathcal K(\alpha)\right|
 \leq\frac{C}{\sqrt n},
 \qquad n\geq N.
\end{equation}
\end{theorem}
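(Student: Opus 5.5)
Under target preservation, $P_*$ is also the leading rank-$d$ projector of $\bT_\alpha$ for every $\alpha\in\mathcal I$, and its block gap is at least $\underline\delta_{\mathcal I}$. So $\widehat P_\alpha$ is a perturbation of the fixed projector $P_*$ by $E_{n,\alpha}:=\bT_{n,\alpha}-\bT_\alpha=E_n-\alpha\diag(E_n)$, whose size is uniformly controlled: $\norm{E_{n,\alpha}}_{\op}\leq2\norm{E_n}_{\op}$. The plan has three steps. First, a second-order expansion of $\widehat P_\alpha-P_*$ that is uniform in $\alpha$. Second, an exact expansion of the excess risk in terms of the cross-block. Third, taking expectations, with a separate bound on the bad event where the perturbation is large.

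\emph{Step 1 (uniform perturbation expansion).} On the good event $G_n=\{\norm{E_n}_{\op}\leq\underline\delta_{\mathcal I}/8\}$, Weyl's inequality keeps the rank-$d$ cutoff of $\bT_{n,\alpha}$ separated by at least $\underline\delta_{\mathcal I}/2$ for all $\alpha\in\mathcal I$, so $\widehat P_\alpha$ is unique. Represent $\widehat P_\alpha$ by the Riesz contour integral around the top block of $\bT_\alpha$. Since the block eigenvalues are continuous in $\alpha$ on the compact set $\mathcal I$ and stay within $\norm{\bSigma}_{\op}$ of the origin, one contour family with radius bounded below works for all $\alpha$. The Neumann series then gives
\[
P_*^\perp\widehat P_\alpha P_*=\mathcal L_\alpha^{-1}\overline Z_{n,\alpha}+R_{n,\alpha},\qquad \norm{R_{n,\alpha}}_{\F}\leq C_1\norm{E_n}_{\op}^2 .
\]
Here $C_1$ depends only on $\underline\delta_{\mathcal I}$, $\norm{\bSigma}_{\op}$ and $p$. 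The leading term is exactly the Sylvester solution $\mathcal L_\alpha^{-1}\overline Z_{n,\alpha}$, which is well defined because $\mathcal L_\alpha$ is invertible with $\norm{\mathcal L_\alpha^{-1}}\leq\underline\delta_{\mathcal I}^{-1}$. This is the general version of \cref{lem:balanced-coupling-risk}, and the same argument gives the Davis--Kahan bound $\norm{\widehat P_\alpha-P_*}_{\F}\leq C_2\norm{E_n}_{\op}$.

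\emph{Step 2 (risk as a quadratic form).} Write $\Delta=\widehat P_\alpha-P_*$. For projectors of equal rank, the identity $\Delta=P_*\Delta P_*+P_*^\perp\Delta P_*^\perp+(\text{cross})$ holds, together with $P_*\Delta P_*=-P_*\Delta^2P_*$ and $P_*^\perp\Delta P_*^\perp=P_*^\perp\Delta^2P_*^\perp$. These give exactly
\[
\Rex(\widehat P_\alpha)=-\tr(\bSigma\Delta)=\tr\bigl(Y^\trans P_*\bSigma P_*Y\bigr)\text{-type terms}=\langle Y,\mathcal H_{\bSigma}Y\rangle_{\F}+O(\norm{\Delta}_{\F}^3),
\]
where $Y=P_*^\perp\widehat P_\alpha P_*$. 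Substituting Step 1 yields
\[
\Rex(\widehat P_\alpha)=\langle X_{n,\alpha},\mathcal H_{\bSigma}X_{n,\alpha}\rangle_{\F}+\rho_{n,\alpha},\qquad |\rho_{n,\alpha}|\leq C_3\norm{E_n}_{\op}^3\quad\text{on }G_n,
\]
with $C_3$ uniform in $\alpha$. The leading term is a quadratic form in the average of i.i.d.\ mean-zero matrices $Z_\alpha(\bx_i)$, and $\mathcal L_\alpha^{-1}$ and $\mathcal H_{\bSigma}$ are deterministic and linear. Hence $n\E\langle X_{n,\alpha},\mathcal H_{\bSigma}X_{n,\alpha}\rangle_{\F}=\mathcal K(\alpha)$ exactly, with no remainder.

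\emph{Step 3 (expectations and the bad event).} This is the main obstacle, because the expansion holds only on $G_n$ and must hold uniformly in $\alpha$. Split
\[
n\E\Rex(\widehat P_\alpha)-\mathcal K(\alpha)=n\E[\rho_{n,\alpha}\mathbf 1_{G_n}]+n\E\bigl[(\Rex(\widehat P_\alpha)-\langle X_{n,\alpha},\mathcal H_{\bSigma}X_{n,\alpha}\rangle_{\F})\mathbf 1_{G_n^c}\bigr].
\]
Gaussian moments give $\E\norm{E_n}_{\op}^3\leq Cn^{-3/2}$, so the first term is $O(n^{-1/2})$. For the second term, use $\Rex\leq\tr\bSigma$ and $\langle X,\mathcal H_{\bSigma}X\rangle_{\F}\leq C\norm{E_n}_{\op}^2$. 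Cauchy--Schwarz combined with the exponential tail $\mathbb P(G_n^c)\leq Ce^{-cn}$, which follows from fixed $p$ and Gaussian concentration of $\bS_n$, makes this term $O(ne^{-cn/2})$. All constants depend on $\alpha$ only through $\underline\delta_{\mathcal I}$ and $\norm{\bT_\alpha}_{\op}\leq\norm{\bSigma}_{\op}$. This gives the supremum bound \eqref{eq:uniform-remainder}, with $N$ chosen so that $ne^{-cn/2}\leq n^{-1/2}$.
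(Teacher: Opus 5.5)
Your proposal is correct and is essentially the paper's proof. Both arguments use a second-order expansion around the fixed projector $P_*$, with constants depending only on $\underline\delta_{\mathcal I}$, $\norm{\bSigma}_{\op}$ and $p$; the exact identity $n\E\langle X_{n,\alpha},\mathcal H_{\bSigma}X_{n,\alpha}\rangle_{\F}=\mathcal K(\alpha)$ from independence and centering; an $O(n^{-3/2})$ cubic remainder on the good event via $\norm{F_{n,\alpha}}_{\op}\leq2\norm{E_n}_{\op}$; and Cauchy--Schwarz on the exponentially unlikely complement. The only difference is that you derive the expansion with a Riesz contour and the exact idempotent identities $P_*\Delta P_*=-P_*\Delta^2P_*$ and $P_*^\perp\Delta P_*^\perp=P_*^\perp\Delta^2P_*^\perp$, which even give an $O(\norm{\Delta}_{\F}^4)$ risk remainder; the paper instead packages this step in its graph/Riccati \cref{lem:two-matrix-risk-expansion} with $A=\bT_\alpha$ and $B=\bSigma$. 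One small slip: the exact leading terms are $\tr(P_*\bSigma P_*Y^\trans Y)-\tr(P_*^\perp\bSigma P_*^\perp YY^\trans)$, not $\tr(Y^\trans P_*\bSigma P_*Y)$, which vanishes because $P_*Y=0$.
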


{For any $n\geq N$, the theorem guarantees lower expected risk than PCA
whenever $\mathcal K(0)-\mathcal K(\alpha)>2C/\sqrt n$.}
{Unlike in the balanced case, this comparison now depends on both the random variation in the residual couplings
collected by $Z_\alpha$ and the attenuated eigengaps represented by
$\mathcal L_\alpha$.  When $D=\tau I_p$,
$\mathcal L_\alpha=\mathcal L_0=\mathcal H_{\bSigma}$, and
\eqref{eq:general-risk-constant} reduces exactly to the pairwise formula
\eqref{eq:balanced-risk-constant}.}

{Target preservation need not be exact for attenuation to improve
finite-sample PCA.  When $P_*^\perp DP_*\neq0$, attenuation creates population
leakage, and the leading population subspace of $\bT_\alpha$ moves away from
$P_*$.  Uniformly over a compact attenuation interval containing zero,
\cref{thm:target-moving-risk} in the supplement gives}
\begin{equation}\label{eq:target-moving-main}
 \E\Rex(\widehat P_\alpha)
 =\mathsf B_{\rm pop}(\alpha)
 +\frac{\mathcal K_{\rm move}(\alpha)}{n}
 +O(n^{-3/2}),
\end{equation}
{where $\mathsf B_{\rm pop}(\alpha)$ is the clean reconstruction cost of
moving the population subspace, as defined in
\eqref{eq:population-attenuated-projector}.  The term
$\mathcal K_{\rm move}(\alpha)/n$ is the signed finite-sample correction around
that moved subspace: sampling can rotate the estimator either toward or away
from $P_*$.  It reduces to
$\mathcal K(\alpha)/n$ when the target is preserved.  The theoretical risk
curve formed by the two displayed terms favors attenuation over PCA when}
\begin{equation}\label{eq:target-moving-benefit-main}
 n\mathsf B_{\rm pop}(\alpha)
 <\mathcal K(0)-\mathcal K_{\rm move}(\alpha).
\end{equation}
{Thus exact target preservation is unnecessary: attenuation helps
whenever the finite-sample gain on the right pays for the population cost on
the left.}

{In \cref{fig:target-shift-evidence}(c), increasing
population leakage moves the risk-minimizing strength from the direct-only
range through the mask-derived range and finally to $\alpha=0$, which recovers
PCA.  Across these regimes, the benefit is determined by the sampling leakage
remaining after attenuation, the eigengaps that translate this leakage into
rotation, and any population cost from moving the target.}

\section{Numerical studies}\label{sec:experiments}

\begin{figure}[t!]
  \centering
  \includegraphics[width=0.99\linewidth]
  {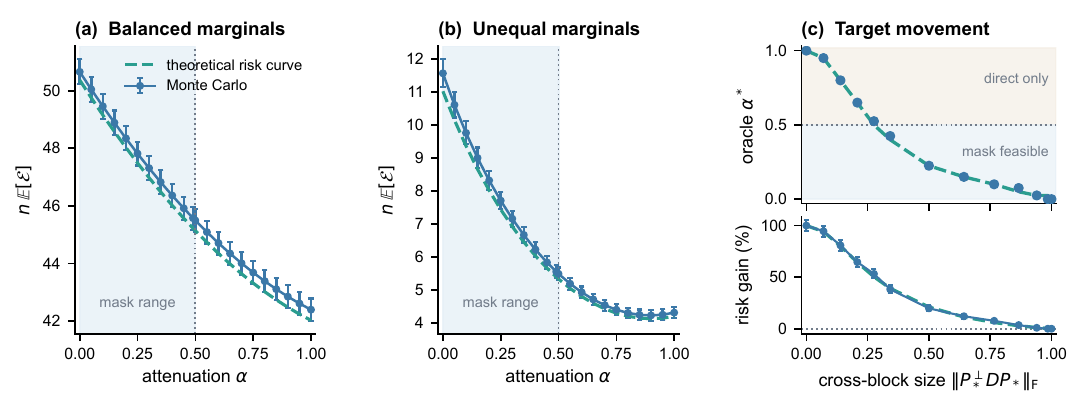}
  \caption{{Testing the risk theory from preserved to moving targets.
  \textbf{(a)} Balanced marginal variances and \textbf{(b)} unequal marginal
  variances whose diagonal preserves the PCA target.  Points and bars are
  Monte Carlo means and 95\% intervals; dashed curves are the theoretical risk
  curve from \eqref{eq:general-risk-constant}; the vertical scale is the
  $n$-scaled expected clean excess risk.  Blue shading marks the
  mask-derived range $0\leq\alpha<1/2$, with $\alpha=1/2$ shown as its limiting
  endpoint.  \textbf{(c)} Along \eqref{eq:moving-experiment-family}, the upper
  axis compares the strengths selected by the theoretical risk curve (line)
  and Monte Carlo mean risk (points), and the lower axis reports the
  corresponding risk reductions relative to PCA.  The horizontal coordinate
  is $\norm{P_{*,\phi}^\perp D_\phi P_{*,\phi}}_{\F}$, so moving right means
  greater population target movement.}}
  \label{fig:target-shift-evidence}
\end{figure}

This section presents numerical studies to examine the finite-sample performance of the proposed method.  Controlled
Gaussian experiments first ask whether the theoretical risk describes the
benefit of attenuation as the population target changes from preserved to
moving.  {Fixed fitting-sample-budget studies on three real-data modalities
then compare mask-derived and direct attenuation with PCA, correlation PCA,
HeteroPCA, and coordinate- and block-masked-target estimators under common
budgets.}  Gaussianity makes the signed risk
comparison explicit; the finite-moment theorem and non-Gaussian stress test in
the supplement show that the same sampling mechanism is not tied to Gaussian
observations.

\subsection{Finite-sample risk across preserved and moving targets}

\paragraph{Settings.}
{We begin with two target-preserving covariances.  The balanced design in
\cref{fig:target-shift-evidence}(a) has $p=16$, target rank $d=3$, and
eigenvectors given by normalized columns of a Hadamard matrix, so each loading
has magnitude $p^{-1/2}$. {Specifically, the covariance is
$I_p+\sum_{r=1}^3\theta_ru_ru_r^\trans$ with
$(\theta_1,\theta_2,\theta_3)=(8,4,2)$.}  Panel~(b) uses target rank two and the unequal
covariance}
\begin{equation}\label{eq:unequal-example}
 \bSigma=
 \begin{pmatrix}
 4&2&0.75&0.25\\
 2&4&0.25&0.75\\
 0.75&0.25&2.5&1.5\\
 0.25&0.75&1.5&2.5
 \end{pmatrix}.
\end{equation}
{Here $D=\diag(\bSigma)$ is not proportional to the identity, but it
leaves the leading two-dimensional target invariant and preserves its
spectral ordering along the attenuation path.}

{To move the population target continuously, panel~(c) uses the
bivariate family}
\begin{equation}\label{eq:moving-experiment-family}
 \bSigma_\phi
 =R_{\pi/4-\phi}
 \begin{pmatrix}5&0\\0&1\end{pmatrix}
 R_{\pi/4-\phi}^\trans,
 \qquad 0\leq\phi\leq\frac{\pi}{8}, \qquad
 R_\theta=
 \begin{pmatrix}
    \cos\theta&-\sin\theta\\
    \sin\theta&\cos\theta
 \end{pmatrix}.
\end{equation}
{Let $D_\phi=\diag(\bSigma_\phi)$ and let $P_{*,\phi}$ be its leading
population projector.  The population coupling introduced by attenuation is
measured by}
\begin{equation}
 \norm{P_{*,\phi}^\perp D_\phi P_{*,\phi}}_{\F}=\sin(4\phi).
\end{equation}
{It increases from zero to one along the path.  Thus panel~(c) changes
target movement while keeping the two population eigenvalues fixed; in
contrast, panel~(b) shows that unequal marginal variances alone need not move
the target.}

\paragraph{Evaluation.}
{For panels~(a) and (b), every point averages 3,000 Gaussian samples at
$n=256$ and reports $n\E\Rex(\widehat P_\alpha)$.  The theoretical curve is
$\mathcal K(\alpha)$ from \eqref{eq:general-risk-constant}, with no fitted
parameters.  For panel~(c), we use 13 values of $\phi$ from $0^\circ$ to
$22.5^\circ$, spaced by $1^\circ$ through $5^\circ$ and by $2.5^\circ$
thereafter.  At each value, 3,000 independent samples of size $n=64$ estimate
the mean clean risk over
$\{0,0.025,\ldots,1\}\cup\{0.49\}$.  We compare its minimizing strength with
the minimizer of
$\mathsf B_{\rm pop}(\alpha)+\mathcal K_{\rm move}(\alpha)/n$ on the same
grid.}

\paragraph{Results.}
The theoretical risk curve follows the Monte Carlo path throughout the
attenuation interval in both target-preserving designs.  The best simulated
attenuation lowers the scaled PCA risk by $16.3\%$ in the balanced design and
by $63.4\%$ at $\alpha=0.90$ in the unequal design.  Along the moving-target
path, the strength minimizing the theoretical risk differs from the Monte
Carlo mean-risk minimizer by at most one grid step, $0.025$, at all 13 movement
levels.  As the population coupling grows, the preferred strength decreases
from $\alpha=1$, passes through the mask-derived range, and eventually reaches
$\alpha=0$, recovering classical PCA.  The corresponding Monte Carlo risk reduction is
nearly $100\%$ under exact target preservation and remains $38.2\%$ when the
preferred strength first enters the mask-derived range, before tapering to
zero at the PCA endpoint.

\subsection{Fixed-budget studies on image patches and ordered signals}

\begin{figure}[!t]
  \centering
  \includegraphics[width=0.99\linewidth]
  {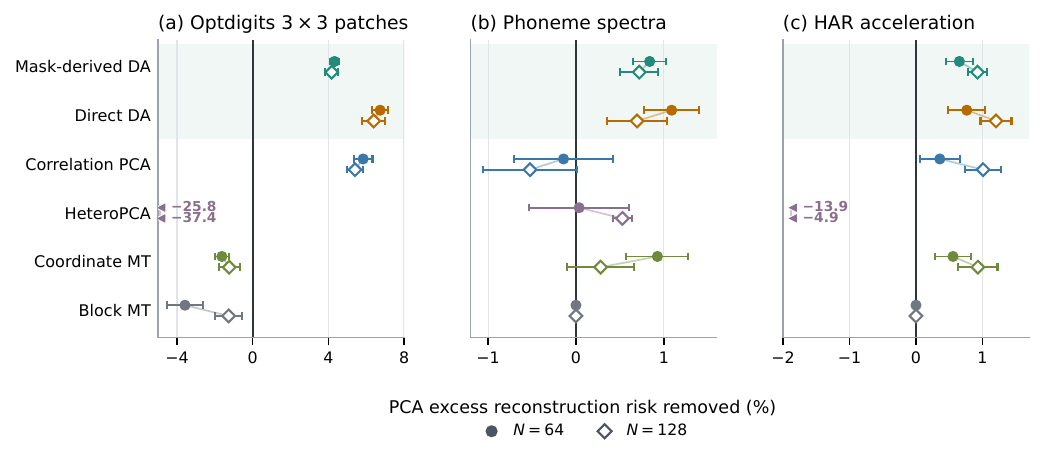}
  \caption{{Seven-method fixed fitting-sample-budget comparison across
  three real-data modalities.  Each row gives the mean percentage of PCA
  excess reconstruction risk removed by one alternative; positive values
  favor that method.  Filled circles and open diamonds correspond to
  $N=64$ and $128$, respectively, and the shaded rows are the two attenuation
  paths.  {Optdigits intervals use ten patch-location means, each
  averaging 100 training-subsample fits; Phoneme and HAR intervals use 100
  training subsamples.}  Within each fit, all methods use the same sampled observations,
  validation folds, and official test set.
  HeteroPCA means beyond the
  displayed range are marked at the left boundary and labeled by their
  values.}}
  \label{fig:fixed-budget-real-data}
\end{figure}

\paragraph{Settings.}
{A fitting-sample budget $N$ gives every method the same $N$
observations for fitting and five-fold reconstruction cross-validation.  The
selected method is then refit on all $N$ observations and evaluated on the
official test set.  We use $N\in\{64,128\}$ throughout.}

{The Optdigits data comprise 3,823 training and 1,797 test grayscale images of size $8\times8$ \citep{alpaydin1998optdigits}.   
Each image contributes one $3\times3$ patch: before examining its pixel values, we sample the patch's upper-left corner uniformly from the 36 possible locations and flatten the patch into a vector in $\RR^9$, {so $p=9$.}  
 Thus the nine coordinates record
relative positions within a local patch, while the sampled patches range over
  the full image.  We set $d=2$.  Ten independent patch-location seeds define
  ten versions of the training and test data; {for each seed and budget, we draw
  100 training subsamples.  Within each subsample, all methods receive the same
  training patches and the same five validation folds.}}

{The Phoneme data contain 256 ordered log-periodogram ordinates for each
speech frame \citep{hastie2009elements}.  We retain the official
speaker-disjoint split of 3,340 training and 1,169 test frames, set $d=8$,
{and use all $p=256$ coordinates.  For the structured masked-target
baseline, we group every 16 adjacent frequency bins.}
The UCI Human Activity Recognition data contain three-axis total acceleration
over 128 time points in each 2.56-second window
\citep{reyesortiz2013har}.  We use the magnitude series, retain the official
  split of 7,352 windows from 21 training subjects and 2,947 windows from 9 test subjects, {use all $p=128$ coordinates, set $d=16$, and group every
  eight adjacent time points for the structured masked-target baseline.}
  {Each ordered-signal cell uses 100 training subsamples.  All methods
  again share the observations and validation folds within each subsample,
  with all observations from the same speaker or subject kept in one fold.
  Digit, speech, and activity labels are never used.}}

\paragraph{Methods and evaluation.}
{Every data set compares ordinary PCA with six alternatives under the
same fitting budget.  Mask-derived and direct diagonal attenuation retain the
original coordinate scales and select $\alpha$ from
$\{0,0.05,\ldots,1\}\cup\{0.49\}$, restricted to $[0,0.49]$ for the
mask-derived path.  Correlation PCA instead standardizes coordinates before
estimating the subspace, while HeteroPCA iteratively imputes the sample
diagonal.  Coordinate and block masked-target projectors select a mask rate
  from $\{0,0.25,0.50,0.75,0.90\}$, which includes PCA at zero.  {The
  structured coordinate groups are the three rows of an Optdigits patch,
  consecutive sets of 16 frequency bins for Phoneme, and consecutive sets of
  eight time points for HAR.}  Definitions and
optimization details are given in \cref{sec:supp-real-data}.}

{For replicate $b$, let $\widehat R_{b,{\rm test}}$ be a method's test
reconstruction mean squared error and let $\widehat R_{b,{\rm test}}^{\rm PCA}$
be the test PCA error.  {We compute $R_{{\rm test}}^{(d)}$ by centering the
official test observations and summing the smallest $p-d$ eigenvalues of
their empirical covariance; equivalently, it is the reconstruction error of
the best rank-$d$ affine subspace fitted to the test sample.}  We report the
percentage of PCA's test excess reconstruction risk removed,}
\begin{equation}\label{eq:real-data-gain}
 G_b
 =100\,
 \frac{\widehat R_{b,{\rm test}}^{\rm PCA}-\widehat R_{b,{\rm test}}}
 {\widehat R_{b,{\rm test}}^{\rm PCA}-R_{{\rm test}}^{(d)}}.
\end{equation}
{The test reference in the denominator only expresses the final error
reduction on a common scale, which is not used for model selection.
}

\begin{table}[!htbp]
\centering
\scriptsize
\setlength{\tabcolsep}{4.1pt}
\caption{{Comparison under the same fitting-sample budgets.  Entries are the mean percentage of PCA test excess reconstruction risk removed, so PCA is zero and larger is better.  DA denotes diagonal attenuation and MT denotes masked-target training.  The best method in each data--budget cell is bold.}}
\label{tab:main-real-data-comparison}
{
\begin{tabular}{@{}lrrrrrr@{}}
\toprule
& \multicolumn{2}{c}{Optdigits patches} & \multicolumn{2}{c}{Phoneme} & \multicolumn{2}{c}{HAR} \\
\cmidrule(lr){2-3}\cmidrule(lr){4-5}\cmidrule(lr){6-7}
Method & $N=64$ & $N=128$ & $N=64$ & $N=128$ & $N=64$ & $N=128$ \\
\midrule
PCA & $0.00$ & $0.00$ & $0.00$ & $0.00$ & $0.00$ & $0.00$ \\
\addlinespace[1pt]
Mask-derived DA & $4.32$ & $4.18$ & $0.84$ & $\mathbf{0.72}$ & $0.65$ & $0.93$ \\
Direct DA & $\mathbf{6.74}$ & $\mathbf{6.40}$ & $\mathbf{1.09}$ & $0.70$ & $\mathbf{0.76}$ & $\mathbf{1.20}$ \\
\addlinespace[1pt]
Correlation PCA & $5.84$ & $5.41$ & $-0.14$ & $-0.52$ & $0.36$ & $1.01$ \\
HeteroPCA & $-25.79$ & $-37.36$ & $0.04$ & $0.53$ & $-13.94$ & $-4.87$ \\
Coordinate MT & $-1.63$ & $-1.24$ & $0.93$ & $0.28$ & $0.56$ & $0.93$ \\
Block MT & $-3.58$ & $-1.27$ & $0.00$ & $0.00$ & $0.00$ & $0.00$ \\
\bottomrule
\end{tabular}}
\end{table}

\paragraph{Results.}
{\Cref{fig:fixed-budget-real-data,tab:main-real-data-comparison} compare
PCA with all six alternatives.  Both attenuation paths have positive mean
gains with intervals above zero in all six data--budget cells.  The image
experiment produces the largest improvements.  At $N=64$ and $128$,
mask-derived attenuation removes
 {$4.32\%$ and $4.18\%$} of PCA's test excess reconstruction risk, while direct
 attenuation removes {$6.74\%$ and $6.40\%$}.  On Phoneme and HAR, the
corresponding ranges are $0.65\%$--$0.93\%$ for the mask-derived path and
$0.70\%$--$1.20\%$ for the direct path.  The direct path selects
 $\alpha>1/2$ on {$94\%$--$95\%$} of the image-patch fits and
$63\%$--$80\%$ of the ordered-signal fits, so the mask-derived boundary is
active in every data set.  \Cref{tab:main-real-data-comparison} places these
gains beside all four comparison methods.  One of the two attenuation paths
has the largest mean gain in every data--budget cell: direct attenuation leads
in five cells, and mask-derived attenuation leads for Phoneme at $N=128$.
Correlation PCA is competitive on the image patches and HAR, and coordinate
masked-target training helps on three of the four ordered-signal cells, but
  neither matches this consistency across modalities.  {Exact intervals,
  selected strengths, and optimization details are reported in
  \cref{sec:supp-real-data}.}}

\section{Discussion}\label{sec:discussion}

{Input masking reveals a useful but incomplete family of PCA estimators.
Diagonal attenuation extends that family over the full path from ordinary PCA
to hollowed PCA, while preserving every sample cross-covariance.  The theory
explains its benefit through sampling leakage: errors in the coordinatewise
sample variances become random retained--omitted couplings in the population
principal-component basis, and attenuation can weaken those couplings before
they rotate the fitted subspace.}
{This mechanism also identifies the tradeoff.  Balanced marginal
variances yield an explicit expected-risk theorem because attenuation changes
sampling leakage without moving the population target or its eigengaps.  For a
general covariance, the same finite-sample reduction must be compared with
the attenuated eigengaps and, when the target moves, its population
reconstruction cost.  The Gaussian experiments verify this combined risk
description from exact target preservation to the point at which PCA becomes
the preferred endpoint.}

{More broadly, marginalizing an artificial corruption can expose the
deterministic estimator that it induces.  Here that step turns input masking
into a covariance path whose statistical effect can be studied directly.  The
main risk theory is fixed dimensional and Gaussian; the three real-data
studies test the mechanism across local image, frequency, and temporal
coordinates.  The supplement extends the balanced formula under finite-moment conditions and
records the distinct masked-target and unconstrained linear objectives.
Structured attenuation beyond a common diagonal strength is a natural
direction for future work.}

\FloatBarrier
\section*{AI use statement}
Generative-AI tools assisted with literature discovery and synthesis, proof
cross-checking, code review, and language editing.  The author verified the
cited primary sources, derivations, and reported computations and takes full
responsibility for the manuscript.

\begingroup
\small
\setlength{\bibsep}{0pt}
\bibliographystyle{icml2026}
\bibliography{paper}
\endgroup

\beginsupplement*

\paragraph{Overview of the supplement.}
{The supplement follows the argument of the main text.
\Cref{sec:supp-mask} proves the exact mask marginalization that constructs the
estimator.  \Cref{sec:supp-population-geometry} develops its population
geometry, and \Cref{sec:supp-general-risk} gives the proofs of the
target-preserving, target-moving, and balanced-marginal risk results.
\Cref{sec:supp-extensions-practical} extends the balanced theorem beyond
Gaussian observations and establishes the validation fallback guarantee.
\Cref{sec:supp-objective-boundaries} compares the full-output projector with
masked-target and unconstrained linear reconstruction objectives.  Finally,
\cref{sec:supp-numerics} reports the non-Gaussian stress test, full real-data
tables, and implementation details for the empirical comparisons.}

\paragraph{Guide to the proofs of the main results.}
{The following list maps each main result to its proof, in main-text
order.}
\begin{enumerate}[label=(\roman*),leftmargin=*,itemsep=2pt,topsep=2pt]
\item {\Cref{prop:reduction} is proved in \Cref{sec:supp-mask} by
expanding the mask-averaged reconstruction loss and reducing its optimization
to the leading eigenspace of $\bT_{n,\alpha}$.}
\item {\Cref{prop:target-preservation} and the population-bias bound
\eqref{eq:population-bias-bound} are proved in
\cref{sec:supp-population-geometry}.  The first uses the
$P_*\oplus P_*^\perp$ block decomposition; the second integrates the
derivative of the population projector along the attenuation path.}
\item {\Cref{lem:balanced-coupling-risk} is proved in
\Cref{sec:supp-balanced-pairs}.  Spectral-projector commutation converts each
residual cross-entry into a rotation coordinate, and the common projector
expansion converts those coordinates into clean reconstruction risk.}
\item {\Cref{thm:balanced-risk} is proved in
\Cref{sec:supp-balanced-proof}.  Gaussian pairwise moments make the variance of
each retained-to-omitted residual random coupling a quadratic function of
$\alpha$; their eigengap-weighted sum gives the theoretical risk and its
optimal mask-derived and direct strengths.}
\item {\Cref{thm:fixed-p} is proved in \Cref{sec:supp-fixed-p}, using
the common clean-risk expansion in \cref{sec:supp-two-matrix}.  The proof
converts cross-subspace covariance noise into projector rotation and then into
expected clean reconstruction risk.}
\item {The target-moving expansion \eqref{eq:target-moving-main} is
formalized and proved in \Cref{sec:supp-target-moving-proof}.  It expands
around $P_{\bT_\alpha}$ and adds the population cost of moving away from
$P_*$.}
\end{enumerate}

\supplementtableofcontents

\section{Mask marginalization and estimator construction}
\label{sec:supp-mask}

{This section proves the exact reduction from masked-input
reconstruction to diagonal attenuation used in \Cref{prop:reduction}.}

\begin{proof}[Proof of \Cref{prop:reduction}]
For any symmetric projector $P$ and diagonal mask $W$,
\begin{equation}
 \norm{x-PWx}_2^2
 =x^\trans x-2x^\trans PWx+x^\trans WPWx
 =x^\trans x-
 \tr\!\left[P\{Wxx^\trans+xx^\trans W-Wxx^\trans W\}\right].
 \label{eq:supp-single-mask}
\end{equation}
The second equality uses the fact that the two scalar cross terms are equal.
For independent Bernoulli-$q$ mask coordinates,
\begin{align*}
 \E(Wxx^\trans)&=qxx^\trans, \\
 \E(Wxx^\trans W)&=q^2xx^\trans+q(1-q)\diag(xx^\trans).
\end{align*}
Taking the expectation in \eqref{eq:supp-single-mask} therefore produces
\begin{align*}
 2qxx^\trans-q^2xx^\trans-q(1-q)\diag(xx^\trans)
 &=q(2-q)\left\{xx^\trans-
 \frac{1-q}{2-q}\diag(xx^\trans)\right\}.
\end{align*}
The positive prefactor $q(2-q)$ does not affect the optimizer for $q>0$.
After averaging over observations, minimizing the loss is equivalent to
maximizing $\tr(P\bT_{n,\alpha})$ over $P\in\cP_d$.  Ky Fan's variational
principle \citep{bhatia1997matrix} gives a leading rank-$d$ projector of
$\bT_{n,\alpha}$, proving \Cref{prop:reduction} and the mapping
$\alpha=(1-q)/(2-q)=m/(1+m)$.
\end{proof}

\section{Population geometry under attenuation}
\label{sec:supp-population-geometry}

{Diagonal attenuation can preserve the population PCA target exactly or move it
along the path $\bT_\alpha=\bSigma-\alpha\cdot\diag(\bSigma)$.  This section first
characterizes exact preservation, including the unequal-marginal example from
the main text, and then quantifies the population cost when preservation is
only approximate.}

\subsection{Exact preservation with unequal marginals}
\label{sec:supp-preserving-imbalance}

{Balanced marginal variances are sufficient but not necessary for exact
target preservation.  To verify the unequal design in
\eqref{eq:unequal-example}, note that its diagonal is
$D=\diag(4,4,2.5,2.5)$.  Let}
$u_1=(1,1,0,0)^\trans/\sqrt2$ and
$u_2=(0,0,1,1)^\trans/\sqrt2$.  The leading population subspace is
$\operatorname{span}\{u_1,u_2\}$, and $D$ preserves this subspace and its
complement.  {Direct calculation gives}
$\inf_{0\leq\alpha\leq1}\delta_\alpha=7/2-3/\sqrt2>1.37$, so the PCA target
{is retained along the full attenuation path.}

\begin{proof}[Proof of \Cref{prop:target-preservation}]
If
$P_* D P_*^\perp=0$, symmetry of $D$ implies that both
$\operatorname{range}(P_*)$ and its orthogonal complement are invariant
under $D$.  They are already invariant under $\bSigma$, so they are invariant
under $\bT_\alpha=\bSigma-\alpha D$.  This matrix is block diagonal in the
decomposition.  The condition $\delta_\alpha>0$ places every eigenvalue of the
target block above every eigenvalue of the residual block.  Its leading
rank-$d$ projector is therefore uniquely $P_*$.  Conversely, if
$P_*$ is the unique leading projector of $\bT_\alpha$, it is invariant
under both $\bT_\alpha$ and $\bSigma$.  Since $\alpha>0$, subtracting their
cross-block equations gives $P_* D P_*^\perp=0$; uniqueness gives
$\delta_\alpha>0$.
\end{proof}

\subsection{Population bias along the attenuation path}
\label{sec:supp-population-bias-path}

{To control the population cost, follow
$\bT_t=\bSigma-tD$, which begins at $P_{\bT_0}=P_*$.  If the cutoff eigengap
remains positive along this path, define}
\begin{equation}
 \label{eq:population-bias-bound}
 \underline g_\alpha
 :=\inf_{0\leq t\leq\alpha}
   \{\lambda_d(\bT_t)-\lambda_{d+1}(\bT_t)\}>0,
 \qquad
 \mathsf{B}_{\rm pop}(\alpha)
 \leq
 \frac{\alpha^2}{\underline g_\alpha}
 \sup_{0\leq t\leq\alpha}
 \norm{P_{\bT_t}^\perp DP_{\bT_t}}_{\F}^2.
\end{equation}
{Thus population movement remains small when the diagonal creates little
cross-subspace coupling along the path and the cutoff eigengap stays
separated.}

\begin{proof}[{Proof of the population-bias identity and
\eqref{eq:population-bias-bound}}]
We follow the attenuation path directly.  Write $P_t=P_{\bT_t}$ and
$B(t)=\tr\{\bSigma(P_*-P_t)\}$.  The positive cutoff gap makes the cluster
projector $P_t$ differentiable, even if eigenvalues repeat within the leading
or residual block.  At each $t$, choose orthonormal eigenbases
$\bT_t v_j(t)=\mu_j(t)v_j(t)$ for the two blocks.  The standard
spectral-projector derivative, which is basis invariant within repeated
eigenspaces, gives
\[
 P_t'
 =-\sum_{j\leq d<k}
 \frac{v_k(t)^\trans Dv_j(t)}{\mu_j(t)-\mu_k(t)}
 \{v_k(t)v_j(t)^\trans+v_j(t)v_k(t)^\trans\}.
\]
Because $P_t$ commutes with $\bT_t$,
$\tr(\bT_tP_t')=0$.  Using $\bSigma=\bT_t+tD$ therefore yields
\begin{align*}
 B'(t)
 &=-\tr(\bSigma P_t')
 =-t\tr(DP_t')
 =2t\sum_{j\leq d<k}
 \frac{\{v_k(t)^\trans Dv_j(t)\}^2}
      {\mu_j(t)-\mu_k(t)}.
\end{align*}
Since $B(0)=0$, integration gives the exact identity
\begin{equation}
 \label{eq:population-bias-path-identity}
 \mathsf B_{\rm pop}(\alpha)
 =2\int_0^\alpha t
 \sum_{j\leq d<k}
 \frac{\{v_k(t)^\trans Dv_j(t)\}^2}
      {\mu_j(t)-\mu_k(t)}\,dt.
\end{equation}
Every denominator is at least $\underline g_\alpha$, while the sum of the
squared numerators is
$\norm{(I_p-P_t)DP_t}_{\F}^2$.  Bounding the integrand by its supremum and
using $2\int_0^\alpha t\,dt=\alpha^2$ proves
\eqref{eq:population-bias-bound}.  In particular, if
$\bSigma u_k=\lambda_k u_k$, continuity of the integrand at zero gives
\[
 \mathsf B_{\rm pop}(\alpha)
 =\alpha^2\sum_{j\leq d<k}
 \frac{(u_k^\trans Du_j)^2}{\lambda_j-\lambda_k}
 +O(\alpha^3).
\]
This local expression shows explicitly why distant residual directions do not
require a separate tail penalty: their larger eigengaps reduce their rotation.
\end{proof}

{The next section uses this population decomposition in the expected-risk
theory: exact preservation removes the first term, while the target-moving
result retains it and quantifies the remaining sampling contribution around
$P_{\bT_\alpha}$.}

\section{Proofs of the finite-sample risk results}
\label{sec:supp-general-risk}

{The main text first uses individual retained-to-discarded rotations to
build intuition and then treats the full subspace.  Here we provide a common projector perturbation argument, which justifies both the target-preserving and target-moving risk theorems.  We then specialize the target-preserving coefficient to balanced marginal variances, where the sampling-leakage calculation becomes explicit.}

\subsection{A common clean-risk expansion}
\label{sec:supp-two-matrix}

{We first establish a two-matrix projector expansion that applies whether
the population target is preserved or moves.}

\begin{lemma}[Clean risk around a possibly moving target]
\label{lem:two-matrix-risk-expansion}
Let $A$ and $B$ be symmetric matrices, let $P=P_A$, and suppose that the cutoff
gap of $A$ is at least $\gamma>0$.  {Let $P^\perp=I_p-P$ denote the
complementary projector and set
$\mathbb X=\{X:X=P^\perp XP\}$.}  On $\mathbb X$, define
\begin{equation}\label{eq:supp-two-operators}
 \mathcal L_A(X)=X(PAP)-(P^\perp A P^\perp)X,
 \qquad
 \mathcal H_B(X)=X(PBP)-(P^\perp B P^\perp)X.
\end{equation}
For symmetric $F$, define
\begin{align}
 Y_{1,F}&=\mathcal L_A^{-1}(P^\perp FP),
 \label{eq:supp-moving-Y1}\\
 Y_{2,F}&=\mathcal L_A^{-1}\{P^\perp F P^\perp Y_{1,F}-Y_{1,F}PFP\}.
 \label{eq:supp-moving-Y2}
\end{align}
There is a finite constant $C_{A,B,d}$ such that every symmetric $F$ with
$\norm{F}_{\op}\leq\gamma/4$ satisfies
\begin{align}
 \tr\{B(P-P_{A+F})\}
 &=-2\langle P^\perp BP,Y_{1,F}\rangle_{\F}
   +\langle Y_{1,F},\mathcal H_B(Y_{1,F})\rangle_{\F} \nonumber\\
 &\quad
   -2\langle P^\perp BP,Y_{2,F}\rangle_{\F}
   +R_{A,B}^{\rm mov}(F),
 \label{eq:supp-moving-two-matrix-expansion}\\
 |R_{A,B}^{\rm mov}(F)|
 &\leq C_{A,B,d}\norm{F}_{\op}^3.
 \label{eq:supp-moving-two-matrix-remainder}
\end{align}
{If $P^\perp BP=0$, this reduces to}
\begin{align}
 \tr\{B(P-P_{A+F})\}
 &=\left\langle X_F,\mathcal H_B(X_F)\right\rangle_{\F}
   +R_{A,B}(F),
 \label{eq:supp-two-matrix-expansion}\\
 X_F&=Y_{1,F},
 \qquad
 |R_{A,B}(F)|\leq C_{A,B,d}\norm{F}_{\op}^3.
 \label{eq:supp-two-matrix-remainder}
\end{align}
The constant can be chosen uniformly over a compact family of pairs $(A,B)$
whose norms are bounded and whose relevant cutoff gaps are bounded below.
\end{lemma}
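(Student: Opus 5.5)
The plan is to parametrize the perturbed projector $P_{A+F}$ as a graph over $\operatorname{range}(P)$ and expand it to second order in $F$. Since $\norm{F}_{\op}\le\gamma/4$, Weyl's inequality leaves a cutoff gap of at least $\gamma/2$ for $A+F$. The Davis--Kahan $\sin\Theta$ theorem then gives $\norm{P_{A+F}-P}_{\op}\le 2\norm{F}_{\op}/\gamma\le 1/2<1$. Consequently $\operatorname{range}(P_{A+F})$ is the graph of a unique $Y\in\mathbb X$, namely $\operatorname{range}(P_{A+F})=\{(I+Y)v: v\in\operatorname{range}P\}$. In this representation
\[
 P_{A+F}=(P+Y)(P+Y^\trans Y)^{-1}(P+Y^\trans),
\]
with the inverse taken on $\operatorname{range}P$.

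Invariance of this graph under $A+F$ is equivalent to the Riccati equation
\[
 \mathcal L_A(Y)=P^\perp FP+P^\perp FP^\perp Y-YPFP-YPFP^\perp Y .
\]
To solve it, I would write $\mathcal L_A$ in eigenbases of the blocks $PAP$ and $P^\perp AP^\perp$. In these bases it acts entrywise by multiplication by the block eigengaps, each of which is at least $\gamma$, so $\norm{\mathcal L_A^{-1}}\le 1/\gamma$ in Frobenius norm. A contraction argument, or simply the a priori bound $\norm{Y}\lesssim\norm{F}/\gamma$ from Davis--Kahan substituted back into the equation, then yields
\[
 Y=Y_{1,F}+Y_{2,F}+O(\norm{F}^3),
\]
with $Y_{1,F}=O(\norm{F})$ and $Y_{2,F}=O(\norm{F}^2)$ as defined in \eqref{eq:supp-moving-Y1}--\eqref{eq:supp-moving-Y2}. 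The Riccati term $YPFP^\perp Y$ is already cubic, so it contributes only to the remainder. All norms are equivalent because $p$ is fixed, which lets the constants depend on $d$ and $p$.

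Next I would expand the risk functional. From the graph formula,
\[
 P_{A+F}-P=Y+Y^\trans+YY^\trans-Y^\trans Y+O(\norm{Y}^3).
\]
The $Y^\trans Y$ term comes from $(P+Y^\trans Y)^{-1}=P-Y^\trans Y+\cdots$, and the $YY^\trans$ term is the $P^\perp$ block. Hence
\[
 \tr\{B(P-P_{A+F})\}=-2\langle P^\perp BP,Y\rangle_{\F}-\tr(BYY^\trans)+\tr(BY^\trans Y)+O(\norm{B}\norm{Y}^3).
\]
Since $Y=P^\perp YP$, the two quadratic traces combine to
\[
 \tr(PBP\,Y^\trans Y)-\tr(P^\perp BP^\perp\,YY^\trans)=\langle Y,\mathcal H_B(Y)\rangle_{\F}.
\]
Substituting $Y=Y_1+Y_2+O(\norm{F}^3)$ into the linear term gives $-2\langle P^\perp BP,Y_1+Y_2\rangle$. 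In the quadratic term, replacing $Y$ by $Y_1$ costs $O(\norm{F}^3)$. Together these give \eqref{eq:supp-moving-two-matrix-expansion}. When $P^\perp BP=0$ the linear terms vanish, which is the reduced form \eqref{eq:supp-two-matrix-expansion}.

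The constants are explicit in $\gamma^{-1}$, $\norm{A}$, $\norm{B}$ and $p$, which gives uniformity over compact families of pairs $(A,B)$.

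I expect the main obstacle to be the rigorous cubic remainder of the Riccati solution. This requires existence and uniqueness of a small solution together with the bound $\norm{Y-Y_1-Y_2}\le C\norm{F}^3$. I would handle it by a fixed-point argument on the ball $\norm{Y}\le 4\norm{F}/\gamma$ (Frobenius), using $\norm{F}\le\gamma/4$ to obtain the contraction. Uniqueness of the small solution then identifies it with the graph of $P_{A+F}$, since that graph satisfies the equation and obeys the Davis--Kahan bound.
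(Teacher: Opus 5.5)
Your proposal is correct and follows the paper's proof essentially step for step: the graph parametrization of $\operatorname{range}(P_{A+F})$, the same Riccati equation, one substitution giving $Y=Y_{1,F}+Y_{2,F}+O(\norm{F}_{\op}^3)$, and a second-order expansion of the graph projector whose quadratic traces combine into $\langle Y,\mathcal H_B(Y)\rangle_{\F}$. Your Davis--Kahan step also supplies the graph representation that the paper attributes to Weyl's inequality alone.

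The fixed-point argument you flag as the main obstacle is unnecessary, because the Davis--Kahan graph already solves the Riccati equation with $\norm{Y}_{\op}=O(\norm{F}_{\op}/\gamma)$, and that a priori bound is all the substitution uses. As you set it up, the fixed-point argument would not close anyway: a Frobenius ball of radius $4\norm{F}_{\op}/\gamma$ need not contain $\mathcal L_A^{-1}(P^\perp FP)$, whose Frobenius norm can be as large as $\sqrt{\min(d,p-d)}\,\norm{F}_{\op}/\gamma$.
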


\begin{proof}[Proof of \Cref{lem:two-matrix-risk-expansion}]
{The gap makes $\mathcal L_A$ invertible on $\mathbb X$, with inverse
norm at most $1/\gamma$.  Weyl's inequality also keeps the leading cluster of
$A+F$ separated.  Its range is therefore the graph of a unique map
$Y:\operatorname{range}(P)\to\operatorname{range}(P^\perp)$ with
$\norm{Y}_{\op}=O(\norm{F}_{\op}/\gamma)$.  The block eigenvalue equation for
this graph is the Riccati equation}
\begin{equation}\label{eq:supp-riccati}
 \mathcal L_A(Y)
 =P^\perp FP+P^\perp F P^\perp Y-YPFP-YPF P^\perp Y.
\end{equation}
Applying $\mathcal L_A^{-1}$, substituting the first-order approximation into
the two quadratic terms, and using the preceding bound gives
\begin{equation}\label{eq:supp-graph-linearization}
 Y=Y_{1,F}+Y_{2,F}+O(\norm{F}_{\op}^3).
\end{equation}
All constants here depend only on the fixed matrices and the gap.

{Relative to
$\operatorname{range}(P)\oplus\operatorname{range}(P^\perp)$, the projector
onto the graph of $Y$ has blocks}
\begin{equation}\label{eq:supp-graph-projector-blocks}
 P_{A+F}
 =\begin{pmatrix}
 (I+Y^\trans Y)^{-1}&(I+Y^\trans Y)^{-1}Y^\trans\\
 Y(I+Y^\trans Y)^{-1}&Y(I+Y^\trans Y)^{-1}Y^\trans
 \end{pmatrix}.
\end{equation}
Expanding these blocks to second order and using
\eqref{eq:supp-graph-linearization} gives
\begin{align*}
 \tr\{B(P-P_{A+F})\}
 &=-2\langle P^\perp BP,Y_{1,F}+Y_{2,F}\rangle_{\F}
     +\langle Y_{1,F},\mathcal H_B(Y_{1,F})\rangle_{\F}
      +O(\norm{F}_{\op}^3),
\end{align*}
{This proves \eqref{eq:supp-moving-two-matrix-expansion}.  When
$P^\perp BP=0$, the
two off-diagonal terms vanish and
\eqref{eq:supp-two-matrix-expansion} follows.  The graph, Riccati, and inverse
bounds are uniform over the compact family stated in the lemma.}
\end{proof}

\subsection{Target-preserving expected risk}\label{sec:supp-fixed-p}

{We now specialize the common expansion to attenuation paths that preserve the population PCA target.}

\begin{proof}[Proof of \Cref{thm:fixed-p}]
For $\alpha\in\mathcal I$, put
\begin{equation}\label{eq:supp-general-score-decomposition}
 A_\alpha=\bT_\alpha,
 \qquad
 F_{n,\alpha}=E_n-\alpha\diag(E_n),
 \qquad
 E_n=\bS_n-\bSigma.
\end{equation}
By \cref{ass:uniform-target}, $A_\alpha$ and $\bSigma$ have the same
leading projector $P=P_*$, and the cutoff gap of $A_\alpha$ is at least
$\underline\delta_{\mathcal I}$.  The closure of
$\{A_\alpha:\alpha\in\mathcal I\}$ is compact; continuity of the block gap
and the strictly positive infimum preserve the same projector and gap bound on
this closure.  Hence
\cref{lem:two-matrix-risk-expansion} applies with one uniform remainder
constant whenever
$\norm{F_{n,\alpha}}_{\op}\leq
\underline\delta_{\mathcal I}/4$.

The linear rotation term in that lemma is
\begin{align}
 X_{n,\alpha}
 &=\mathcal L_\alpha^{-1}(P^\perp F_{n,\alpha}P)
  =\frac1n\sum_{i=1}^n
 \mathcal L_\alpha^{-1}Z_\alpha(\bx_i),
 \label{eq:supp-general-linear-rotation}
\end{align}
where the summands are independent and centered.  Cross terms therefore
vanish in expectation, giving the exact identity
\begin{equation}\label{eq:supp-general-leading-expectation}
 n\E\left\langle
 X_{n,\alpha},\mathcal H_{\bSigma}(X_{n,\alpha})
 \right\rangle_{\F}
 =\mathcal K(\alpha).
\end{equation}

For fixed $p$, Gaussian covariance concentration gives
$\E\norm{E_n}_{\op}^r=O(n^{-r/2})$ for every fixed $r$.  Since
$\norm{\diag(E_n)}_{\op}\leq\norm{E_n}_{\op}$ and
$0\leq\alpha\leq1$,
\begin{equation}\label{eq:supp-general-perturbation-bound}
 \sup_{\alpha\in\mathcal I}\norm{F_{n,\alpha}}_{\op}
 \leq2\norm{E_n}_{\op}.
\end{equation}
On the event
$\norm{E_n}_{\op}\leq\underline\delta_{\mathcal I}/8$, the cubic remainder
in \cref{lem:two-matrix-risk-expansion} thus has expectation
$O(n^{-3/2})$, uniformly in $\alpha$.  The complement has exponentially small
probability.  Both the clean excess risk and the quadratic leading term have
finite Gaussian moments, so Cauchy--Schwarz makes their contribution on that
complement exponentially smaller than $n^{-3/2}$.  Combining these facts with
\eqref{eq:supp-general-leading-expectation} gives
\[
 \sup_{\alpha\in\mathcal I}
 \left|\E\Rex(\widehat P_\alpha)
 -\frac{\mathcal K(\alpha)}n\right|=O(n^{-3/2}).
\]
Multiplying by $n$ and enlarging the constant to cover finitely many initial
sample sizes proves \eqref{eq:uniform-remainder}.
\end{proof}

\subsection{Total risk around a moving population target}
\label{sec:target-moving-risk}
\label{sec:supp-target-moving-proof}

{We now remove target preservation.  Let
$P_\alpha=P_{\bT_\alpha}$ and suppose that the}
cutoff gap of $\bT_\alpha$ remains positive over a compact interval
$\mathcal I\subset[0,1]$ containing zero:
\begin{equation}\label{eq:moving-gap}
 g_{\mathcal I}
 =\inf_{\alpha\in\mathcal I}
 \{\lambda_d(\bT_\alpha)-\lambda_{d+1}(\bT_\alpha)\}>0.
\end{equation}
{The theorem separates the deterministic cost of moving from $P_*$ from
the finite-sample reconstruction effect of sampling-induced rotation around
$P_\alpha$.}  The centered first-order sampling rotation
has mean zero.  At second order, both its variance and its average displacement
matter because clean reconstruction risk is measured relative to $P_*$ rather
than $P_\alpha$.
For one observation, write the centered attenuated fluctuation as
\begin{equation}\label{eq:moving-fluctuation}
 \Xi_\alpha(x)
 =xx^\trans-\bSigma
 -\alpha\diag(xx^\trans-\bSigma).
\end{equation}
On the cross-subspace maps
$\mathbb X_\alpha=\{X:X=P_\alpha^\perp XP_\alpha\}$, define
\begin{align}
 \mathcal L_\alpha^{\rm mov}(X)
 &=X(P_\alpha\bT_\alpha P_\alpha)
   -(P_\alpha^\perp\bT_\alpha P_\alpha^\perp)X,
 \label{eq:moving-gap-operator}\\
 \mathcal H_{\bSigma,\alpha}(X)
 &=X(P_\alpha\bSigma P_\alpha)
   -(P_\alpha^\perp\bSigma P_\alpha^\perp)X.
 \label{eq:moving-clean-operator}
\end{align}
The first two graph perturbations generated by $\Xi_\alpha(x)$ are
\begin{align}
 Y_{1,\alpha}(x)
 &=\{\mathcal L_\alpha^{\rm mov}\}^{-1}
   \{P_\alpha^\perp\Xi_\alpha(x)P_\alpha\},
 \label{eq:moving-Y1}\\
 Y_{2,\alpha}(x)
 &=\{\mathcal L_\alpha^{\rm mov}\}^{-1}
 \left\{
 P_\alpha^\perp\Xi_\alpha(x)P_\alpha^\perp Y_{1,\alpha}(x)
 -Y_{1,\alpha}(x)P_\alpha\Xi_\alpha(x)P_\alpha
 \right\}.
 \label{eq:moving-Y2}
\end{align}
Define the signed sampling coefficient
\begin{equation}\label{eq:moving-risk-constant}
 \mathcal K_{\rm move}(\alpha)
 =\E\left[
 \left\langle
 Y_{1,\alpha},
 \mathcal H_{\bSigma,\alpha}(Y_{1,\alpha})
 \right\rangle_{\F}
 -2\left\langle
 P_\alpha^\perp\bSigma P_\alpha,Y_{2,\alpha}
 \right\rangle_{\F}
 \right].
\end{equation}
The second term is absent under target preservation.  When the population
target moves, it records whether the average second-order sampling displacement
points toward or away from the clean PCA subspace; consequently,
$\mathcal K_{\rm move}(\alpha)$ need not be nonnegative by itself.

\begin{theorem}[Total risk with population target movement]
\label{thm:target-moving-risk}
Under \cref{ass:gaussian-sampling,ass:target-gap} and
\eqref{eq:moving-gap}, there are finite constants $C$ and $N$ such that
\begin{equation}\label{eq:moving-total-risk}
 \sup_{\alpha\in\mathcal I}
 \left|
 \E\Rex(\widehat P_\alpha)
 -\mathsf B_{\rm pop}(\alpha)
 -\frac{\mathcal K_{\rm move}(\alpha)}{n}
 \right|
 \leq \frac{C}{n^{3/2}},
 \qquad n\geq N.
\end{equation}
\end{theorem}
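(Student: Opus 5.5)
The plan is to apply \cref{lem:two-matrix-risk-expansion} with $A=\bT_\alpha$, $B=\bSigma$ and $F=F_{n,\alpha}=E_n-\alpha\diag(E_n)$, so that $P=P_\alpha$ and $A+F=\bT_{n,\alpha}$. First split the excess risk exactly as
\[
 \Rex(\widehat P_\alpha)=\tr\{\bSigma(P_*-P_\alpha)\}+\tr\{\bSigma(P_\alpha-P_{\bT_{n,\alpha}})\}
 =\mathsf B_{\rm pop}(\alpha)+\tr\{\bSigma(P_\alpha-P_{A+F})\}.
\]
The first term is deterministic, so only the second needs an expansion. Uniformity comes from compactness. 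The map $\alpha\mapsto\bT_\alpha$ is continuous, and the gap in \eqref{eq:moving-gap} is at least $g_{\mathcal I}>0$. Hence $\alpha\mapsto P_\alpha$ is continuous on the compact set $\mathcal I$, and the family $\{(\bT_\alpha,\bSigma):\alpha\in\mathcal I\}$ is compact with gaps bounded below. The lemma therefore gives a single constant $C$ such that, whenever $\norm{F_{n,\alpha}}_{\op}\le g_{\mathcal I}/4$, the second term equals
\[
 -2\langle P_\alpha^\perp\bSigma P_\alpha,Y_{1,F}+Y_{2,F}\rangle_{\F}+\langle Y_{1,F},\mathcal H_{\bSigma,\alpha}Y_{1,F}\rangle_{\F}+O(\norm{F}_{\op}^3).
\]

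Next take expectations term by term.
\begin{itemize}
\item $Y_{1,F}=n^{-1}\sum_i Y_{1,\alpha}(\bx_i)$ is linear in the centered summands $\Xi_\alpha(\bx_i)$, so its linear contribution has zero mean.
\item The quadratic form in $Y_{1,F}$ has expectation $n^{-1}\E\langle Y_{1,\alpha},\mathcal H_{\bSigma,\alpha}Y_{1,\alpha}\rangle_{\F}$, because cross terms between independent centered summands vanish.
\item $Y_{2,F}$ is bilinear in $F$. Writing $F=n^{-1}\sum_i\Xi_\alpha(\bx_i)$ gives $Y_{2,F}=n^{-2}\sum_{i,i'}(\cdot)$, where each summand is bilinear in $(\Xi_\alpha(\bx_i),\Xi_\alpha(\bx_{i'}))$. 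Terms with $i\ne i'$ have zero mean by independence and centering. The $n$ diagonal terms give $\E Y_{2,F}=n^{-1}\E Y_{2,\alpha}(\bx)$.
\end{itemize}
Together these produce exactly $\mathcal K_{\rm move}(\alpha)/n$, with no remainder at order $n^{-1}$.

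For the remainder, bound $\norm{F_{n,\alpha}}_{\op}\le2\norm{E_n}_{\op}$ uniformly in $\alpha$, and use the Gaussian moment bound $\E\norm{E_n}_{\op}^3=O(n^{-3/2})$. On the good event $\norm{E_n}_{\op}\le g_{\mathcal I}/8$, the lemma's cubic remainder then contributes $O(n^{-3/2})$, uniformly in $\alpha$.

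The step I expect to be the main obstacle is the complement of that event. There the expansion is not valid, yet we must still control the expectations of the explicit first- and second-order terms as well as the true risk term. The true risk term is bounded by $2\tr(\bSigma)$. The expansion terms are polynomial in $E_n$ with coefficients bounded uniformly over $\mathcal I$, since $\norm{\{\mathcal L_\alpha^{\rm mov}\}^{-1}}\le 1/g_{\mathcal I}$. The complement has probability $e^{-cn}$, so Cauchy--Schwarz makes all of these contributions exponentially small. One subtlety is ties in $\bT_{n,\alpha}$ on the bad event: $P_{\bT_{n,\alpha}}$ is then defined by the fixed tie rule and remains a rank-$d$ projector, so the crude bound $2\tr(\bSigma)$ still applies. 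Finally, taking $N$ large enough makes the combined error at most $C/n^{3/2}$ for $n\ge N$, which is \eqref{eq:moving-total-risk}.
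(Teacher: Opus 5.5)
Your proposal is correct and follows the paper's proof essentially step for step. It uses the same exact split $\Rex(\widehat P_\alpha)=\mathsf B_{\rm pop}(\alpha)+\tr\{\bSigma(P_\alpha-P_{\bT_{n,\alpha}})\}$, the two-matrix expansion with $A=\bT_\alpha$ and $B=\bSigma$, the centering argument over the double sum for the linear, quadratic and $Y_2$ terms, and the uniform good-event/complement control of the cubic remainder. Your explicit handling of the bad event (Cauchy--Schwarz on the expansion polynomials, and noting that the tie rule keeps $\widehat P_\alpha$ a rank-$d$ projector) only fills in details that the paper delegates to its earlier Gaussian complement-event argument.
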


\begin{proof}[Proof of \Cref{thm:target-moving-risk}]
For $\alpha\in\mathcal I$, write
\begin{equation}\label{eq:supp-moving-decomposition}
 A_\alpha=\bT_\alpha,
 \qquad
 F_{n,\alpha}=\frac1n\sum_{i=1}^n\Xi_\alpha(\bx_i),
 \qquad
 P_\alpha=P_{A_\alpha}.
\end{equation}
The excess risk has the exact decomposition
\begin{equation}\label{eq:supp-moving-exact-decomposition}
 \Rex(\widehat P_\alpha)
 =\mathsf B_{\rm pop}(\alpha)
 +\tr\{\bSigma(P_\alpha-P_{A_\alpha+F_{n,\alpha}})\}.
\end{equation}
Apply \cref{lem:two-matrix-risk-expansion} to the second term with
$A=A_\alpha$, $B=\bSigma$, and $P=P_\alpha$.  Let
$Y_{1,n,\alpha}$ and $Y_{2,n,\alpha}$ denote the two graph terms in
\cref{eq:supp-moving-Y1,eq:supp-moving-Y2}.  Linearity gives
\begin{equation}\label{eq:supp-moving-Y1-average}
 Y_{1,n,\alpha}
 =\frac1n\sum_{i=1}^nY_{1,\alpha}(\bx_i),
 \qquad
 \E Y_{1,n,\alpha}=0.
\end{equation}
The quadratic graph term expands into a double sum.  Independence and
centering remove every summand with distinct observation indices, leaving
\begin{equation}\label{eq:supp-moving-Y2-average}
 \E Y_{2,n,\alpha}
 =\frac1n\E Y_{2,\alpha}(\bx).
\end{equation}
The same argument applied to the clean quadratic term gives
\begin{equation}\label{eq:supp-moving-quadratic-average}
 \E\left\langle
 Y_{1,n,\alpha},
 \mathcal H_{\bSigma,\alpha}(Y_{1,n,\alpha})
 \right\rangle_{\F}
 =\frac1n\E\left\langle
 Y_{1,\alpha},
 \mathcal H_{\bSigma,\alpha}(Y_{1,\alpha})
 \right\rangle_{\F}.
\end{equation}
The expectation of the linear graph term is zero.  Combining
\cref{eq:supp-moving-Y2-average,eq:supp-moving-quadratic-average} with
\eqref{eq:supp-moving-two-matrix-expansion} therefore identifies the complete
second-order expectation as $\mathcal K_{\rm move}(\alpha)/n$.

It remains to control the remainder uniformly.  The gap condition
\eqref{eq:moving-gap} and compactness of $\{A_\alpha:\alpha\in\mathcal I\}$
give one graph-expansion constant for the whole interval.  Moreover,
\begin{equation}\label{eq:supp-moving-perturbation-bound}
 \sup_{\alpha\in\mathcal I}\norm{F_{n,\alpha}}_{\op}
 \leq2\norm{\bS_n-\bSigma}_{\op}.
\end{equation}
The Gaussian moment and complement-event argument used in
\eqref{eq:supp-general-perturbation-bound} consequently bounds the expected
cubic remainder by $O(n^{-3/2})$, uniformly in $\alpha$.  Substitution into
\eqref{eq:supp-moving-exact-decomposition} proves
\eqref{eq:moving-total-risk}.  
{When $P_\alpha=P_*$, we have
$P_\alpha^\perp\bSigma P_\alpha=0$; the second graph contribution vanishes and}
$\mathcal K_{\rm move}(\alpha)=\mathcal K(\alpha)$, recovering
\cref{thm:fixed-p}.
\end{proof}

The theorem gives a direct comparison with PCA:
\begin{equation}\label{eq:moving-benefit-comparison}
 \E\Rex(\widehat P_\alpha)-\E\Rex(\widehat P_0)
 =\mathsf B_{\rm pop}(\alpha)
 +\frac{\mathcal K_{\rm move}(\alpha)-\mathcal K(0)}{n}
 +O(n^{-3/2}).
\end{equation}
Thus a fixed strength is beneficial to first order at sample size $n$ when
\begin{equation}\label{eq:moving-benefit-condition}
 n\mathsf B_{\rm pop}(\alpha)
 <\mathcal K(0)-\mathcal K_{\rm move}(\alpha).
\end{equation}
If $\mathsf B_{\rm pop}(\alpha)>0$ for a fixed $\alpha$, this inequality
eventually fails as $n$ grows.  More generally, if
$\mathsf B_{\rm pop}$ has the unique zero $\alpha=0$ on $\mathcal I$, every
oracle minimizer of the expected risk converges to zero.  This is the regime in
which attenuation is a genuinely finite-sample correction rather than a fixed
population transformation.

The small-$\alpha$ form makes the return to PCA more explicit.  With
$\bSigma u_j=\lambda_ju_j$, the population term satisfies
\begin{equation}\label{eq:moving-bias-local}
 \mathsf B_{\rm pop}(\alpha)
 =\alpha^2\mathsf C_D+O(\alpha^3),
 \qquad
 \mathsf C_D=
 \sum_{j\leq d<k}
 \frac{(u_k^\trans Du_j)^2}{\lambda_j-\lambda_k}.
\end{equation}
If $\mathsf C_D>0$ and
$\mathsf R_{\rm move}:=-\mathcal K_{\rm move}'(0)>0$, then the leading
approximation in \eqref{eq:moving-benefit-comparison} is minimized locally at
\begin{equation}\label{eq:moving-local-oracle}
 \widetilde\alpha_n
 =\frac{\mathsf R_{\rm move}}{2\mathsf C_D\,n}
 +o(n^{-1}).
\end{equation}
Equation \eqref{eq:moving-local-oracle} concerns the minimizer of the displayed leading approximation; the exact oracle convergence to zero follows directly from the uniform expansion above.

{We next return to balanced marginals, where the target-preserving risk coefficient can be evaluated explicitly.}

\subsection{Balanced marginals and the explicit risk curve}
\label{sec:supp-balanced-risk}

{We now make the target-preserving risk coefficient explicit under
balanced marginal variances.}

\subsubsection{Pairwise form under balanced marginals}
\label{sec:supp-balanced-pairs}

{Under balanced marginals, recall the pairwise variables
$a_{jk}(\bx)$ and $b_{jk}(\bx)$ from
\eqref{eq:single-observation-scores-main}.  To lighten notation in this
subsection, write $a_{jk}=a_{jk}(\bx)$ and $b_{jk}=b_{jk}(\bx)$.  Attenuation
replaces the sample average of $a_{jk}$ by that of
$a_{jk}-\alpha b_{jk}$.  For the moment calculations below, put}
\begin{equation}\label{eq:overlap-defs}
 w_{jk}=u_j\Had u_k,
 \qquad
 s_{jk}=\norm{w_{jk}}_2^2,
 \qquad
 h_{jk}=2w_{jk}^\trans(\bSigma\Had\bSigma)w_{jk}.
\end{equation}
Here $s_{jk}$ measures how strongly the two population directions load on the
same observed coordinates.  Isserlis' Gaussian fourth-moment identity
\citep{isserlis1918formula} gives
\begin{equation}\label{eq:single-observation-moments}
 \var(a_{jk})=\lambda_j\lambda_k,
 \quad
 \Cov(a_{jk},b_{jk})=2\lambda_j\lambda_k s_{jk},
 \quad
 \var(b_{jk})=h_{jk}.
\end{equation}

\begin{proof}[Proof of \Cref{lem:balanced-coupling-risk}]
{Fix $\alpha\in[0,1]$ and write
$F_{n,\alpha}=\bT_{n,\alpha}-\bT_\alpha$ and
$\Delta P_\alpha=\widehat P_\alpha-P_*$.  Gaussian covariance concentration
and the fixed cutoff gap, together with the Davis--Kahan theorem, give}
\begin{equation}\label{eq:supp-balanced-perturbation-orders}
 \norm{F_{n,\alpha}}_{\op}=O_{\mathbb P}(n^{-1/2}),
 \qquad
 \norm{\Delta P_\alpha}_{\op}=O_{\mathbb P}(n^{-1/2}).
\end{equation}
{Because $\widehat P_\alpha$ is a spectral projector of
$\bT_{n,\alpha}$, the two matrices commute.  Substituting
$\bT_{n,\alpha}=\bT_\alpha+F_{n,\alpha}$ and
$\widehat P_\alpha=P_*+\Delta P_\alpha$, and using
$\bT_\alpha P_*=P_*\bT_\alpha$, yields}
\begin{align}
0
&=\bT_{n,\alpha}\widehat P_\alpha
  -\widehat P_\alpha\bT_{n,\alpha}\nonumber\\
&=(\bT_\alpha P_*-P_*\bT_\alpha)
  +(\bT_\alpha\Delta P_\alpha-\Delta P_\alpha\bT_\alpha)\nonumber\\
&\quad +(F_{n,\alpha}P_*-P_*F_{n,\alpha})
  +(F_{n,\alpha}\Delta P_\alpha-\Delta P_\alpha F_{n,\alpha})\nonumber\\
&=\bT_\alpha\Delta P_\alpha-\Delta P_\alpha\bT_\alpha
  +F_{n,\alpha}P_*-P_*F_{n,\alpha}
  +O_{\mathbb P}(n^{-1}).
\label{eq:supp-balanced-projector-commutation}
\end{align}
{The first parenthesis vanishes because $P_*$ is a spectral projector of
$\bT_\alpha$; the last is $O_{\mathbb P}(n^{-1})$ by
\eqref{eq:supp-balanced-perturbation-orders}.  Under balance,
$\bT_\alpha u_r=(\lambda_r-\alpha\tau)u_r$.  Hence}
\begin{align*}
u_k^\trans(\bT_\alpha\Delta P_\alpha
 -\Delta P_\alpha\bT_\alpha)u_j
&=-(\lambda_j-\lambda_k)
  u_k^\trans\Delta P_\alpha u_j,\\
u_k^\trans(F_{n,\alpha}P_*-P_*F_{n,\alpha})u_j
&=u_k^\trans F_{n,\alpha}u_j,
\end{align*}
{where the second equality uses $P_*u_j=u_j$ and
$u_k^\trans P_*=0$.  Premultiplying
\eqref{eq:supp-balanced-projector-commutation} by $u_k^\trans$ and
postmultiplying by $u_j$ therefore gives}
\begin{equation*}
 0=-(\lambda_j-\lambda_k)
 u_k^\trans\Delta P_\alpha u_j
 +u_k^\trans F_{n,\alpha}u_j+O_{\mathbb P}(n^{-1}).
\end{equation*}
{{Since
$u_k^\trans F_{n,\alpha}u_j
=\overline a_{jk}-\alpha\overline b_{jk}$, solving the preceding identity and
using \eqref{eq:balanced-leading-rotation},
then summing over the finitely many pairs $j\leq d<k$, proves
\eqref{eq:balanced-rotation-relation}; the matrix remainder is
$O_{\mathbb P}(n^{-1})$ in Frobenius norm because $p$ is fixed.}}

{Finally, adding the scalar matrix $\alpha\tau I_p$ does not change
spectral projectors.  Hence $\widehat P_\alpha$ is also the leading projector
of $\bSigma+F_{n,\alpha}$.  We may therefore apply
\cref{lem:two-matrix-risk-expansion} with $A=B=\bSigma$ and $P=P_*$.  Because
$P_*^\perp\bSigma P_*=0$, the simplified conclusion
\eqref{eq:supp-two-matrix-expansion} applies.  To evaluate its quadratic term,
expand the cross-block perturbation in the population eigenbasis:}
\begin{equation*}
 P_*^\perp F_{n,\alpha}P_*
 =\sum_{j\leq d<k}
   (u_k^\trans F_{n,\alpha}u_j)u_ku_j^\trans.
\end{equation*}
{For each basis matrix $u_ku_j^\trans$, the operator in
\eqref{eq:supp-two-operators} satisfies}
\begin{equation*}
 \mathcal L_{\bSigma}(u_ku_j^\trans)
 =\mathcal H_{\bSigma}(u_ku_j^\trans)
 =(\lambda_j-\lambda_k)u_ku_j^\trans.
\end{equation*}
{Consequently, writing $f_{kj}=u_k^\trans F_{n,\alpha}u_j$,}
\begin{align*}
 X_{F_{n,\alpha}}
 &=\mathcal L_{\bSigma}^{-1}(P_*^\perp F_{n,\alpha}P_*)
   =\sum_{j\leq d<k}
     \frac{f_{kj}}{\lambda_j-\lambda_k}u_ku_j^\trans,\\
 \left\langle X_{F_{n,\alpha}},
 \mathcal H_{\bSigma}(X_{F_{n,\alpha}})\right\rangle_{\F}
 &=\sum_{j\leq d<k}
   \frac{f_{kj}^2}{\lambda_j-\lambda_k},
\end{align*}
{where the last identity uses the Frobenius orthonormality of
$\{u_ku_j^\trans:j\leq d<k\}$.  Thus
\eqref{eq:supp-two-matrix-expansion} gives}
\begin{equation*}
 \Rex(\widehat P_\alpha)
 =\sum_{j\leq d<k}
   \frac{(u_k^\trans F_{n,\alpha}u_j)^2}{\lambda_j-\lambda_k}
   +O_{\mathbb P}(\norm{F_{n,\alpha}}_{\op}^3).
\end{equation*}
{Using \eqref{eq:supp-balanced-perturbation-orders} and the same
cross-entry identity proves \eqref{eq:balanced-local-risk}.}
\end{proof}

{The moment identities above show how attenuation changes each
retained-to-omitted coupling before these contributions are aggregated across
the rank cutoff.}

Aggregating across the rank cutoff gives
\begin{align}
 \Asig
 &=\sum_{j\leq d<k}
 \frac{\lambda_j\lambda_k s_{jk}}{\lambda_j-\lambda_k},
 &
 \Bsig
 &=\sum_{j\leq d<k}
 \frac{h_{jk}}{\lambda_j-\lambda_k},
 \label{eq:ABconstants}\\
 G(\alpha)
 &=\sum_{j\leq d<k}
 \frac{\lambda_j\lambda_k
 -4\alpha\lambda_j\lambda_k s_{jk}+\alpha^2h_{jk}}
 {\lambda_j-\lambda_k}.
 \label{eq:Galpha}
\end{align}
Consequently, $\mathcal K(\alpha)=G(\alpha)$,
$\mathsf{C}_{\bSigma}=2\Asig$, and $\mathsf{V}_{\bSigma}=\Bsig$.  These identities
connect the readable main-text tradeoff to the pairwise quantities used in
the proofs below.

\subsubsection{From pairwise moments to the risk curve}
\label{sec:supp-balanced-proof}

\begin{proof}[Proof of \Cref{thm:balanced-risk}]
Under balance,
\[
 \bT_{n,\alpha}=\bSigma-\alpha\tau I_p+E_{n,\alpha},
 \qquad
 E_{n,\alpha}=E_n-\alpha\diag(E_n),
 \qquad E_n=\bS_n-\bSigma.
\]
The scalar shift preserves all population eigenvectors and gaps.  Equal-rank
projectors have equal trace, so replacing $\bSigma$ by
$\bSigma-\alpha\tau I_p$ also leaves the clean excess risk unchanged.
{After removing this shift, apply \Cref{lem:two-matrix-risk-expansion} with $A=B=\bSigma$ and $F=E_{n,\alpha}$.  In the eigenbasis of $\bSigma$, its quadratic term is the
sum below.  Thus, on
$\norm{E_{n,\alpha}}_{\op}\leq\delta/4$,}
\begin{equation}\label{eq:supp-projector-expansion}
 \Rex(\widehat P_\alpha)
 =\sum_{j\leq d<k}
 \frac{(u_j^\trans E_{n,\alpha}u_k)^2}
 {\lambda_j-\lambda_k}
 +\operatorname{Rem}_{n,\alpha},
 \qquad
 \abs{\operatorname{Rem}_{n,\alpha}}\leq C_{\rm rem}\norm{E_{n,\alpha}}_{\op}^3,
\end{equation}
{Here $C_{\rm rem}$ depends only on the fixed spectrum and cutoff gap.
The formula is invariant to basis rotations inside a repeated-eigenvalue block,
because the corresponding squared cross-block entries share the same
denominator.}

Because $u_j^\trans E_{n,\alpha}u_k$ is the sample average of
$a_{jk}-\alpha b_{jk}$, the moment identities in
\eqref{eq:single-observation-moments} give
\begin{align}
 n\E(u_j^\trans E_{n,\alpha}u_k)^2
 &=\var(a_{jk}-\alpha b_{jk})
  =\lambda_j\lambda_k
 -4\alpha\lambda_j\lambda_k s_{jk}+\alpha^2h_{jk}.
 \label{eq:supp-entry-variance}
\end{align}

Equation \eqref{eq:supp-entry-variance} gives the expectation of the quadratic
term in \eqref{eq:supp-projector-expansion} as $G(\alpha)/n$.  The uniform
good-event and complement argument in the proof of \cref{thm:fixed-p} applies
unchanged because
$\sup_{0\leq\alpha\leq1}\norm{E_{n,\alpha}}_{\op}
\leq2\norm{E_n}_{\op}$.  Consequently,
\begin{equation*}
 \sup_{0\leq\alpha\leq1}
 \left|\E\Rex(\widehat P_\alpha)-\frac{G(\alpha)}n\right|
 =O(n^{-3/2}),
\end{equation*}
which proves \eqref{eq:balanced-uniform-remainder} after enlarging its constant for
finitely many initial sample sizes.  For a fixed $\alpha$ with
$\Delta_\alpha=G(0)-G(\alpha)>0$,
\begin{align*}
 n\{\E\Rex(\widehat P_0)-\E\Rex(\widehat P_\alpha)\}
 &\geq \Delta_\alpha-\frac{2C}{\sqrt n},
\end{align*}
which yields an eventual finite-sample improvement whenever the theoretical
risk curve is smaller than at PCA.  Summing \eqref{eq:supp-entry-variance} over
cross-boundary pairs gives \eqref{eq:Galpha}.  These perturbation and
concentration steps are
consistent with classical and nonasymptotic PCA analyses
\citep{anderson1963asymptotic,koltchinskii2017concentration,
reiss2020nonasymptotic}.

It remains to verify strict positivity and optimize the quadratic.  If every
$s_{jk}$ across the cutoff were zero, then every observed coordinate would
lie entirely in either the selected or rejected eigenspace.  Its variance
would consequently be at least $\lambda_d$ in the first case and at most
$\lambda_{d+1}$ in the second.  Both kinds of coordinates must occur, which
contradicts their common variance and the positive cutoff gap.  Hence
$\Asig>0$.  For a pair with $s_{jk}>0$, $w_{jk}\ne0$; positive definiteness
of $\bSigma\Had\bSigma$ then gives $h_{jk}>0$, so $\Bsig>0$.  Expanding
$G(\alpha)-G(0)$ proves \eqref{eq:risk-difference}.  This completes the proof
of \cref{thm:balanced-risk}. Differentiating the same strictly convex
quadratic and clipping its minimizer to the two closed ranges then gives
\eqref{eq:two-oracles}.
\end{proof}

\paragraph{The flat-spike benchmark.}
This standard spiked-covariance model gives a particularly transparent positive
case.  For the rank-one flat spike
$\bSigma=\sigma^2I_p+\theta uu^\trans$ with $u=p^{-1/2}\ones$, every residual
eigenvector $u_k$ satisfies
\begin{equation}\label{eq:flat-spike-terms}
 s_{1k}=\frac1p,
 \qquad
 h_{1k}=\frac{2}{p}
 \left(\sigma^4+\frac{2\sigma^2\theta}{p}\right),
 \qquad
 \frac{\mathsf C_{\bSigma}}{\mathsf V_{\bSigma}}
 =\frac{2\Asig}{\Bsig}
 =\frac{\sigma^2+\theta}{\sigma^2+2\theta/p}\geq1.
\end{equation}
The derivative $-4\Asig+2\alpha\Bsig$ is therefore nonpositive for every
$0\leq\alpha\leq1$, proving the flat-spike statement in the main text.

\section{Robustness and validation selection}
\label{sec:supp-extensions-practical}

{
This section extends the balanced-marginal risk theorem beyond Gaussian sampling and gives a formal guarantee for selecting attenuation by held-out reconstruction while retaining PCA as a candidate.
}

\subsection{Risk expansion beyond Gaussian sampling}
\label{sec:supp-robustness}

The preceding risk theory uses Gaussian fourth moments to obtain an explicit
sign.  The perturbation mechanism itself is more general: it depends on the
actual covariance between each rotation fluctuation and its diagonal
component.  The next result records this robustness after the Gaussian theory
has been completed.

For a distribution $\nu$ with covariance $\bSigma$, define
\begin{equation}\label{eq:moment-constants}
 \Cnu=\sum_{j\leq d<k}
 \frac{\Cov(a_{jk},b_{jk})}{\lambda_j-\lambda_k},
 \qquad
 \Vnu=\sum_{j\leq d<k}
 \frac{\var(b_{jk})}{\lambda_j-\lambda_k},
\end{equation}
where $a_{jk}$ and $b_{jk}$ are given in
\eqref{eq:single-observation-scores-main}.

\begin{theorem}[Moment extension of the attenuation risk]
\label{thm:moment-extension}
Let $\bx_1,\ldots,\bx_n$ be independent copies of a mean-zero random vector
with covariance $\bSigma$, $\E\norm{\bx}_2^8<\infty$, and balanced marginal
variances.  Keep $p,d$ fixed and assume a positive cutoff gap.  With
$a_{jk},b_{jk},\Cnu$, and $\Vnu$ as in
\eqref{eq:single-observation-scores-main} and \eqref{eq:moment-constants}, define
\begin{align}
 \Gnu(0)
 &=\sum_{j\leq d<k}
 \frac{\var(a_{jk})}{\lambda_j-\lambda_k},
 \nonumber\\
 \Gnu(\alpha)
 &=\Gnu(0)-2\alpha\Cnu+\alpha^2\Vnu.
 \label{eq:moment-risk-quadratic}
\end{align}
For every $0\leq\bar\alpha\leq1$, there are finite constants $K_\nu,N_\nu$
such that
\begin{equation}\label{eq:moment-uniform-remainder}
 \sup_{0\leq\alpha\leq\bar\alpha}
\left|n\E\Rex(\widehat P_\alpha)-\Gnu(\alpha)\right|
 \leq\frac{K_\nu}{\sqrt n},
 \qquad n\geq N_\nu.
\end{equation}
\end{theorem}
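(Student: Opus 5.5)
We follow the proof of \cref{thm:balanced-risk}, changing only the two places where Gaussianity was used: the exact second moments of the pairwise couplings, and the control of the cubic remainder and of the complement event. Under balance, $\bT_{n,\alpha}=\bSigma-\alpha\tau I_p+E_{n,\alpha}$ with $E_{n,\alpha}=E_n-\alpha\diag(E_n)$. The scalar shift changes neither spectral projectors nor the clean excess risk. On the event $\norm{E_n}_{\op}\leq\delta/8$ we have $\norm{E_{n,\alpha}}_{\op}\leq\delta/4$ for every $\alpha\in[0,1]$. On that event \cref{lem:two-matrix-risk-expansion}, applied with $A=B=\bSigma$, gives \eqref{eq:supp-projector-expansion}:
\[
 \Rex(\widehat P_\alpha)=\sum_{j\leq d<k}(\overline a_{jk}-\alpha\overline b_{jk})^2/(\lambda_j-\lambda_k)+\operatorname{Rem}_{n,\alpha},
\]
with $|\operatorname{Rem}_{n,\alpha}|\leq C_{\rm rem}\cdot 8\norm{E_n}_{\op}^3$ and $C_{\rm rem}$ independent of $\alpha$. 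This step is purely deterministic and needs no distributional input.

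The leading term comes from an exact variance identity. The summands $a_{jk}(\bx_i)-\alpha b_{jk}(\bx_i)$ are i.i.d.\ and centered. They have finite variance because $\E\norm{\bx}_2^4<\infty$. Hence
\[
 n\E(\overline a_{jk}-\alpha\overline b_{jk})^2=\var(a_{jk})-2\alpha\Cov(a_{jk},b_{jk})+\alpha^2\var(b_{jk}),
\]
and dividing by the gaps and summing gives exactly $\Gnu(\alpha)$. No sign information on $\Cnu$ is needed, since the theorem only claims the expansion. The full leading term, including its part on the bad event, has expectation $\Gnu(\alpha)/n$, so we only have to control the bad-event contributions and the remainder.

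The main obstacle is replacing the Gaussian concentration with moment bounds. We need $\E\norm{E_n}_{\op}^3=O(n^{-3/2})$ and $\mathbb P(\norm{E_n}_{\op}>\delta/8)$ small enough.
\begin{itemize}
\item \emph{Moments of $E_n$.} With $p$ fixed, $\norm{E_n}_{\op}\leq\norm{E_n}_{\F}$, and each entry of $E_n$ is an average of i.i.d.\ centered variables $x_\ell x_m-\Sigma_{\ell m}$. These have finite fourth moments because $\E\norm{\bx}_2^8<\infty$. Rosenthal's (or the Marcinkiewicz--Zygmund) inequality gives $\E\norm{E_n}_{\F}^4=O(n^{-2})$, so by Lyapunov $\E\norm{E_n}_{\op}^3=O(n^{-3/2})$.
\item \emph{Probability of the bad event.} Markov's inequality gives $\mathbb P(\norm{E_n}_{\op}>\delta/8)=O(n^{-2})$. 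Exponential smallness is not needed.
\item \emph{Clean risk on the bad event.} On the complement, $0\leq\Rex(\widehat P_\alpha)\leq\tr(\bSigma)$ deterministically. Its contribution is therefore $O(n^{-2})$.
\item \emph{Leading term on the bad event.} The leading quadratic term is at most a constant times $\norm{E_n}_{\F}^2$. Its contribution on the complement is bounded by Cauchy--Schwarz: $\{\E\norm{E_n}_{\F}^4\}^{1/2}\,\mathbb P(\cdot)^{1/2}=O(n^{-1})\cdot O(n^{-1})=O(n^{-2})$.
\end{itemize}
This is where the eighth-moment assumption enters, and all these bounds are uniform in $\alpha\in[0,\bar\alpha]$.

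Assembling, on $[0,\bar\alpha]$ we get $\sup_\alpha|\E\Rex(\widehat P_\alpha)-\Gnu(\alpha)/n|\leq K n^{-3/2}$ for $n\geq N_\nu$. Multiplying by $n$ gives \eqref{eq:moment-uniform-remainder}. The only delicate point is the cubic remainder: the Cauchy--Schwarz route needs $\E\norm{E_n}^6$, which would require twelfth moments. We avoid this by bounding the cubic term directly through $\E\norm{E_n}_{\op}^3\leq(\E\norm{E_n}_{\op}^4)^{3/4}$, which uses only the eighth moment of $\bx$.
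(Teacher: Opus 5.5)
Your proposal is correct and follows the paper's proof essentially step for step. The exact variance identity for the sample-averaged couplings gives the leading term $\mathsf{G}_{\nu}(\alpha)/n$, and a fixed-dimensional Rosenthal bound under the eighth-moment condition controls the cubic remainder on the perturbative event. On the complement, Markov's inequality, boundedness of the excess risk, and Cauchy--Schwarz for the leading quadratic give a contribution of order $n^{-2}$, uniformly in $\alpha$; your Lyapunov step from fourth to third moments just spells out the paper's claim for $r=3,4$.
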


\begin{proof}[Proof of \Cref{thm:moment-extension}]
For a retained--discarded pair, both quantities in
\eqref{eq:single-observation-scores-main} are centered and
\[
 u_j^\trans E_{n,\alpha}u_k
 =\frac1n\sum_{i=1}^n(a_{jk,i}-\alpha b_{jk,i}).
\]
Consequently,
\begin{equation}\label{eq:supp-general-entry-variance}
 n\E(u_j^\trans E_{n,\alpha}u_k)^2
 =\var(a_{jk})-2\alpha\Cov(a_{jk},b_{jk})
 +\alpha^2\var(b_{jk}).
\end{equation}
Summing \eqref{eq:supp-general-entry-variance} in the leading quadratic of
\eqref{eq:supp-projector-expansion} gives exactly
$\Gnu(\alpha)/n$.

The eighth-moment condition and fixed dimension imply
$\E\norm{E_n}_{\op}^r=O(n^{-r/2})$ for $r=3,4$.  This follows by applying a
fixed-dimensional Rosenthal inequality to the entries of
$\bx_i\bx_i^\trans-\bSigma$ and then using equivalence of matrix norms.  Since
$\norm{E_{n,\alpha}}_{\op}\leq2\norm{E_n}_{\op}$ uniformly on
$0\leq\alpha\leq1$, the cubic remainder in
\eqref{eq:supp-projector-expansion} contributes $O(n^{-3/2})$ on the
perturbative event.  Markov's inequality with the fourth moment gives
probability $O(n^{-2})$ to its complement.  Excess risk is bounded, and the
leading quadratic on that complement is controlled by
$K\norm{E_n}_{\op}^2$; Cauchy--Schwarz makes both complementary contributions
$O(n^{-2})$.  Multiplication by $n$ proves the uniform bound
\eqref{eq:moment-uniform-remainder}.
\end{proof}

{For an interval $[a,b]$, let
$\Pi_{[a,b]}(t)=\min\{b,\max\{a,t\}\}$ denote Euclidean projection, or
clipping, onto that interval.}  The quadratic also gives the practical consequences.  If $\Vnu=0$, every
$b_{jk}$ is almost surely zero, so Cauchy--Schwarz forces $\Cnu=0$ and
attenuation leaves the leading risk unchanged.  Otherwise, $\Cnu>0$ implies
$\Vnu>0$; the direct leading-risk oracle is
$\Pi_{[0,1]}(\Cnu/\Vnu)$, and the mask-path oracle is its limiting projection
onto $[0,1/2]$.  A fixed level improves the leading risk whenever
$0<\alpha<2\Cnu/\Vnu$, subject also to $\alpha<1/2$ on the mask path.  More
precisely, if
$\Delta_{\nu,\alpha}=\Gnu(0)-\Gnu(\alpha)>0$, then
\begin{equation}\label{eq:moment-eventual-threshold}
 n\geq N_\nu
 \quad\text{and}\quad
 n>\left(\frac{2K_\nu}{\Delta_{\nu,\alpha}}\right)^2
 \quad\Longrightarrow\quad
 \E\Rex(\widehat P_\alpha)<\E\Rex(\widehat P_0).
\end{equation}
\subsection{Validation selection with a PCA fallback}
\label{sec:supp-validation-fallback}

The attenuation path contains ordinary PCA at $\alpha=0$.  The next result
turns this nesting into a finite-sample safety statement for a simple
sample-splitting implementation.  Candidate projectors are fitted once on a
training sample and then selected on an independent validation sample; no
post-selection refitting is needed for the guarantee.

\begin{proposition}[Independent-validation fallback guarantee]
\label{prop:validation-fallback}
Let $\mathcal A\subset[0,1]$ be a finite set containing $0$, and condition on
rank-$d$ projectors $\{\widehat P_\alpha:\alpha\in\mathcal A\}$ fitted from a
training sample.  Let $Y_1,\ldots,Y_{n_v}$ be an independent validation sample
from $\mathcal N(0,\bSigma)$ and define
\begin{equation}\label{eq:validation-risk}
 \widehat{\cR}_v(P)=\frac1{n_v}\sum_{i=1}^{n_v}
 \norm{Y_i-PY_i}_2^2,
 \qquad
 \widehat\alpha_v\in\argmin_{\alpha\in\mathcal A}
 \widehat{\cR}_v(\widehat P_\alpha).
\end{equation}
For $0<\eta<1$, put $M=|\mathcal A|$, $t=\log(2M/\eta)$, and
\begin{equation}\label{eq:validation-epsilon}
 \varepsilon_v(t)=
 2\norm{\bSigma}_{\F}\sqrt{\frac{t}{n_v}}
 +2\norm{\bSigma}_{\op}\frac{t}{n_v}.
\end{equation}
With conditional probability at least $1-\eta$,
\begin{equation}\label{eq:validation-fallback}
 \cR(\widehat P_{\widehat\alpha_v})
 \leq\min_{\alpha\in\mathcal A}\cR(\widehat P_\alpha)
      +2\varepsilon_v(t)
 \leq\cR(\widehat P_0)+2\varepsilon_v(t).
\end{equation}
Thus the selected estimator approaches an exact PCA fallback as the validation
size grows, while retaining any improvement available elsewhere on the path.
\end{proposition}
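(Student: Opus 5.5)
Conditional on the training sample, the projectors $\widehat P_\alpha$, $\alpha\in\mathcal A$, are fixed, and $\widehat{\cR}_v(P)$ is an average of $n_v$ independent copies of $\norm{Y-PY}_2^2=Y^\trans(I_p-P)Y$. The argument is then a standard uniform-deviation bound over a finite class, combined with the usual comparison step for an empirical minimizer. Fix $P$ and write $Q=I_p-P$. Then $\E\,Y^\trans QY=\tr(Q\bSigma)=\cR(P)$ by \eqref{eq:clean-risk}. Also, $n_v\widehat{\cR}_v(P)=\mathbf g^\trans(I_{n_v}\otimes \bSigma^{1/2}Q\bSigma^{1/2})\mathbf g$, where $\mathbf g$ is a standard Gaussian vector in $\RR^{n_vp}$, so $\widehat{\cR}_v(P)$ is a Gaussian quadratic form.

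The first step applies the Laurent--Massart (equivalently Hanson--Wright with explicit constants) inequality to this form. Let $M_P=\bSigma^{1/2}Q\bSigma^{1/2}$, which is positive semidefinite. Its eigenvalues $\mu_i$ satisfy $\sum\mu_i^2=\norm{M_P}_{\F}^2\le\norm{\bSigma}_{\F}^2$, because $\norm{\bSigma^{1/2}Q\bSigma^{1/2}}_{\F}^2=\tr(Q\bSigma Q\bSigma)\le\tr(\bSigma^2)$, and $\max\mu_i\le\norm{\bSigma}_{\op}$. The block-diagonal matrix has Frobenius norm squared $n_v\norm{M_P}_{\F}^2$ and the same operator norm. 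Laurent--Massart gives, for each sign,
\[
 \mathbb P\Bigl(\pm n_v\{\widehat{\cR}_v(P)-\cR(P)\}\ge 2\sqrt{n_v}\norm{M_P}_{\F}\sqrt t+2\norm{M_P}_{\op}t\Bigr)\le e^{-t}.
\]
The lower tail actually has no linear term. Dividing by $n_v$ yields $|\widehat{\cR}_v(P)-\cR(P)|\le\varepsilon_v(t)$ with probability at least $1-2e^{-t}$, which matches \eqref{eq:validation-epsilon}.

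The second step is a union bound over the $M$ candidates. With $t=\log(2M/\eta)$ the total failure probability is $2Me^{-t}=\eta$. On the complementary event, let $\alpha^\circ$ minimize $\cR(\widehat P_\alpha)$ over $\mathcal A$. Then
\[
\cR(\widehat P_{\widehat\alpha_v})\le\widehat{\cR}_v(\widehat P_{\widehat\alpha_v})+\varepsilon_v\le\widehat{\cR}_v(\widehat P_{\alpha^\circ})+\varepsilon_v\le\cR(\widehat P_{\alpha^\circ})+2\varepsilon_v.
\]
The second inequality in \eqref{eq:validation-fallback} holds because $0\in\mathcal A$. Independence of the validation sample from the training sample, which makes the candidates fixed under the conditioning, is what justifies the whole argument.

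The only delicate point is matching the stated constants in $\varepsilon_v(t)$. This requires the one-sided Laurent--Massart form $2\norm{a}_2\sqrt t+2\norm{a}_\infty t$ applied to the weighted chi-square, together with the Frobenius bound $\norm{M_P}_{\F}\le\norm{\bSigma}_{\F}$. If that Frobenius comparison felt loose, I would verify it by writing $\tr(Q\bSigma Q\bSigma)=\norm{Q\bSigma Q}_{\F}^2$ and noting that compression by a projector does not increase the Frobenius norm.
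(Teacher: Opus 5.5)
Your proof is correct and follows the paper's argument essentially step for step. Both proofs apply Laurent--Massart to the block-diagonal Gaussian quadratic form built from $\bSigma^{1/2}(I_p-P)\bSigma^{1/2}$, compare its norms with those of $\bSigma$, take a union bound over the $M$ candidates with $t=\log(2M/\eta)$, and finish with the empirical-minimizer comparison chain. The only cosmetic difference is that the paper bounds both norms through the Loewner sandwich $0\preceq \bSigma^{1/2}(I_p-P)\bSigma^{1/2}\preceq\bSigma$, whereas you obtain the Frobenius bound from the identity $\tr(Q\bSigma Q\bSigma)=\norm{Q\bSigma Q}_{\F}^2$ and projector compression; both are valid.
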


\begin{proof}[Proof of \Cref{prop:validation-fallback}]
Conditional on the training sample, fix a candidate $P$ and write
$Y_i=\bSigma^{1/2}Z_i$ with $Z_i\sim\mathcal N(0,I_p)$.  Then
\begin{equation*}
 \norm{Y_i-PY_i}_2^2=Z_i^\trans B_PZ_i,
 \qquad
 B_P=\bSigma^{1/2}(I_p-P)\bSigma^{1/2}\succeq0,
\end{equation*}
and $\E(Z_i^\trans B_PZ_i)=\tr(B_P)=\cR(P)$.  The Gaussian quadratic-form
inequality of \citet{laurent2000adaptive}, applied to the block-diagonal
average of the $n_v$ forms, gives
\begin{equation*}
 \Pr\!\left(
 \left|\widehat{\cR}_v(P)-\cR(P)\right|>
 2\norm{B_P}_{\F}\sqrt{\frac{t}{n_v}}
 +2\norm{B_P}_{\op}\frac{t}{n_v}
 \right)\leq2e^{-t}.
\end{equation*}
Because $0\preceq I_p-P\preceq I_p$, we have
$0\preceq B_P\preceq\bSigma$.  Eigenvalue monotonicity under the Loewner
order therefore gives $\norm{B_P}_{\F}\leq\norm{\bSigma}_{\F}$ and
$\norm{B_P}_{\op}\leq\norm{\bSigma}_{\op}$.  A union bound over the $M$
candidates therefore makes all validation risks uniformly accurate to
$\varepsilon_v(t)$ with probability at least $1-\eta$.  On this event, if
$\alpha^*\in\argmin_{\alpha\in\mathcal A}\cR(\widehat P_\alpha)$, then
\begin{align*}
 \cR(\widehat P_{\widehat\alpha_v})
 &\leq\widehat{\cR}_v(\widehat P_{\widehat\alpha_v})
       +\varepsilon_v(t)
  \leq\widehat{\cR}_v(\widehat P_{\alpha^*})+\varepsilon_v(t)
 \leq\cR(\widehat P_{\alpha^*})+2\varepsilon_v(t).
\end{align*}
Since $0\in\mathcal A$, the final inequality in
\eqref{eq:validation-fallback} follows.
\end{proof}

{The result is stated for independent validation because this permits a
direct finite-sample guarantee.  The real-data studies use the data-efficient
analogue: all candidate methods share the same grouped cross-validation folds
within each fixed fitting budget, and the selected estimator is then refit on
that budget.}

\section{Alternative masking and reconstruction objectives}\label{sec:supp-objective-boundaries}

{This section clarifies how the full-output orthogonal-projector
objective differs from masked-target reconstruction and unconstrained linear
autoencoders.  These comparisons delimit the estimator studied in the main
text; they are not needed for its risk theory.}

\subsection{The general block masked-target objective}

{The canonical masked-target projector loss is}
\begin{equation}\label{eq:masked-target-objective}
  \mathcal L_{\rm MT}(P)
  =\frac1n\sum_{i=1}^n
  \E_W\norm{(I_p-W)(\bx_i-PW\bx_i)}_2^2,
  \qquad P\in\cP_d.
\end{equation}
{It evaluates reconstruction error only on coordinates hidden from the
encoder, whereas the full-output loss in \eqref{eq:mae-objective} evaluates
error on the entire clean vector.}

{The matched empirical comparison in
\cref{sec:supp-masked-target-comparison} includes both coordinate masks and
masks that hide contiguous coordinate groups.}  Let
$\mathcal G=\{G_1,\ldots,G_G\}$ be a partition of the coordinates.  For a
matrix $A$, let $\mathcal D_{\mathcal G}(A)$ retain its within-group diagonal
blocks and put
$\mathcal O_{\mathcal G}(A)=A-\mathcal D_{\mathcal G}(A)$.  A group mask has
the form $W=\operatorname{blockdiag}(w_1I_{G_1},\ldots,w_GI_{G_G})$, where the
$w_g$ are independent Bernoulli-$q$ variables.  Write $R=I-W$ and
$D_P=\mathcal D_{\mathcal G}(P)$.

The calculation used for \cref{prop:reduction} also gives the full-output
group-masked matrix
\begin{equation}\label{eq:group-full-output-score}
 \bT_{n,\alpha,\mathcal G}
 =\bS_n-\alpha\mathcal D_{\mathcal G}(\bS_n),
 \qquad \alpha=\frac{1-q}{2-q}.
\end{equation}
{Thus group masking attenuates within-group covariance blocks rather than
only coordinate variances.  The masked-target loss has a different reduction.}

\begin{proposition}[Exact symmetric masked-target objective]
\label{prop:masked-target-general}
For $m=1-q$ and every $P\in\cP_d$, the group version of
\eqref{eq:masked-target-objective} satisfies
\begin{align}
 \mathcal L_{\rm MT}(P)
 &=m\tr(\bS_n)+mq\left[
 -2\inner{\bS_n}{\mathcal O_{\mathcal G}(P)}
 +\inner{\bS_n}{\mathcal C_{q,\mathcal G}(P)}
 \right],
 \label{eq:masked-target-general-reduction}\\
 \mathcal C_{q,\mathcal G}(P)
 &=D_P-D_P^2
 +q\,\mathcal O_{\mathcal G}\{P-D_PP-PD_P\}.
 \label{eq:masked-target-general-correction}
\end{align}
\end{proposition}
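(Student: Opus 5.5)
The plan is to verify the identity one observation at a time and then average. For each $P\in\cP_d$ the loss $\mathcal L_{\rm MT}(P)$ is the sample average of $\E_W\norm{(I_p-W)(\bx-PW\bx)}_2^2$, and both sides of \eqref{eq:masked-target-general-reduction} are linear in $\bx\bx^\trans$. So it suffices to prove the single-observation identity with $\bS_n$ replaced by $\bx\bx^\trans$, exactly as in the proof of \Cref{prop:reduction}. Write $R=I_p-W$. This matrix is a block-diagonal $0/1$ projector with $R^2=R$, and it satisfies $RW=WR=0$ since $r_gw_g=(1-w_g)w_g=0$. Expanding the square then gives
\[
 \norm{R(\bx-PW\bx)}_2^2
 =\bx^\trans R\bx-2\bx^\trans RPW\bx+\bx^\trans WPRPW\bx,
\]
and each term can be written as $\inner{\bx\bx^\trans}{\cdot}$ of a matrix whose mask expectation we compute entrywise.

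The first two expectations are immediate. Since $\E R=mI_p$, the first term contributes $m\tr(\bx\bx^\trans)$. For the second, let $g(a)$ denote the group of coordinate $a$; the $(a,b)$ entry of $RPW$ is $r_{g(a)}P_{ab}w_{g(b)}$. Its expectation is $0$ when $g(a)=g(b)$ and $mq\,P_{ab}$ otherwise. Hence $\E(RPW)=mq\,\mathcal O_{\mathcal G}(P)$, which produces the term $-2mq\inner{\bx\bx^\trans}{\mathcal O_{\mathcal G}(P)}$.

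The main work, and the only real obstacle, is the quartic-in-masks term $\E(WPRPW)$. Its $(a,b)$ entry equals $\sum_c P_{ac}P_{cb}\,\E[w_{g(a)}r_{g(c)}w_{g(b)}]$, and I will split this sum by cases.
\begin{itemize}
\item If $g(c)\in\{g(a),g(b)\}$, the expectation vanishes because $wr=0$ within a group.
\item If $g(a)=g(b)=g\neq g(c)$, the expectation is $qm$. The surviving sum is $\sum_{c\notin G_g}P_{ac}P_{cb}=(P^2)_{ab}-(D_PD_P)_{ab}$. Using $P^2=P$ and the block-diagonal structure of $D_P$, these entries assemble into $mq(D_P-D_P^2)$.
\item If $g(a)\neq g(b)$, independence gives $q^2m$. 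The sum over $c\notin G_{g(a)}\cup G_{g(b)}$ equals $P_{ab}-(D_PP)_{ab}-(PD_P)_{ab}$. The correction terms subtract exactly the indices $c$ lying in the group of $a$ or of $b$, and these two groups are disjoint. Restricting to off-group entries gives $mq^2\,\mathcal O_{\mathcal G}\{P-D_PP-PD_P\}$.
\end{itemize}

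Adding the two cases yields $\E(WPRPW)=mq\,\mathcal C_{q,\mathcal G}(P)$ with $\mathcal C_{q,\mathcal G}$ as in \eqref{eq:masked-target-general-correction}. Collecting the three terms and averaging over $i$ then gives \eqref{eq:masked-target-general-reduction}. The care needed is purely bookkeeping: the entries where $a$ and $b$ lie in the same group must not be double counted, and within a block $(D_P^2)_{ab}$ must match the restricted sum $\sum_{c\in G_g}P_{ac}P_{cb}$.
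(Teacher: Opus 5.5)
Your proof is correct and takes the same approach as the paper. The paper expands the loss into the same three terms and obtains $\E(RPW)=mq\,\mathcal O_{\mathcal G}(P)$. It then evaluates $\E(WPRPW)$ blockwise: within-group blocks use $P^2=P$, and between-group blocks keep only intermediate groups distinct from both endpoints. Your entrywise case analysis is a more explicit version of that block calculation.
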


\begin{proof}[Proof of \Cref{prop:masked-target-general}]
Expand
\begin{equation}\label{eq:supp-mt-expand}
 \norm{R(x-PWx)}_2^2
 =x^\trans Rx-2x^\trans RPWx+x^\trans WPRPWx.
\end{equation}
For groups $g,h$, $\E\{(1-w_g)w_h\}$ is zero when $g=h$ and equals $mq$
otherwise.  Hence
\begin{equation}\label{eq:supp-mt-cross}
 \E(RPW)=mq\,\mathcal O_{\mathcal G}(P).
\end{equation}
For the last term, its $(g,g)$ block is
$mq(P_{gg}-P_{gg}^2)$ by $P^2=P$.  When $g\ne h$, only intermediate groups
different from both $g$ and $h$ survive, giving
\[
 \E(WPRPW)_{gh}
 =mq^2\{P-D_PP-PD_P\}_{gh}.
\]
Combining the diagonal and off-block cases yields
$\E(WPRPW)=mq\mathcal C_{q,\mathcal G}(P)$.  Averaging
\eqref{eq:supp-mt-expand} over the sample proves
\eqref{eq:masked-target-general-reduction}.
\end{proof}

{Unlike the full-output loss, the result contains nonlinear functions of
$P$ and is not generally a spectral estimator.  Nevertheless, it can be
compared locally with diagonal attenuation by considering the same
retained--omitted direction pairs used in \cref{sec:balanced-risk}.}

\subsection{Masked-target random slopes at the PCA target}
\label{sec:supp-masked-target-slopes}

{Fix a retained population direction $u_j$, $j\leq d$, and an omitted
direction $u_k$, $k>d$.  The rotation that replaces $u_j$ by a direction in
their span is}
\begin{equation}\label{eq:one-direction-path}
 u_{jk}(s)=\frac{u_j+s u_k}{\sqrt{1+s^2}},
 \qquad
 P_{jk}(s)=P_*-u_ju_j^\trans+u_{jk}(s)u_{jk}(s)^\trans.
\end{equation}
{Here $s=\tan(\theta)$ parameterizes the rotation angle.  Its exact clean
reconstruction cost is}
\begin{equation}\label{eq:one-direction-clean-risk}
 \Rex\{P_{jk}(s)\}
 =\frac{(\lambda_j-\lambda_k)s^2}{1+s^2}.
\end{equation}

{We specialize to coordinate masks with $0<q<1$, so $mq>0$, and remove
this positive factor and the constant term in
\eqref{eq:masked-target-general-reduction}.  Thus write}
\begin{equation}\label{eq:masked-target-normalized-loss}
 \Phi_q(P;S)
 =-2\inner{S}{\off(P)}+\inner{S}{\mathcal C_q(P)},
\end{equation}
{where $\mathcal C_q$ is
\eqref{eq:masked-target-general-correction} with singleton coordinate groups.
For a rank-$d$ projector $P$ and a symmetric matrix $S$, define the symmetric
matrix $\Gamma_q(P;S)$ entrywise by}
\begin{equation}\label{eq:masked-target-gradient}
 [\Gamma_q(P;S)]_{ab}
 =\begin{cases}
 \{-2+q(1-P_{aa}-P_{bb})\}S_{ab}, & a\ne b,\\[2pt]
 (1-2P_{aa})S_{aa}
 -2q\displaystyle\sum_{c\ne a}S_{ac}P_{ac}, & a=b.
 \end{cases}
\end{equation}

\begin{proposition}[Masked-target slopes at the PCA target]
\label{prop:masked-target-slopes}
{Let $P_*$ be the leading rank-$d$ projector of $\bSigma$, and let
$P_{jk}(s)$ be the rotation path in \eqref{eq:one-direction-path}.  For every
$j\leq d<k$ and every symmetric $S$,}
\begin{equation}\label{eq:masked-target-directional-slope}
 \left.\frac{d}{ds}\Phi_q\{P_{jk}(s);S\}\right|_{s=0}
 =2u_k^\trans\Gamma_q(P_*;S)u_j.
\end{equation}
{Consequently, $P_*$ is stationary for the population masked-target
objective if and only if}
\begin{equation}\label{eq:masked-target-stationarity}
 P_*^\perp\Gamma_q(P_*;\bSigma)P_*=0.
\end{equation}
{Suppose
$u_k^\trans\Gamma_q(P_*;\bSigma)u_j=0$ for a particular pair $(j,k)$ and
define the population profile curvature and centered single-observation slope
by}
\begin{align}
 \kappa_{jk,q}
 &=\left.\frac{d^2}{ds^2}
   \Phi_q\{P_{jk}(s);\bSigma\}\right|_{s=0},
 \label{eq:masked-target-profile-curvature}\\
 \psi_{jk,q}(x)
 &=2u_k^\trans
   \{\Gamma_q(P_*;xx^\trans)-\Gamma_q(P_*;\bSigma)\}u_j.
 \label{eq:masked-target-profile-score}
\end{align}
{If $\kappa_{jk,q}>0$ and $\widehat s_{jk,q}^{\rm MT}$ is the local
minimizer of the empirical profile converging to zero, then, under
\cref{ass:gaussian-sampling},}
\begin{align}
 \widehat s_{jk,q}^{\rm MT}
 &=-\frac{n^{-1}\sum_{i=1}^n\psi_{jk,q}(\bx_i)}{\kappa_{jk,q}}
   +O_{\mathbb P}(n^{-1}),
 \label{eq:masked-target-profile-minimizer}\\
 \sqrt n\,\widehat s_{jk,q}^{\rm MT}
 &\ \Longrightarrow\
 \mathcal N\!\left(0,
 \frac{\var\{\psi_{jk,q}(\bx)\}}{\kappa_{jk,q}^2}\right),
 \label{eq:masked-target-profile-limit}\\
 n\Rex\{P_{jk}(\widehat s_{jk,q}^{\rm MT})\}
 &\ \Longrightarrow\
 (\lambda_j-\lambda_k)
 \frac{\var\{\psi_{jk,q}(\bx)\}}{\kappa_{jk,q}^2}\,\chi_1^2.
 \label{eq:masked-target-profile-risk}
\end{align}
\end{proposition}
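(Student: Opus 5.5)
The argument has two parts: a derivative identity for the masked-target loss along the path $P_{jk}(s)$, followed by a one-dimensional M-estimation argument for the empirical profile.

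\emph{Directional slope.} Since $\Phi_q(P;S)$ is a polynomial in the entries of $P$ (at most quadratic, through $D_P^2$, $D_PP$ and $PD_P$), I will compute its Fréchet derivative $\dot\Phi_q(P;S)[\dot P]$ and write it as $\langle \Gamma_q(P;S),\dot P\rangle_{\F}$, with $\Gamma_q$ symmetrized.

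\begin{enumerate}
\item For off-diagonal entries $a\ne b$, the term $-2\langle S,\off(P)\rangle$ contributes $-2S_{ab}$.
\item The term $q\langle S,\off(P-D_PP-PD_P)\rangle$ contributes $q S_{ab}(1-P_{aa}-P_{bb})$ through the factors multiplying $\dot P_{ab}$.
\item The same term also contributes, through $\dot P_{aa}$, the quantity $-2q\sum_{c\ne a}S_{ac}P_{ac}$ to the diagonal.
\item The diagonal block $\langle S, D_P-D_P^2\rangle$ gives $(1-2P_{aa})S_{aa}$.
\end{enumerate}

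This reproduces \eqref{eq:masked-target-gradient}. Along the path, $\dot P_{jk}(0)=u_ku_j^\trans+u_ju_k^\trans$, so symmetry of $\Gamma_q$ gives $\langle\Gamma_q,\dot P\rangle=2u_k^\trans\Gamma_q(P_*;S)u_j$, which is \eqref{eq:masked-target-directional-slope}.

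\emph{Stationarity.} Tangent directions to $\cP_d$ at $P_*$ are exactly the symmetrized matrices $X+X^\trans$ with $X\in\mathbb X$, and these are spanned by the pair directions $u_ku_j^\trans+u_ju_k^\trans$. Hence stationarity is equivalent to all pairwise slopes vanishing, that is, to \eqref{eq:masked-target-stationarity}.

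\emph{Profile asymptotics.} Let $f_n(s)=\Phi_q\{P_{jk}(s);\bS_n\}$. The map $S\mapsto\Phi_q(P;S)$ is linear, so $f_n$ is a smooth function of $s$ whose coefficients are linear in $\bS_n$. Under the assumption $u_k^\trans\Gamma_q(P_*;\bSigma)u_j=0$, we get $f_n'(0)=n^{-1}\sum_i\psi_{jk,q}(\bx_i)=O_{\mathbb P}(n^{-1/2})$. Also $f_n''(0)=\kappa_{jk,q}+O_{\mathbb P}(n^{-1/2})$, and $f_n'''$ is bounded in probability on a neighborhood of $0$.

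Since $\kappa_{jk,q}>0$, the profile is strictly convex near zero with probability tending to one. This is the one step needing care, namely existence and consistency of the local minimizer. Given that, a Taylor expansion of the first-order condition $0=f_n'(\widehat s)=f_n'(0)+f_n''(0)\widehat s+O(\widehat s^2)$ yields $\widehat s=O_{\mathbb P}(n^{-1/2})$ and then \eqref{eq:masked-target-profile-minimizer}.

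The remaining limits follow by standard tools:
\begin{itemize}
\item The CLT for $\psi_{jk,q}$, which has mean zero and finite variance under Gaussian sampling, gives \eqref{eq:masked-target-profile-limit} by Slutsky.
\item The exact formula \eqref{eq:one-direction-clean-risk} gives $n\Rex=(\lambda_j-\lambda_k)n\widehat s^2\{1+o_{\mathbb P}(1)\}$.
\item The continuous mapping theorem then gives the scaled $\chi^2_1$ limit \eqref{eq:masked-target-profile-risk}.
\end{itemize}

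The main obstacle is the bookkeeping in the gradient. I will verify the diagonal entry of $\Gamma_q$ by differentiating $P_{aa}P_{ac}$-type terms, and check that the symmetrization factor is consistent with the $2$ in \eqref{eq:masked-target-directional-slope}.
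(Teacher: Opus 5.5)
Your proposal is correct and follows the paper's proof essentially step for step. You differentiate $\Phi_q$ in a symmetric direction to read off $\Gamma_q$, use $P_{jk}'(0)=u_ju_k^\trans+u_ku_j^\trans$ together with the fact that these pair directions span the tangent space to get stationarity, and then apply a second-order Taylor expansion of the empirical profile score followed by the CLT, Slutsky, and the exact identity $\Rex\{P_{jk}(s)\}=(\lambda_j-\lambda_k)s^2/(1+s^2)$. The existence and consistency step you flag is already supplied by the hypothesis that $\widehat s^{\rm MT}_{jk,q}$ is the local minimizer converging to zero, so you do not need to prove it.
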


\begin{proof}[Proof of \Cref{prop:masked-target-slopes}]
{Put $\Pi=\diag(P)$ and $\dot\Pi=\diag(H)$.  Differentiating
\eqref{eq:masked-target-normalized-loss} in an arbitrary symmetric direction
$H$ gives}
\begin{align*}
 D\Phi_q(P;S)[H]
 &=-2\inner{S}{\off(H)}
   +\inner{S}{\dot\Pi-\dot\Pi\Pi-\Pi\dot\Pi}\\
 &\quad
   +q\inner{\off(S)}{
   H-\dot\Pi P-\Pi H-H\Pi-P\dot\Pi}.
\end{align*}
{Collecting the coefficient of each entry of $H$ yields
$D\Phi_q(P;S)[H]=\inner{\Gamma_q(P;S)}{H}$ with
$\Gamma_q$ as in \eqref{eq:masked-target-gradient}.  Because
$P_{jk}'(0)=u_ju_k^\trans+u_ku_j^\trans$,
\eqref{eq:masked-target-directional-slope} follows.  Such matrices span the
tangent space of rank-$d$ projectors at $P_*$, proving
\eqref{eq:masked-target-stationarity}.}

{Under stationarity, the empirical derivative at zero is
$n^{-1}\sum_i\psi_{jk,q}(\bx_i)$.  A second-order Taylor expansion of the
profile score, Gaussian moment bounds, and $\kappa_{jk,q}>0$ give
\eqref{eq:masked-target-profile-minimizer}.  {The central limit theorem and
Slutsky's theorem then give \eqref{eq:masked-target-profile-limit}.  Finally,
the exact identity \eqref{eq:one-direction-clean-risk} and the continuous
mapping theorem give \eqref{eq:masked-target-profile-risk}.}}
\end{proof}

{This result exposes the distinction between the two objectives.  Under
balanced marginal variances, diagonal attenuation automatically has zero
population leakage and retains the PCA curvature.  The masked-target objective
still requires the separate stationarity condition
\eqref{eq:masked-target-stationarity} and positive profile curvature.  For the
joint rank-$d$ estimator, the scalar curvatures in
\eqref{eq:masked-target-profile-risk} are replaced by the Hessian of
$\Phi_q$ on the full tangent space; cross-direction Hessian terms need not
vanish.}

{A closed-form example shows that stationarity alone is insufficient.
Let $p=2$, $d=1$,
$u_1=u=(1,1)^\trans/\sqrt2$, $u_2=v=(1,-1)^\trans/\sqrt2$, and
$\bSigma=\lambda_1uu^\trans+\lambda_2vv^\trans$.  Along $P_{12}(s)$, the
single-observation slope is identically zero.  Substituting
$[P_{12}(s)]_{12}=(1-s^2)/[2(1+s^2)]$ into
\eqref{eq:masked-target-normalized-loss} gives}
\begin{equation}\label{eq:masked-target-p2-curvature}
 \kappa_{12,q}=2(\lambda_1-3\lambda_2).
\end{equation}
{Equivalently, the population loss is a convex quadratic in
$t=P_{12}$ with unconstrained minimizer
$t=(\lambda_1-\lambda_2)/(\lambda_1+\lambda_2)$ and feasible range
$[-1/2,1/2]$.  Thus the population PCA direction is the masked-target solution when
$\lambda_1/\lambda_2\geq3$ and is not even a local minimum when
$1<\lambda_1/\lambda_2<3$.  The familiar threshold is therefore a curvature
condition.  In dimensions above two, the population slope need not vanish.
When the stationarity and positive-curvature conditions hold, its empirical
fluctuation produces the usual $n^{-1/2}$ local displacement.}

\subsection{Linear reconstruction beyond the projector constraint}
\label{sec:supp-ordinary-ae}

The main estimator constrains the reconstruction map to be an orthogonal
rank-$d$ projector because the target is PCA.  This section removes that
constraint.  It gives an exact characterization for a symmetric-map
(weight-tied) linear autoencoder and for a fully asymmetric encoder--decoder
pair.  The latter is
the coordinate-mask specialization of the linear full-output model analyzed by
\citet{bisulco2025linearity}.  The calculation also makes precise the
direction-specific gain shrinkage studied for additive-noise autoencoders by
\citet{pretorius2018learning}.

{Because these maps need not be projectors, define their clean
reconstruction risk by}
\begin{equation}\label{eq:clean-map-risk}
 \cR_{\rm map}(A)
 =\E\norm{\bx-A\bx}_2^2
 =\tr\{(I_p-A)^\trans(I_p-A)\bSigma\}.
\end{equation}
{For an orthogonal projector $P$, this agrees with $\cR(P)$ in
\eqref{eq:clean-risk}.}

Write $D_n=\diag(\bS_n)$ and, for $0<q<1$, define
\begin{equation}\label{eq:ae-metric}
  V_{n,q}=q\bS_n+(1-q)D_n.
\end{equation}
If every sample coordinate variance is positive, then $V_{n,q}\succ0$ even
when $\bS_n$ is singular.  {An asymmetric rank-$d$ encoder--decoder has
end-to-end map $A=LR^\trans$, where $L,R\in\RR^{p\times d}$; conversely, every
matrix with rank at most $d$ admits such a factorization.  Weight tying sets
$L=R=B$ and hence restricts $A$ to $BB^\trans\succeq0$.}  For either class,
consider the same full-output masked-input loss as in the main text,
\begin{equation}\label{eq:ordinary-ae-loss}
  \mathcal L_q(A)
  =\frac1n\sum_{i=1}^n
    \E_W\norm{\bx_i-AW\bx_i}_2^2.
\end{equation}

\begin{theorem}[Full-output linear autoencoders]\label{thm:ordinary-linear-ae}
{Assume $0<q<1$ and $V_{n,q}\succ0$.}
\begin{enumerate}[(i)]
\item {\emph{Asymmetric map.} Put
$C_{n,q}=\bS_nV_{n,q}^{-1/2}$.  Every global minimizer over
$\rank(A)\leq d$ has the form}
\begin{equation}\label{eq:untied-ae-solution}
  A_{\rm asym}=C_dV_{n,q}^{-1/2},
  \qquad
  C_d\in\argmin_{\rank(C')\leq d}\norm{C'-C_{n,q}}_{\F}.
\end{equation}
{If the $d$-th and $(d+1)$-th singular values of $C_{n,q}$ are distinct,
this map is unique and its column space is the leading eigenspace of}
\begin{equation}\label{eq:untied-ae-range}
  \bS_nV_{n,q}^{-1}\bS_n.
\end{equation}

\item {\emph{Symmetric map.} Every tied map
$A=BB^\trans\succeq0$ with rank at most $d$ can be written as
$A=U\diag(a_1,\ldots,a_d)U^\trans$, where $U^\trans U=I_d$ and
$a_r\geq0$, allowing $a_r=0$ when the rank is below $d$.  Here $a_r$ is the
reconstruction multiplier along $u_r$:
$Au_r=a_ru_r$.  For fixed $U=(u_1,\ldots,u_d)$, minimizing the loss over
these multipliers gives}
\begin{equation}\label{eq:tied-ae-gains}
  a_r^*(U)=
  \frac{u_r^\trans\bS_nu_r}{u_r^\trans V_{n,q}u_r}.
\end{equation}
{Consequently, the global tied directions solve}
\begin{equation}\label{eq:tied-ae-directions}
  \max_{U^\trans U=I_d}
  \sum_{r=1}^d
  \frac{(u_r^\trans\bS_nu_r)^2}
       {u_r^\trans V_{n,q}u_r}.
\end{equation}
\end{enumerate}
\end{theorem}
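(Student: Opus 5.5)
The plan starts from the mask marginalization already used in the proof of \cref{prop:reduction}, now for a general (non-projector) map $A$. For a fixed $x$,
\[
\E_W\norm{x-AWx}_2^2=x^\trans x-2q\,x^\trans Ax+\tr\{A\,\E(Wxx^\trans W)A^\trans\},
\]
and $\E(Wxx^\trans W)=q^2xx^\trans+q(1-q)\diag(xx^\trans)$. Averaging over observations then gives
\[
\mathcal L_q(A)=\tr(\bS_n)-2q\tr(A\bS_n)+q\tr(AV_{n,q}A^\trans).
\]
Both parts of the theorem reduce to minimizing this quadratic in $A$ over the relevant class. Since $q>0$, I will drop the factor $q$ and the constant.

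For (i), I complete the square in the metric $V_{n,q}\succ0$. Writing $\tilde A=AV_{n,q}^{1/2}$, the objective becomes
\[
\norm{\tilde A-\bS_nV_{n,q}^{-1/2}}_{\F}^2-\norm{C_{n,q}}_{\F}^2 .
\]
Since $V_{n,q}^{1/2}$ is invertible, $\rank(\tilde A)=\rank(A)$, so the constraint $\rank(A)\le d$ is the same as $\rank(\tilde A)\le d$. Eckart--Young then gives the minimizers $\tilde A=C_d$, a truncated SVD of $C_{n,q}$, and hence $A_{\rm asym}=C_dV_{n,q}^{-1/2}$. The truncation is unique when $\sigma_d\neq\sigma_{d+1}$. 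The column space of $A$ equals that of $C_d$, which is the top-$d$ left singular space of $C_{n,q}$. That space is the leading eigenspace of $C_{n,q}C_{n,q}^\trans=\bS_nV_{n,q}^{-1}\bS_n$.

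For (ii), I write a tied map $A=BB^\trans\succeq0$ of rank at most $d$ by its spectral decomposition $U\diag(a)U^\trans$ with $a_r\ge0$. The cross term is then $\sum_r a_r u_r^\trans\bS_nu_r$. The quadratic term is
\[
\tr(AV_{n,q}A)=\sum_{r,s}a_ra_s(u_r^\trans V_{n,q}u_s)^2 .
\]
This is the main obstacle: the off-diagonal terms $r\neq s$ do not vanish for a general orthonormal $U$, so the objective is not separable in the gains.

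I would handle this with a two-step argument. First, for fixed $U$, I minimize over $a$; the off-diagonal couplings of $V_{n,q}$ enter here. Second, I argue that at the global optimum these couplings can be removed by rotating within $\operatorname{span}(U)$ so that $U^\trans V_{n,q}U$ is diagonal. Such a rotation leaves the span unchanged but alters $U^\trans\bS_nU$ and hence the gains, so the key claim to establish is that the optimal $A$ has eigenvectors with $u_r^\trans V_{n,q}u_s=0$ for $r\neq s$. I would derive this from the stationarity conditions in $A$ restricted to its range.

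Once the quadratic term is diagonal, each coordinate is the scalar problem
\[
-2a_r\,u_r^\trans\bS_nu_r+a_r^2\,u_r^\trans V_{n,q}u_r ,
\]
minimized at $a_r^*=u_r^\trans\bS_nu_r/u_r^\trans V_{n,q}u_r$, which is automatically $\ge0$ since $\bS_n\succeq0$. Substituting back gives the value $-\sum_r(u_r^\trans\bS_nu_r)^2/u_r^\trans V_{n,q}u_r$, so the global tied directions solve \eqref{eq:tied-ae-directions}.

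If the diagonalization argument fails, the fallback is to read \eqref{eq:tied-ae-gains} exactly as stated, with $U$ fixed and only the multipliers varying. In that reading the gain formula holds when $U^\trans V_{n,q}U$ is diagonal, and the direction problem is obtained by profiling.
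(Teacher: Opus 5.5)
Your part (i) is correct and is exactly the paper's argument: complete the square in the $V_{n,q}$ metric, use that right multiplication by $V_{n,q}^{1/2}$ preserves rank, apply Eckart--Young, and identify the range through the left singular vectors of $C_{n,q}$.

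Part (ii), however, breaks at the trace expansion. You write $\tr(AV_{n,q}A)=\sum_{r,s}a_ra_s(u_r^\trans V_{n,q}u_s)^2$. With $A=\sum_{r}a_ru_ru_r^\trans$ and $U^\trans U=I_d$, the correct expansion is
\[
\tr(AV_{n,q}A)=\sum_{r,s}a_ra_s\,(u_r^\trans V_{n,q}u_s)(u_s^\trans u_r)=\sum_{r=1}^d a_r^2\,u_r^\trans V_{n,q}u_r .
\]
Equivalently, $A^\trans A=U\diag(a_1^2,\ldots,a_d^2)U^\trans$, so $\tr(AV_{n,q}A^\trans)=\tr\{\diag(a_1^2,\ldots,a_d^2)\,U^\trans V_{n,q}U\}$ involves only the diagonal of $U^\trans V_{n,q}U$. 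The sum you wrote is actually $\tr(AV_{n,q}AV_{n,q})$. It is also inconsistent with the scalar problem you use afterward, since its diagonal terms are $a_r^2(u_r^\trans V_{n,q}u_r)^2$. So the loss is separable in the multipliers for every orthonormal $U$, and the ``main obstacle'' comes from this slip.

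As written, the proposal therefore does not prove (ii). Your key claim is that the optimal eigenvectors satisfy $u_r^\trans V_{n,q}u_s=0$ for $r\neq s$. You never establish it, and stationarity does not supply it. Differentiating $\sum_r(u_r^\trans\bS_nu_r)^2/(u_r^\trans V_{n,q}u_r)$ along a rotation in the $(u_r,u_s)$ plane gives only
\[
2(a_r-a_s)\,u_r^\trans\bS_nu_s=(a_r^2-a_s^2)\,u_r^\trans V_{n,q}u_s .
\]
Hence, for $a_r\neq a_s$, your claim would also require $u_r^\trans\bS_nu_s=u_r^\trans D_nu_s=0$. There is no reason to expect that when $D_n$ is not a multiple of the identity.

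Your fallback is also insufficient. It gives the gain formula only for $U$ with $U^\trans V_{n,q}U$ diagonal. The statement asserts \eqref{eq:tied-ae-gains} for every fixed $U$, and the restricted version does not justify profiling over all orthonormal $U$ to reach \eqref{eq:tied-ae-directions}.

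The repair is to replace your expansion with the correct identity above and drop the diagonalization step. Your remaining steps then give (ii) exactly as in the paper's proof: the scalar minimization, your observation that $a_r^*\geq0$ so the constraint $a_r\geq0$ is inactive (a point the paper leaves implicit), and the profiling step.
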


{In general, these solutions differ from both ordinary PCA and the
attenuated spectral estimator $P_{\bS_n-\alpha D_n}$.}

\paragraph{Balanced population specialization.}
{Suppose $\diag(\bSigma)=\tau I_p$ and
$\bSigma=\sum_j\lambda_ju_ju_j^\trans$, with
$\lambda_d>\lambda_{d+1}$.  The asymmetric population map is}
\begin{equation}\label{eq:balanced-ae-population-map}
  A_{q,{\rm asym}}^{\rm pop}
  =\sum_{j=1}^d
  \frac{\lambda_j}{q\lambda_j+(1-q)\tau}
  u_ju_j^\trans.
\end{equation}
{Its range is the PCA subspace, but its clean reconstruction risk relative
to the PCA projector is}
\begin{equation}\label{eq:balanced-ae-population-bias}
  \cR_{\rm map}(A_{q,{\rm asym}}^{\rm pop})-\cR(P_*)
  =\sum_{j=1}^d\lambda_j
  \left\{
  \frac{(1-q)(\tau-\lambda_j)}
       {q\lambda_j+(1-q)\tau}
  \right\}^2.
\end{equation}
{For $q<1$, this excess is strictly positive whenever some retained
eigenvalue differs from $\tau$.  Thus the asymmetric autoencoder learns the
correct subspace, while its raw reconstruction map is generally not the
projector $P_*$.}

\begin{proof}[{Proof of \Cref{thm:ordinary-linear-ae} and the balanced
population specialization}]
For an arbitrary $A$, expansion of \eqref{eq:ordinary-ae-loss} gives
\begin{align}
 \mathcal L_q(A)
 &=\tr(\bS_n)-2q\tr(A\bS_n)
   +q\tr(AV_{n,q}A^\trans)\nonumber\\
 &=\tr(\bS_n)-q\tr(\bS_nV_{n,q}^{-1}\bS_n)
   +q\norm{AV_{n,q}^{1/2}-\bS_nV_{n,q}^{-1/2}}_{\F}^2.
 \label{eq:ordinary-ae-square}
\end{align}
The first equality uses
$\E(W\bS_nW)=q^2\bS_n+q(1-q)D_n=qV_{n,q}$.
For the second, expand the squared Frobenius norm and use cyclicity of trace:
its cross term is
$-2q\tr\{AV_{n,q}^{1/2}V_{n,q}^{-1/2}\bS_n\}
=-2q\tr(A\bS_n)$.

Right multiplication by the invertible $V_{n,q}^{1/2}$ preserves rank.  Thus
minimizing \eqref{eq:ordinary-ae-square} over $\rank(A)\leq d$ is exactly the
best rank-at-most-$d$ approximation of
$C_{n,q}=\bS_nV_{n,q}^{-1/2}$.  The Eckart--Young theorem gives
\eqref{eq:untied-ae-solution}.  The left singular vectors of $C_{n,q}$ are the
eigenvectors of
$C_{n,q}C_{n,q}^\trans=\bS_nV_{n,q}^{-1}\bS_n$, proving
\eqref{eq:untied-ae-range}.

{With a singular-value gap, \eqref{eq:untied-ae-solution} yields}
\begin{equation}\label{eq:untied-ae-projector-form}
  A_{\rm asym}
  =P_{\bS_nV_{n,q}^{-1}\bS_n}\bS_nV_{n,q}^{-1}.
\end{equation}

For a tied map, substitute
$A=U\diag(a_1,\ldots,a_d)U^\trans$ into the first line of
\eqref{eq:ordinary-ae-square}.  The part depending on $A$, divided by $q$, is
\begin{equation}\label{eq:tied-ae-fixed-u-loss}
 -2\sum_{r=1}^d a_r u_r^\trans\bS_nu_r
 +\sum_{r=1}^d a_r^2u_r^\trans V_{n,q}u_r.
\end{equation}
There are no cross terms in the final sum: the trace selects only the diagonal
entries after the two diagonal coefficient matrices are applied.  Each scalar quadratic in
\eqref{eq:tied-ae-fixed-u-loss} is strictly convex and has minimizer
\eqref{eq:tied-ae-gains}.  Substitution subtracts
$(u_r^\trans\bS_nu_r)^2/(u_r^\trans V_{n,q}u_r)$ from the loss, yielding
\eqref{eq:tied-ae-directions}.

{Under population balance,
$V_q=q\bSigma+(1-q)\tau I_p$ shares every eigenvector with $\bSigma$.
For an eigenvalue $\lambda>0$, the squared singular value that orders the
asymmetric solution is}
\begin{equation}\label{eq:balanced-ae-ordering}
  f_q(\lambda)=\frac{\lambda^2}{q\lambda+(1-q)\tau}.
\end{equation}
{Its derivative is}
\[
 f_q'(\lambda)=
 \frac{\lambda\{q\lambda+2(1-q)\tau\}}
      {\{q\lambda+(1-q)\tau\}^2}>0,
\]
{so the asymmetric solution selects $u_1,\ldots,u_d$.
Equation~\eqref{eq:untied-ae-solution} gives its multiplier
$\lambda_j/\{q\lambda_j+(1-q)\tau\}$, proving
\eqref{eq:balanced-ae-population-map}.  Finally,}
\[
 \cR_{\rm map}(A)-\cR(P_*)
 =\sum_{j=1}^d\lambda_j(1-a_j)^2
\]
{for any map diagonal in the population eigenbasis that acts as zero on
the orthogonal complement of the leading subspace.  Substituting the displayed
multiplier proves \eqref{eq:balanced-ae-population-bias} and its strictness
statement.}

\end{proof}

\subsection{Additional related context}\label{sec:supp-related}

{Input corruptions can induce coordinate-dependent regularization
\citep{wager2013dropout}.  For linear denoising autoencoders,
\citet{pretorius2018learning} study direction-specific reconstruction gains
under additive isotropic noise, while \citet{kong2023understanding} relate the
mask ratio and patch size of a hidden-target objective to identifiable levels
of a hierarchical latent model.  Missing-data PCA instead observes physically
reduced vectors and estimates an unobserved signal covariance
\citep{dobriban2016pca}.  These settings motivate nearby uses of corruption or
diagonal operators, but they differ from the complete-vector PCA target and
clean reconstruction risk studied here.  The exact objective and feasible-set
boundaries needed for that distinction are given in
\cref{sec:supp-masked-target-slopes,sec:supp-ordinary-ae}.}

\section{Additional numerical and real-data studies}
\label{sec:supp-numerics}

{The main text reports Gaussian experiments for preserved and moving
population targets.  This section adds a non-Gaussian stress test and gives the
complete protocol and cellwise results for the fixed fitting-sample-budget
real-data study.}

\subsection{Robustness beyond Gaussian observations}

{The non-Gaussian audit uses the same balanced $p=16,d=3$ covariance as
the general Gaussian experiment.  Before multiplication by the covariance
square root, the latent coordinates are either Gaussian, standardized Student
$t_{10}$, or standardized centered exponential.  All three constructions have
covariance $\bSigma$ and finite eighth moments.  A separate pilot of 150,000
observations per distribution estimates the constants in
\eqref{eq:moment-risk-quadratic}; each sample size
$n\in\{64,256,1024\}$ then uses 5,000 fresh common-sample replicates, with all
attenuation strengths evaluated on the same observations.  At $n=1024$,
the largest absolute difference between the simulated scaled risk and the
moment prediction across all displayed distributions and attenuation levels
is $1.02$.  The sign and curvature predicted by
\cref{thm:moment-extension} therefore persist under both heavy-tailed and
skewed observations in this controlled design.}

{\Cref{fig:non-gaussian-moment} displays this agreement between the
simulated curves and the fourth-moment prediction across all three latent
distributions.}

\begin{figure}[t!]
  \centering
  \includegraphics[width=0.92\linewidth]
  {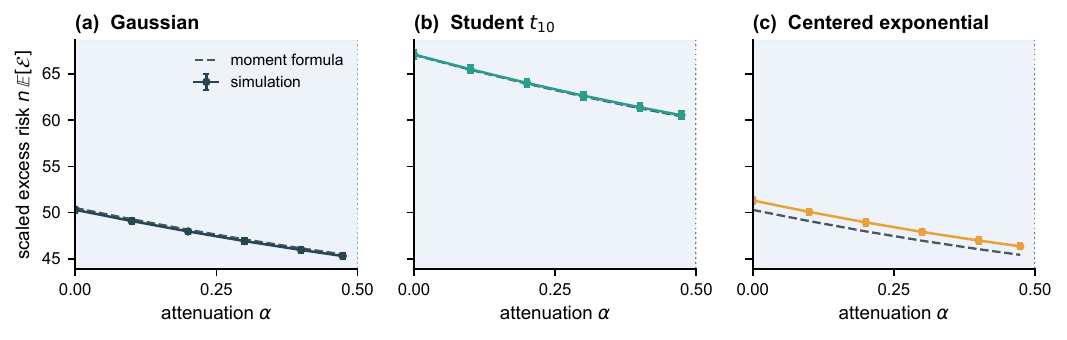}
  \caption{{The moment risk formula beyond Gaussian observations.  Solid lines
  and intervals are Monte Carlo estimates at $n=1024$; dashed lines are the
  corresponding fourth-moment predictions.  The covariance and target rank
  are common across panels, and shading marks the mask-feasible interval.}}
  \label{fig:non-gaussian-moment}
\end{figure}

\subsection{Fixed-budget real-data comparisons}
\label{sec:supp-real-data}
\label{sec:supp-masked-target-comparison}

\paragraph{Implementation details.}
{The data, ranks, patch construction, block definitions, and fixed
fitting-sample-budget protocol are given in \cref{sec:experiments}; here we
record the additional details needed to reproduce the comparison.  Every
covariance matrix is centered within its fitting fold, and no coordinate
rescaling is applied except within the correlation-PCA baseline.  When a
validation curve has multiple minimizers, we choose the smallest attenuation
strength or mask rate.  For Optdigits, each of the ten patch-location seeds
{contributes a mean over 100 training subsamples}, and intervals use the standard
error of these ten seed-level means.  For Phoneme and HAR, intervals use the
standard error across 100 training subsamples, conditional on the observed
data and official split.  Within every subsample, all methods use the same
observations and validation folds.}

{HeteroPCA starts from the hollowed sample covariance.  At each iteration
it replaces the diagonal by that of the best rank-$d$ truncated-SVD
approximation while retaining the original sample off-diagonal; it stops when
the relative change of the imputed diagonal is at most $10^{-7}$ or after 200
iterations \citep{zhang2022heteroskedastic}.  Of its {2,400} fits, 133 reach the
iteration cap.  Each masked-target projector is approximately
optimized on the Grassmann manifold from both the PCA solution and the
corresponding full-output block-attenuation solution.  Optimization stops when
the relative tangent-gradient norm is at most $10^{-7}$ or after 800
iterations.  Among the {1,163} coordinate-masked fits that do not select the PCA
fallback, 86 reach this cap; none of the {175} nonfallback block-masked fits does.
We therefore report masked-target training as a two-start optimization rather
than a certified global optimum.}

\begin{table}[t!]
\centering
\small
\setlength{\tabcolsep}{5pt}
\caption{{Fixed fitting-sample-budget results for diagonal attenuation.  Entries are the mean percentage of PCA test excess reconstruction risk removed, with 95\% intervals in brackets.  Optdigits intervals treat the ten patch-location seeds as independent clusters, with each seed-level mean averaging 100 training subsamples; the other intervals use 100 training subsamples.}}
\label{tab:attenuation-real-data}
{
\begin{tabular}{llrr}
\toprule
Data & $N$ & Mask-derived DA & Direct DA \\
\midrule
Optdigits & $64$ & $4.32$ $[4.08,4.57]$ & $6.74$ $[6.30,7.17]$ \\
Optdigits & $128$ & $4.18$ $[3.85,4.51]$ & $6.40$ $[5.80,7.00]$ \\
\addlinespace[2pt]
Phoneme & $64$ & $0.84$ $[0.65,1.03]$ & $1.09$ $[0.78,1.40]$ \\
Phoneme & $128$ & $0.72$ $[0.51,0.93]$ & $0.70$ $[0.36,1.03]$ \\
\addlinespace[2pt]
HAR & $64$ & $0.65$ $[0.45,0.85]$ & $0.76$ $[0.48,1.05]$ \\
HAR & $128$ & $0.93$ $[0.78,1.07]$ & $1.20$ $[0.97,1.44]$ \\
\bottomrule
\end{tabular}}
\end{table}

\begin{table}[t!]
\centering
\scriptsize
\setlength{\tabcolsep}{2.5pt}
\caption{{Comparison methods across the three real-data studies.  Each entry is the mean percentage of PCA test excess reconstruction risk removed, with its 95\% interval in brackets.  Optdigits intervals cluster over ten patch-location seeds, each averaging 100 training subsamples; the remaining intervals use 100 training subsamples.  MT denotes masked-target training.}}
\label{tab:neighboring-real-data}
{
\begin{tabular}{llrrrr}
\toprule
Data & $N$ & Correlation PCA & HeteroPCA & Coordinate MT & Block MT \\
\midrule
Optdigits & $64$ & $5.84$ $[5.34,6.33]$ & $-25.79$ $[-27.60,-23.99]$ & $-1.63$ $[-2.01,-1.25]$ & $-3.58$ $[-4.55,-2.61]$ \\
Optdigits & $128$ & $5.41$ $[4.99,5.82]$ & $-37.36$ $[-41.93,-32.79]$ & $-1.24$ $[-1.80,-0.68]$ & $-1.27$ $[-1.98,-0.56]$ \\
\addlinespace[2pt]
Phoneme & $64$ & $-0.14$ $[-0.70,0.42]$ & $0.04$ $[-0.53,0.60]$ & $0.93$ $[0.57,1.28]$ & $0.00$ $[0.00,0.00]$ \\
Phoneme & $128$ & $-0.52$ $[-1.06,0.01]$ & $0.53$ $[0.42,0.64]$ & $0.28$ $[-0.10,0.66]$ & $0.00$ $[0.00,0.00]$ \\
\addlinespace[2pt]
HAR & $64$ & $0.36$ $[0.06,0.66]$ & $-13.94$ $[-15.46,-12.43]$ & $0.56$ $[0.28,0.83]$ & $0.00$ $[0.00,0.00]$ \\
HAR & $128$ & $1.01$ $[0.74,1.28]$ & $-4.87$ $[-6.23,-3.50]$ & $0.93$ $[0.64,1.23]$ & $0.00$ $[0.00,0.00]$ \\
\bottomrule
\end{tabular}}
\end{table}

\begin{table}[t!]
\centering
\small
\setlength{\tabcolsep}{3.5pt}
\caption{{Mean cross-validation-selected strengths under fitting-sample budget $N$.  DA denotes diagonal attenuation and MT denotes masked-target training.  Parentheses give the percentage of fits selecting the explicit PCA fallback.  DA columns report $\alpha$; MT columns report the mask rate $m$.}}
\label{tab:real-data-selected-strengths}
{
\begin{tabular}{@{}llcccc@{}}
\toprule
Data & $N$ & Mask DA $\alpha$ & Direct DA $\alpha$ & Coordinate MT $m$ & Block MT $m$ \\
\midrule
Optdigits & $64$ & $0.474$ $(1.7\%)$ & $0.927$ $(1.7\%)$ & $0.432$ $(51.8\%)$ & $0.121$ $(86.6\%)$ \\
Optdigits & $128$ & $0.479$ $(1.1\%)$ & $0.931$ $(1.1\%)$ & $0.323$ $(64.1\%)$ & $0.037$ $(95.9\%)$ \\
\addlinespace[2pt]
Phoneme & $64$ & $0.432$ $(7.0\%)$ & $0.800$ $(7.0\%)$ & $0.589$ $(18.0\%)$ & $0.000$ $(100.0\%)$ \\
Phoneme & $128$ & $0.434$ $(7.0\%)$ & $0.764$ $(7.0\%)$ & $0.479$ $(15.0\%)$ & $0.000$ $(100.0\%)$ \\
\addlinespace[2pt]
HAR & $64$ & $0.351$ $(22.0\%)$ & $0.615$ $(18.0\%)$ & $0.370$ $(27.0\%)$ & $0.000$ $(100.0\%)$ \\
HAR & $128$ & $0.418$ $(9.0\%)$ & $0.756$ $(8.0\%)$ & $0.563$ $(18.0\%)$ & $0.000$ $(100.0\%)$ \\
\bottomrule
\end{tabular}}
\end{table}

{\Cref{tab:attenuation-real-data} gives the complete attenuation
results.  Both paths have positive mean gains with intervals above zero in all
six cells.  \Cref{tab:neighboring-real-data} reports the four comparison
methods on all three data sets.  Correlation PCA is competitive on Optdigits
and HAR but does not improve Phoneme.  Coordinate masked-target training is
positive in three ordered-signal cells but not on the image patches, while the
block-masked family usually selects its explicit PCA fallback.  Thus none of
these alternatives reproduces the uniform improvement of diagonal
attenuation.}

{\Cref{tab:real-data-selected-strengths} shows how the validation rule
uses the path.  Across the three data sets, mean selected strengths are
$0.351$--$0.479$ on the mask-derived path and {$0.615$--$0.931$} on the full
direct path.  The direct search selects strengths beyond $1/2$ on
{$63\%$--$95\%$} of fits, while retaining PCA as a candidate in every search.}

\end{document}